\documentclass{article}

\PassOptionsToPackage{numbers,compress}{natbib}

\usepackage[preprint]{format}

\usepackage{verbatim}

\usepackage{comment}

\usepackage[utf8]{inputenc} %
\usepackage[T1]{fontenc}    %

\usepackage[colorlinks=true,
            citecolor=blue,
            linkcolor=red,
            anchorcolor=green]{hyperref}

\usepackage{url}            %
\usepackage{booktabs}       %
\usepackage{amsfonts}       %
\usepackage{nicefrac}       %
\usepackage{microtype}      %
\usepackage{xcolor}         %

\usepackage{xcolor}

\usepackage{booktabs}

\usepackage{comment}
\usepackage{booktabs}
\usepackage{multirow}
\usepackage{adjustbox}
\usepackage[table]{xcolor}
\usepackage{colortbl}
\usepackage{url}
\usepackage{xspace}
\usepackage{xcolor}
\usepackage{colortbl}

\usepackage[ruled,vlined]{algorithm2e}

\usepackage{multirow}
\usepackage{balance}
\usepackage{subfigure}
\usepackage{graphicx}
\usepackage{mdframed}
\usepackage{mathtools}
\usepackage{enumitem}
\usepackage[noabbrev,capitalise]{cleveref}

\usepackage{booktabs}
\usepackage{caption}   %
\usepackage{float}    %

\crefname{appendix}{Appendix}{appendices}
\Crefname{appendix}{Appendix}{Appendices}
\crefname{equation}{Eq.}{Eqs.}

\Crefname{equation}{Eq.}{Eqs.}

\usepackage{amsthm} 
\usepackage{amsthm}
\newtheorem{proposition}{Proposition}
\usepackage{amsmath}
\usepackage{amsthm}
\usepackage{bm}
\usepackage{ragged2e}
\usepackage{colortbl}
\usepackage{booktabs}
\usepackage{colortbl}
\usepackage{xcolor}
\usepackage{amsmath, amssymb}
\usepackage{tcolorbox}
\usepackage{pifont}  %

\usepackage{xcolor} 
\usepackage{booktabs}
\usepackage{xcolor, booktabs, tcolorbox, amsmath, enumitem}
\tcbuselibrary{skins, breakable}

\usepackage{xcolor, booktabs, tcolorbox, amsmath, enumitem}
\tcbuselibrary{skins, breakable}

\usepackage{xcolor}
\usepackage{setspace}
\usepackage{listings}

\usepackage{fontawesome5}

\definecolor{linkpink}{HTML}{E83E8C}

\newcommand{\ourmodel}{SOD\xspace}

\newcommand{\ie}{\emph{i.e.,}\xspace}
\newcommand{\eg}{\emph{e.g.,}\xspace}

\title{\ourmodel: Step-wise On-policy Distillation for \\
 Small Language Model Agents}

\author{%
\begin{minipage}{\textwidth}
\centering
\textbf{Qiyong Zhong}$^{1}$\thanks{Equal contribution.}  \ \thanks{This work was done during an internship at Tencent.} \quad
\textbf{Mao Zheng}$^{2}$\footnotemark[1] \quad
\textbf{Mingyang Song}$^{2}$\footnotemark[1] \quad
\textbf{Xin Lin}$^{1}$\footnotemark[1] \quad
\textbf{Jie Sun}$^{3}$ \quad \\[0.35em]
\textbf{Yiqi Zhao}$^{2}$ \quad
\textbf{Xiaodi Wang}$^{2}$ \quad
\textbf{Houcheng Jiang}$^{3}$\thanks{Corresponding authors.} \quad
\textbf{Xiang Wang}$^{3}$ \quad
\textbf{Junfeng Fang}$^{4}$\footnotemark[3] \\
\vspace{0.8em}
{\normalfont
$^{1}$Zhejiang University \quad
$^{2}$Large Language Model Department, Tencent \\
$^{3}$University of Science and Technology of China \quad
$^{4}$National University of Singapore \\
}
\vspace{0.8em}
{\normalfont\small
\texttt{\{youngzhong,linxin2\}@zju.edu.cn} ;
\texttt{\{moonzheng,nickmysong\}@tencent.com} \\
\texttt{\{sunjie2019,jianghc\}@mail.ustc.edu.cn} ;
\texttt{\{yiqizhao02,wxd13305645488,xiangwang1223\}@gmail.com; fangjf@nus.edu.sg}
\\[0.5em]
\href{https://github.com/YoungZ365/SOD}{\textcolor{black}{\faGithub}\;\textcolor{linkpink}{\texttt{github.com/YoungZ365/SOD}}}
\\[0.25em]
\href{https://huggingface.co/collections/youngzhong/sod}{\raisebox{-0.18em}{\includegraphics[height=1.15em]{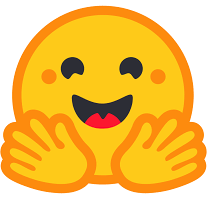}}\;\textcolor{linkpink}{\texttt{huggingface.co/collections/youngzhong/sod}}}
}
\end{minipage}
}

\usepackage{fancyhdr}
\usepackage{graphicx}
\usepackage{datetime}  %

\fancypagestyle{firstpagestyle}{%
  \fancyhf{}  
  \fancyhead[L]{\includegraphics[height=0.66cm]{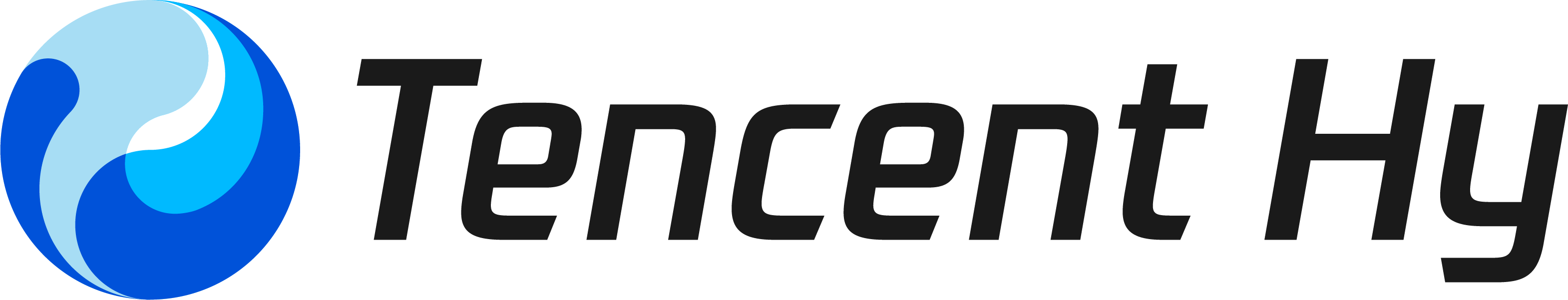}}
  \fancyhead[R]{\textcolor{gray}{\fontsize{11pt}{10pt}\selectfont May 8, 2026}}

}

\begin{document}

\maketitle
\thispagestyle{firstpagestyle} 
\vspace{-1.95em}
\begin{abstract}
Tool-integrated reasoning (TIR) is difficult to scale to small language models due to instability in long-horizon tool interactions and limited model capacity. While reinforcement learning methods like group relative policy optimization provide only sparse outcome-level rewards. Recently, on-policy distillation (OPD) has gained popularity by supplying dense token-level supervision from a teacher on student-generated trajectories. However, our experiments indicate that applying OPD to TIR leads to a critical failure mode: erroneous tool calls tend to cascade across subsequent reasoning steps, progressively amplifying student-teacher divergence and rendering the teacher's token-level supervision increasingly unreliable.
To address this, we propose \ourmodel, a step-wise on-policy distillation framework for small language model agents, which adaptively reweights distillation strength at each step based on step-level divergence. Therefore, \ourmodel can attenuate potentially misleading teacher signals in high-divergence regions while preserving dense guidance in well-aligned states. Experiments on challenging math, science, and code benchmarks show that \ourmodel achieves up to \textbf{20.86\%} improvement over the second-best baseline. \textbf{Notably, our 0.6B student achieves 26.13\% on average@32 at AIME 2025}, demonstrating effective transfer of agentic reasoning to small models.

\end{abstract}

\section{Introduction}\label{sec:introduction}

\begin{figure*}[t]
\centering
\includegraphics[width=1.0\linewidth]{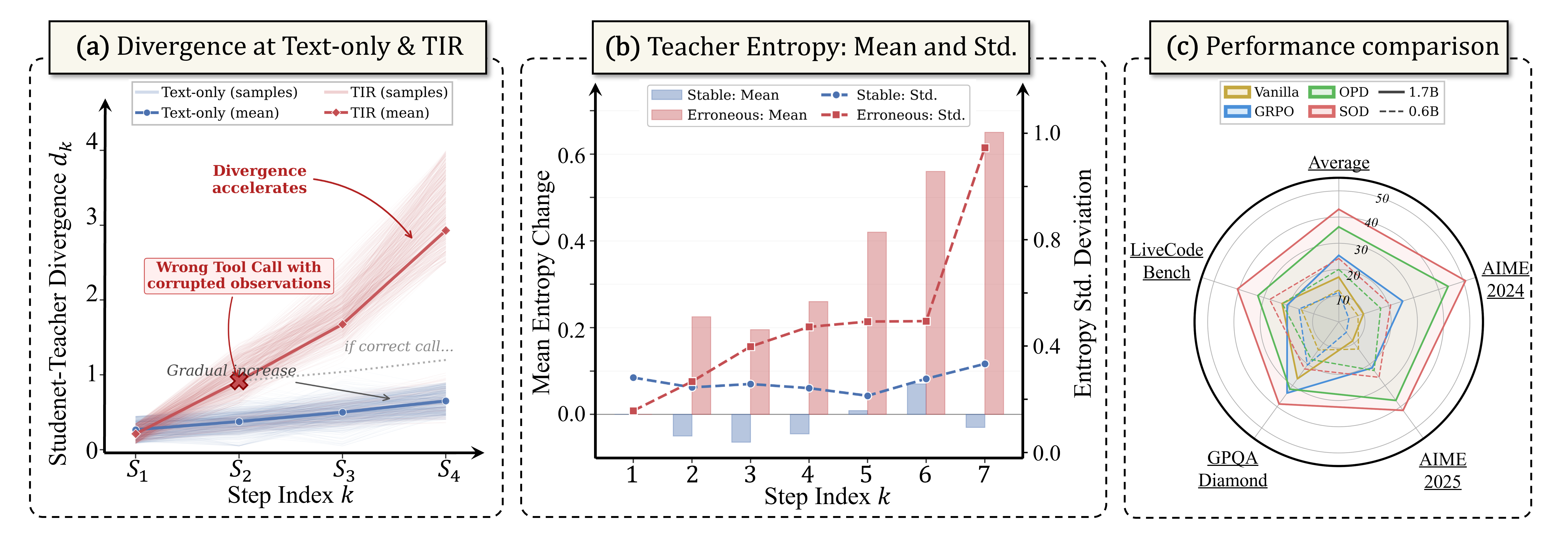}
\caption{\textbf{The motivation of \ourmodel}. 
(a) Student-teacher divergence $d_k$ across reasoning steps, sampled from 800 trajectories: in TIR, erroneous tool calls cause divergence to accelerate sharply, unlike the gradual drift in text-only reasoning. 
(b) Teacher entropy statistics over 800 sampled trajectories: on erroneous trajectories, both the mean entropy change (bars) and the standard deviation (dashed lines) grow rapidly at later steps, indicating increasingly unreliable teacher supervision. Please refer to~\Cref{app:H} for detailed cases.
(c) Radar chart comparing methods on four benchmarks.}
\label{motivation}
\vspace{-2 em}
\end{figure*}

Agentic capabilities have substantially expanded the applicability of large language models (LLMs)~\citep{xi2023rise,AgentDistillation}, enabling them to solve complex real-world tasks~\citep{react,schick2023toolformer,singh2025agentic}. 
However, such capabilities are typically realized by large-scale models, which incurs substantial inference cost, deployment overhead, and system complexity~\citep{singh2025agentic,chenglin2024mixed,fan2026agentprocessbench}. In latency-, resource-, and privacy-sensitive scenarios, transferring agentic capabilities to small language models (SLMs) that can be deployed on-device is therefore of significant practical importance~\citep{xu2024ondevice,xu2024kd_llm_survey,Agentdistill,oresearcher,coa}. Despite this motivation, enabling SLMs to acquire stable and effective tool-integrated reasoning (TIR) abilities remains a major challenge~\citep{qian2025toolrl,rainone2025replacing,xue2025simpletir,sad}.

Existing post-training methods for enhancing TIR abilities are largely based on reinforcement learning (RL)~\citep{li2025torl,jin2025searchr1,song2025r1searcher}, particularly policy optimization algorithms such as group relative policy optimization (GRPO)~\citep{grpo}. However, in SLM-based TIR settings, RL often suffers from unstable optimization~\citep{qian2025toolrl,rainone2025replacing}. TIR tasks typically involve long-horizon trajectories~\citep{li2025torl}, multi-step decision making~\citep{jin2025searchr1}, and interactions with external tools~\citep{singh2025agentic}, whereas RL commonly provides only sparse outcome-level rewards~\citep{deepseek2025r1}. For small models with limited capacity and weaker exploration ability, such sparse supervision can further exacerbate exploration failure, leaving the policy in a cold-start regime with few informative reasoning signals~\citep{kepo}.

Recently, on-policy distillation (OPD) has emerged as a promising paradigm for post-training~\citep{opd,gu2024minillm,eopd,ropd,lightOPD}. Unlike RL methods that rely on sparse, trajectory-level rewards~\citep{deepseek2025r1}, OPD provides dense token-level supervision~\citep{gu2024minillm} on trajectories sampled from the student's own policy~\citep{opd}, thereby alleviating the credit assignment difficulty inherent in sparse reward signals~\citep{qwen2025qwen3,li2026rethinking} while substantially improving sample efficiency~\citep{xu2026rlkd} and training stability~\citep{qwen2025qwen3}.

However, our experiments show that directly transferring OPD to SLM-based TIR can lead to severe training instability~\citep{srpo,sdrlvr,bousselham2025vold,wang2026openclaw}. We attribute this failure to a fundamental difference between how student trajectories deviate from the teacher distribution in TIR and in standard text-based reasoning~\citep{li2026rethinking,fu2026revisiting,opdsurvey,bousselham2025vold}. As illustrated in~\cref{motivation}(a), in text-only reasoning, even when the student gradually departs from the teacher, subsequent states still evolve continuously along the generated context, and the resulting distribution shift is typically progressive~\citep{ross2011reduction,opd}. In contrast, TIR introduces discontinuous state transitions through tool interactions~\citep{li2025torl}. A single erroneous tool call can inject an incorrect observation into the context, causing subsequent reasoning steps to unfold from a corrupted state~\citep{qian2025toolrl}. This issue is further amplified because teacher models are often highly accurate in tool use, whereas small models have limited capacity and weaker exploration ability~\citep{gudibande2024false,rainone2025replacing}. During early training, small models may accumulate multiple erroneous tool calls, causing their state distribution to rapidly drift away from the teacher distribution~\citep{li2026rethinking,wang2026tcod}. In such out-of-distribution states, teacher-provided token-level supervision may become unreliable or even misleading~\citep{veto,bousselham2025vold,kepo,tip,srpo} as illustrated in~\cref{motivation}(b), and the enlarged student-teacher discrepancy can induce unstable gradients and even training collapse~\citep{li2026rethinking,qwen2025qwen3}.

Based on this observation, we propose \textbf{\ourmodel}, a 
\textbf{S}tep-wise \textbf{O}n-policy \textbf{D}istillation framework for small language model agents. The core idea is to adaptively adjust the distillation strength at each reasoning step according to the divergence between the student and teacher. Specifically, when the student remains well aligned with the teacher, \ourmodel preserves dense supervision to fully exploit teacher guidance; when the deviation becomes large, it progressively attenuates the distillation signal for the corresponding step, thereby reducing the influence of potentially misleading supervision under out-of-distribution states. 
Experimental results (as displayed in~\cref{motivation}(c)) demonstrate that this simple yet effective step-wise reweighting mechanism enables lightweight agents to acquire agentic TIR capabilities more efficiently. 
Extensive experiments on challenging math, science, and code benchmarks show that \ourmodel outperforms the second-best baseline by up to \textbf{20.86\%}.
Notably, our 0.6B-scale student achieves \textbf{26.13\%} accuracy on AIME 2025. To our best knowledge, this is the first sub-billion-parameter model to reach this level on such a challenging reasoning benchmark.

\section{Related Work}\label{sec:related}

\paragraph{Reinforcement Learning for Agents.}
RL-based post-training has evolved from reinforcement learning from human feedback (RLHF)~\citep{christiano2017deep,ziegler2019fine} with PPO~\citep{schulman2017proximal} to more scalable methods like GRPO~\citep{grpo}. For language agents, structured reasoning paradigms such as ReAct~\citep{react}, Toolformer~\citep{schick2023toolformer}, and FireAct~\citep{chen2023fireact} enable tool use but rely on demonstrations rather than online optimization. Recent work extends RL to agent interaction trajectories across code generation~\citep{yang2024swe}, tool use~\citep{feng2025retool}, GUI interaction~\citep{bai2024digirl}, and web navigation~\citep{qi2024webrl}. A central challenge is credit assignment under sparse, delayed feedback, addressed via trajectory-level updates and value-free formulations~\citep{zhou2024archer, chen2025reinforcement}. KL-regularized policy optimization further introduces bias and instability concerns~\citep{schulman2017proximal}, amplified in agentic settings by distribution shift and compounding errors. Broader frameworks scaling agentic RL across environments~\citep{demy, wang2026rlanything, wang2025co} still rely on trajectory-level rewards without dense supervision.

\paragraph{On-policy Distillation.}
On-Policy Distillation (OPD)~\citep{opdsurvey,opcd,gad,hybridOPD, scope,tip,chen2026soda} introduces token-level supervision on student-generated trajectories to mitigate the distribution mismatch of offline distillation. \citet{gu2024minillm} formulates OPD as reverse KL minimization under the student distribution, while \citet{opd} unifies on-policy and off-policy distillation across divergence objectives. \citet{yang2026learning} further interprets OPD as KL-regularized reinforcement learning with implicit per-token rewards, and \citet{li2026rethinking} shows that OPD primarily aligns local support on student-visited states, depending on teacher–student compatibility in reasoning patterns. OPD has also been extended to self-distillation settings without external teachers~\citep{opsd,opsd2,sdpo,sdft,dqchen}.
Recently, more works~\citep{srpo,sdrlvr,bousselham2025vold,wang2026openclaw,skillsd} have explored introducing OPD to mitigate the limitations of RL.

\section{Preliminaries}

\subsection{Multi-turn Tool-Integrated Reasoning}

We consider the post-training of a small language model (SLM) for multi-turn tool-integrated reasoning (TIR). Given an input $x$, the model interacts with an external environment over multiple reasoning steps. At each step $k$, the model generates a response $y_k$, which may include natural-language reasoning, a tool invocation, or a final answer. If a tool is invoked, the environment returns an observation $o_k$, which is appended to the context and conditions subsequent generations.

A trajectory is defined as:
\begin{equation}
    \tau = (x, y_1, o_1, \ldots, y_K, o_K, y_{K+1}),
\end{equation}
where $y_{K+1}$ denotes the final response. The policy $\pi_\theta$ generates only model tokens, while observations $\{o_k\}$ are provided by the environment. Let $y_t$ denote a generated token and $y_{<t}$ its prefix, which may include both model outputs and tool observations.

\subsection{Group Relative Policy Optimization}

Group Relative Policy Optimization (GRPO)~\citep{grpo} is a reinforcement learning algorithm that updates the policy using relative rewards within a group of sampled trajectories. We assume access to an outcome-level reward function $r(\tau)$ defined on complete trajectories. For each input $x$, a group of trajectories $\{\tau_i\}_{i=1}^G$ is sampled from the old policy $\pi_{\theta_{\mathrm{old}}}$, each receiving reward $r_i = r(\tau_i)$.

The group-relative advantage is computed as:
\begin{equation}
    \hat{A}_i =
    \frac{r_i - \mathrm{mean}(\{r_j\})}
    {\mathrm{std}(\{r_j\}) + \epsilon_A}.
\label{eq:advantage}
\end{equation}

Let $\rho_{i,t}(\theta) = \frac{\pi_\theta(y_{i,t} \mid y_{i,<t})}{\pi_{\theta_{\mathrm{old}}}(y_{i,t} \mid y_{i,<t})}$ denote the token-level importance ratio. The objective is defined as:
\begin{equation}
\mathcal{L}_{\mathrm{GRPO}}
=
-
\mathbb{E}
\left[
\frac{1}{G}
\sum_{i=1}^{G}
\frac{1}{|\mathcal{T}_i|}
\sum_{t \in \mathcal{T}_i}
\min
\left(
\rho_{i,t}(\theta)\hat{A}_i,
\mathrm{clip}(\rho_{i,t}(\theta), 1-\epsilon, 1+\epsilon)\hat{A}_i
\right)
\right],
\label{eq:grpo}
\end{equation}
where $\mathcal{T}_i$ denotes model-generated token positions.
This objective provides an on-policy learning signal based on relative trajectory performance, but supplies only sparse outcome-level rewards.

\subsection{On-policy Distillation}

On-policy distillation (OPD)~\citep{opd,lu2025onpolicydistillation} is a post-training paradigm that provides dense token-level supervision on student-generated trajectories by aligning the student policy with a teacher distribution. Given a trajectory $y \sim \pi_\theta$, the OPD objective is defined as:
\begin{equation}
\mathcal{L}_{\mathrm{OPD}}^{\mathrm{}}
=
\mathbb{E}
\left[
\sum_{t \in \mathcal{T}_i}
\left(
\log \pi_\theta(y_t \mid y_{<t})
-
\log \pi_{\mathrm{teacher}}(y_t \mid y_{<t})
\right)
\right],
\end{equation}

where $\pi_\theta$ is the student policy, $\pi_{\mathrm{teacher}}$ is the teacher model, and $\mathcal{T}_i$ denotes the set of model-generated token positions in the sampled trajectory. This objective corresponds to a sampled estimator of the reverse KL divergence between the student and teacher policies on student-visited states.

\section{Methodology}

In this section, we introduce \ourmodel.
The framework of \ourmodel is displayed in~\cref{framework}.
We first analyze why on-policy distillation (OPD) can become unreliable under tool-induced state drift. We then introduce a step-level divergence score as an observable proxy for the reliability of teacher supervision, and finally formulate a step-wise reweighted OPD objective.

\subsection{Failure of On-policy Distillation under Tool-Induced State Drift}
\label{sec:failure_uniform_opd}

The OPD objective assumes that teacher supervision remains reliable across all student-visited states~\citep{fu2026revisiting,li2026rethinking,bousselham2025vold}. This assumption holds when distribution drift is gradual, but is severely violated in TIR due to discontinuous state transitions introduced by tool observations. We formalize this failure through two propositions (proofs in~\cref{app:proof_opd_failure}).

We define the step-level mismatch as:
\begin{equation}
    \Delta_k = \frac{1}{|\mathcal{I}_k|} \sum_{t\in \mathcal{I}_k} D_{\mathrm{KL}}\bigl(\pi_\theta(\cdot \mid y_{<t}) \,\big\Vert\, \pi_{\mathrm{teacher}}(\cdot \mid y_{<t})\bigr).
\end{equation}

\begin{proposition}[Discontinuous divergence amplification]
\label{prop:discontinuous_drift}
In text-only reasoning, $\Delta_{k+1} - \Delta_k = O(\eta)$ where $\eta$ is the per-token distributional shift. In TIR, a single erroneous tool observation of length $m$ causes $\Delta_{k+1} - \Delta_k = \Omega(m \cdot \eta_{\mathrm{tool}})$ with $\eta_{\mathrm{tool}} \gg \eta$. Under $j$ consecutive tool errors, the divergence compounds super-linearly: $\Delta_{k+j} - \Delta_k = \Omega\!\left(\sum_{i=0}^{j-1} m_i \cdot \eta_{\mathrm{tool}}^{(i)}\right)$ with $\eta_{\mathrm{tool}}^{(i+1)} \ge \eta_{\mathrm{tool}}^{(i)}$.
\end{proposition}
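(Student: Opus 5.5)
We will prove the proposition under an explicit stability model for how the per-token KL $\delta_t := D_{\mathrm{KL}}(\pi_\theta(\cdot\mid y_{<t})\,\Vert\,\pi_{\mathrm{teacher}}(\cdot\mid y_{<t}))$ responds to appended context. We first make precise what the statement leaves informal. In text-only reasoning, each appended token is drawn from the student and lies in the region where both policies are smooth in the prefix; we assume appending one such token changes $\delta_t$ by at most $\eta$ (a Lipschitz-type condition on the map prefix $\mapsto\delta$). For a tool observation $o_k$ of length $m$ that is erroneous, i.e.\ one the teacher would assign low probability to under its own rollout, we assume a \emph{lower} bound: each observation token that is out-of-support for the teacher raises the divergence of subsequent positions by at least $\eta_{\mathrm{tool}}$. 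This is the only substantive modeling step, and the statement's asymptotic notation suggests the authors intend it as an assumption rather than something derivable in full generality.

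\emph{Text-only case.} Steps $k$ and $k+1$ are adjacent blocks of student tokens. We compare $\Delta_{k+1}$ and $\Delta_k$ by pairing positions, or by comparing against the last position of step $k$. Each position in $\mathcal{I}_{k+1}$ differs from a reference position in $\mathcal{I}_k$ by appended student tokens, and each token contributes an $O(\eta)$ change. Averaging over $|\mathcal{I}_k|$ and $|\mathcal{I}_{k+1}|$ gives $|\Delta_{k+1}-\Delta_k| = O(\eta)$. Here the constant depends on bounded step length, which we assume; alternatively we define $\eta$ as the per-step shift.

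\emph{Single erroneous tool call.} Between $\mathcal{I}_k$ and $\mathcal{I}_{k+1}$ the context gains $o_k$ of length $m$. We telescope the divergence at the first token of step $k+1$ through the $m$ observation tokens. Each contributes at least $\eta_{\mathrm{tool}}$, and the student tokens in between contribute at most $O(\eta)$. This gives $\Delta_{k+1}-\Delta_k \ge m\eta_{\mathrm{tool}} - O(\eta)$, which is $\Omega(m\eta_{\mathrm{tool}})$ since $\eta_{\mathrm{tool}}\gg\eta$. To pass the lower bound from the first position of step $k+1$ to the average $\Delta_{k+1}$, we use that subsequent student tokens change $\delta$ by only $O(\eta)$ each. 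The corruption is not undone within the step, because the erroneous observation stays in the context.

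\emph{$j$ consecutive errors.} We sum the single-step bound telescopically, $\Delta_{k+j}-\Delta_k=\sum_{i}(\Delta_{k+i+1}-\Delta_{k+i})$, which yields the stated $\Omega(\sum_i m_i\eta_{\mathrm{tool}}^{(i)})$. The monotonicity $\eta^{(i+1)}_{\mathrm{tool}}\ge\eta^{(i)}_{\mathrm{tool}}$ is what makes the growth super-linear in $j$. We will justify it by a compounding assumption: the per-token sensitivity $\eta_{\mathrm{tool}}^{(i)}$ is nondecreasing in the current divergence level. The intuition is that the further the state is from the teacher's support, the flatter and less informative the teacher's conditional becomes, so a new corrupted observation is even less predictable to it. Since $\Delta$ is increasing by the previous step, induction gives the monotone sequence.

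The main obstacle is exactly this modeling step. The lower bound $\eta_{\mathrm{tool}}$ and its monotonicity cannot be proved from the definitions alone, because KL need not increase with arbitrary context. We will state them as explicit assumptions, motivated by the teacher entropy evidence in \cref{motivation}(b), and keep the rest of the argument purely telescoping.
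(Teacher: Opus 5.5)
Your skeleton matches the paper's: telescope through the $m$ observation tokens for a single error, sum over the $j$ errors, and tell a compounding story for why $\eta_{\mathrm{tool}}^{(i)}$ is monotone. The difference is in the one step with real content, and your version is the one whose inferences are valid.

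The paper keeps $\eta$ and $\eta_{\mathrm{tool}}$ as total-variation shifts of the next-token distribution. It uses the triangle inequality to get $\mathrm{TV}(p_{t_{k+1}^{\mathrm{start}}}, p_{t_k^{\mathrm{end}}}) \le m\,\eta_{\mathrm{tool}}$, appeals to Pinsker to conclude $\Delta_{k+1}-\Delta_k=\Omega(m\,\eta_{\mathrm{tool}})$, and justifies monotonicity with the $p_{\mathrm{err}}^j$ familiarity heuristic. That chain does not produce a lower bound, for three reasons:
\begin{itemize}
\item The triangle inequality bounds the shift from above, not below.
\item Pinsker lower-bounds $D_{\mathrm{KL}}(P\Vert Q)$ by $2\,\mathrm{TV}(P,Q)^2$ for a fixed pair. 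It says nothing about the change of KL across prefixes; student and teacher can shift together and leave the KL unchanged.
\item Even with a matching lower bound on TV, it would give only $\Omega\bigl((m\,\eta_{\mathrm{tool}})^2\bigr)$, which is weaker because TV is at most one.
\end{itemize}
The paper's text-only half has the same problem. A small TV shift of the student alone controls neither the teacher's shift nor the KL where teacher probabilities are small.

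You instead define both rates directly as increments of the per-token KL $\delta_t$ and impose the lower bound as a hypothesis. That makes every remaining step valid. Your step-length bookkeeping, which passes from pointwise $\delta_t$ to the averages $\Delta_k$, also fills in what the paper's ``boundary of one token'' remark skips. The price, which you state yourself, is that the proposition reduces to the per-token hypothesis summed up. Your remark that KL need not grow with added context is exactly why nothing stronger follows from the definitions.

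One inference in your proposal does not follow, and the paper makes the same leap. Non-strict monotonicity $\eta_{\mathrm{tool}}^{(i+1)}\ge\eta_{\mathrm{tool}}^{(i)}$ does not make the growth super-linear in $j$. It only gives $\sum_{i=0}^{j-1} m_i\,\eta_{\mathrm{tool}}^{(i)} \ge j\,(\min_i m_i)\,\eta_{\mathrm{tool}}^{(0)}$. Constant sensitivities satisfy your compounding hypothesis and produce exactly linear growth. The displayed $\Omega$ bound is fine as you derive it, but justifying the word ``super-linearly'' needs a quantitative hypothesis. Two options:
\begin{itemize}
\item Assume $\eta_{\mathrm{tool}}^{(i)}\ge c\,\Delta_{k+i}$. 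Combined with your per-step bound, this gives $\Delta_{k+i+1}\ge(1+c' m_i)\,\Delta_{k+i}$, hence geometric growth once $\Delta$ is positive.
\item Assume increments $\eta_{\mathrm{tool}}^{(i+1)}-\eta_{\mathrm{tool}}^{(i)}\ge\gamma>0$. This gives quadratic growth in $j$.
\end{itemize}
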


\begin{proposition}[Gradient SNR degradation]
\label{prop:variance_explosion}
Define the teacher-supported region $S_t^\epsilon = \{v \in \mathcal{V}: \pi_{\mathrm{teacher}}(v \mid y_{<t}) \ge \epsilon\}$ and the overlap $\rho_t = \sum_{v \in S_t^\epsilon} \pi_\theta(v \mid y_{<t})$. When $\rho_t \le \rho$, the second moment of the OPD loss satisfies $\mathbb{E}[\ell_t^2] \ge (1 - \rho)\,\log^2(1/\epsilon)$, and the signal-to-noise ratio of the gradient estimator $\mathrm{SNR}(g_t) \to 0$ as $\rho_t \to 0$.
\end{proposition}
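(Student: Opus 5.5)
We argue token by token: the loss is the per-token sampled log-ratio $\ell_t = \log \pi_\theta(y_t\mid y_{<t}) - \log \pi_{\mathrm{teacher}}(y_t\mid y_{<t})$ with $y_t \sim \pi_\theta(\cdot\mid y_{<t})$, and everything is conditional on the prefix $y_{<t}$. We therefore work at a fixed position $t$ and drop the conditioning.

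\emph{Second-moment bound.} Split the vocabulary into $S_t^\epsilon$ and its complement. On the complement, $\pi_{\mathrm{teacher}}(v) < \epsilon$, so $-\log \pi_{\mathrm{teacher}}(v) > \log(1/\epsilon)$. The difficulty is the student term $\log \pi_\theta(v) \le 0$, which could cancel this contribution. We handle it by Jensen's inequality rather than pointwise:
\[
\mathbb{E}[\ell_t^2] \;\ge\; \sum_{v\notin S_t^\epsilon} \pi_\theta(v)\,\ell(v)^2 \;\ge\; (1-\rho_t)\Bigl(\tfrac{1}{1-\rho_t}\sum_{v\notin S_t^\epsilon}\pi_\theta(v)\,\ell(v)\Bigr)^2 .
\]
The inner average equals $\log\frac{1-\rho_t}{1-\tau_t}$ plus a conditional KL, which is nonnegative. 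Here $\tau_t = \sum_{v\notin S}\pi_{\mathrm{teacher}}(v) < |\mathcal{V}|\epsilon$. This yields a lower bound of the form $\log\bigl((1-\rho)/(|\mathcal{V}|\epsilon)\bigr)$.

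To obtain exactly $(1-\rho)\log^2(1/\epsilon)$, we either assume the standard regime $\pi_\theta(v)\ge\epsilon$ on student-sampled tokens (a mild bounded-support assumption), or take $\epsilon$ small enough that the lower-order terms are absorbed. We state whichever assumption is needed explicitly, and use monotonicity in $\rho_t\le\rho$ to pass from $\rho_t$ to $\rho$.

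\emph{SNR statement.} The gradient estimator is $g_t = \ell_t\,\nabla\log\pi_\theta(y_t)$, whose mean is $\nabla D_{\mathrm{KL}}$. We define $\mathrm{SNR}(g_t) = \|\mathbb{E} g_t\|^2 / \mathbb{E}\|g_t - \mathbb{E} g_t\|^2$. The plan has two parts.
\begin{enumerate}
\item Show that the variance is bounded below by roughly $\mathbb{E}[\ell_t^2]\cdot c$. This follows by bounding the score norms below on the off-support tokens, for example through a softmax parameterization in which $\|\nabla\log\pi_\theta(v)\|$ is bounded away from zero when $\pi_\theta(v)$ is bounded away from one.
\item Bound the mean gradient above. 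For softmax logits, the exact gradient of the reverse KL is $\sum_v \pi_\theta(v)\bigl(\ell(v) - \mathrm{KL}\bigr)e_v$, whose norm is at most the standard deviation of $\ell$. This alone does not vanish, so the argument has to use the regime $\rho_t\to 0$ together with $\epsilon\to 0$: the variance grows like $\log^2(1/\epsilon)$ while the signal is controlled separately.
\end{enumerate}

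\emph{Main obstacle.} The vanishing-SNR claim is the hard part. With fixed $\epsilon$ it is not literally true without extra assumptions. I would try first to make it rigorous in an asymptotic family where the teacher mass on the student's support tends to zero, so that $\log(1/\pi_{\mathrm{teacher}})\to\infty$ on sampled tokens. In that family, the heavy-tailed fluctuations of $\ell_t$ would dominate the variance term while the mean direction stays bounded after normalization, which would give $\mathrm{SNR}\to 0$.
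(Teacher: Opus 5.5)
Your second-moment argument is essentially the paper's: restrict to $\bar S_t^\epsilon$, renormalize to $\tilde p$, apply Jensen. Finishing via the conditional KL is a tidy alternative to the paper's entropy bound. It needs two fixes:
\begin{itemize}
\item The identity is $\sum_{v\notin S_t^\epsilon}\tilde p(v)\ell(v)=\log\frac{1-\rho_t}{\tau_t}+D_{\mathrm{KL}}(\tilde p\,\|\,\tilde q)$, with $\tilde q=\pi_{\mathrm{teacher}}/\tau_t$ on the complement; the denominator is $\tau_t$, not $1-\tau_t$.
\item You need $|\mathcal{V}|\epsilon\le 1-\rho$ before squaring.
\end{itemize}
The result, $(1-\rho)\log^2\frac{1-\rho}{|\mathcal{V}|\epsilon}$, is what the paper's Step~4 actually proves, namely $(1-\rho)\log^2(1/\epsilon)\,(1-\log|\mathcal{V}|/\log(1/\epsilon))^2$. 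The only difference is a $\log(1-\rho_t)$ term that the paper's bound $H_{\bar S}\le\log|\mathcal{V}|$ drops.

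Neither of your routes to the clean constant works. The assumption $\pi_\theta(v)\ge\epsilon$ only gives $\ell(v)>0$ on $\bar S_t^\epsilon$, and small $\epsilon$ only gives a $1-o(1)$ factor. The clean bound is false in general. If $\pi_\theta$ is uniform on $M$ tokens, each of teacher mass $\epsilon'<\epsilon$, then $\rho_t=0$ and $\mathbb{E}[\ell_t^2]=\log^2\frac{1}{M\epsilon'}$. This is near $0$ when $M\epsilon'$ is near $1$, which is possible even with $|\mathcal{V}|\epsilon<1$. So commit to the weakened form.

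The genuine gap is the SNR claim. You are right that $\rho_t\to0$ alone cannot suffice. The paper's Step~5 reaches its conclusion only by treating $C=\|\mathbb{E}g_t\|$ as a constant while $\log(1/\epsilon)$ grows. But your proposed family, where teacher mass on the student's support tends to $0$, does not force $\mathrm{SNR}(g_t)\to0$ either. Here is a counterexample:
\begin{itemize}
\item Parameterize by logits. Let $\pi_\theta$ put mass $\tfrac12$ on each of two tokens $a,b$ and vanishing mass elsewhere.
\item Set $\pi_{\mathrm{teacher}}(a)=\epsilon^3$, $\pi_{\mathrm{teacher}}(b)=\epsilon^2$, and let $L=\log(1/\epsilon)\to\infty$.
\item Then $\rho_t\to0$, $\ell(a)=3L-\log 2$, $\ell(b)=2L-\log 2$, and $g_t$ is (in the limit) a multiple of $e_a-e_b$.
\item Consequently $\mathrm{SNR}(g_t)=\frac{\ell(a)-\ell(b)}{\ell(a)+\ell(b)}\to\frac15$ in the paper's normalization; your squared version tends to $\frac1{25}$.
\end{itemize}
This holds even though $c_{\min}=\tfrac12$ and $\mathbb{E}[\ell_t^2]\to\infty$.

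Your heuristic that heavy-tailed fluctuations of $\ell_t$ swamp the signal points the wrong way. Since the score has mean zero, $\mathbb{E}g_t=\mathrm{Cov}(\ell_t,\nabla\log\pi_\theta(y_t))$. So token-to-token variation of $\ell_t$ \emph{is} the signal. The pure noise is the common offset $D_{\mathrm{KL}}(p_t\|q_t)$ multiplying the score.

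Your own two ingredients make this precise. First, $\|\mathbb{E}g_t\|\le G\,\mathrm{sd}(\ell_t)$ with $G=\max_v\|\nabla\log\pi_\theta(v)\|$. Second, $\mathbb{E}\|g_t\|^2\ge c_{\min}\mathbb{E}[\ell_t^2]\ge c_{\min}D_{\mathrm{KL}}(p_t\|q_t)^2$. Hence $\mathrm{SNR}(g_t)\to0$ follows once you add an explicit hypothesis such as $\mathrm{sd}(\ell_t)=o\bigl(D_{\mathrm{KL}}(p_t\|q_t)\bigr)$. An example is teacher log-probabilities on the student's support that are uniformly low with $O(1)$ spread. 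Without a hypothesis of this kind the claim is false, so neither your plan nor the paper's Step~5 can establish it.
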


Together, these results characterize a failure cascade specific to TIR: consecutive erroneous tool calls trigger compounding divergence amplification (Prop.~\ref{prop:discontinuous_drift}), progressively pushing the student away from the teacher distribution into low-overlap states where the OPD gradient becomes dominated by high-variance, uninformative contributions (Prop.~\ref{prop:variance_explosion}). This is empirically confirmed in~\cref{motivation}: divergence accelerates sharply as tool errors accumulate (a), and teacher entropy becomes elevated and unstable in subsequent steps (b). Since OPD aggregates losses uniformly across steps, it systematically overweights these corrupted signals, motivating our step-wise reweighting mechanism.
We formally show in Appendix~\ref{app:variance_reduction} that under monotonically increasing divergence, our reweighting suppresses the weighted second moment by a factor of $O((d_1/d_k)^2)$, restoring bounded gradient SNR even in low-overlap states where OPD suffers SNR collapse.

\begin{figure*}[t]
\centering
\includegraphics[width=1.0\linewidth]{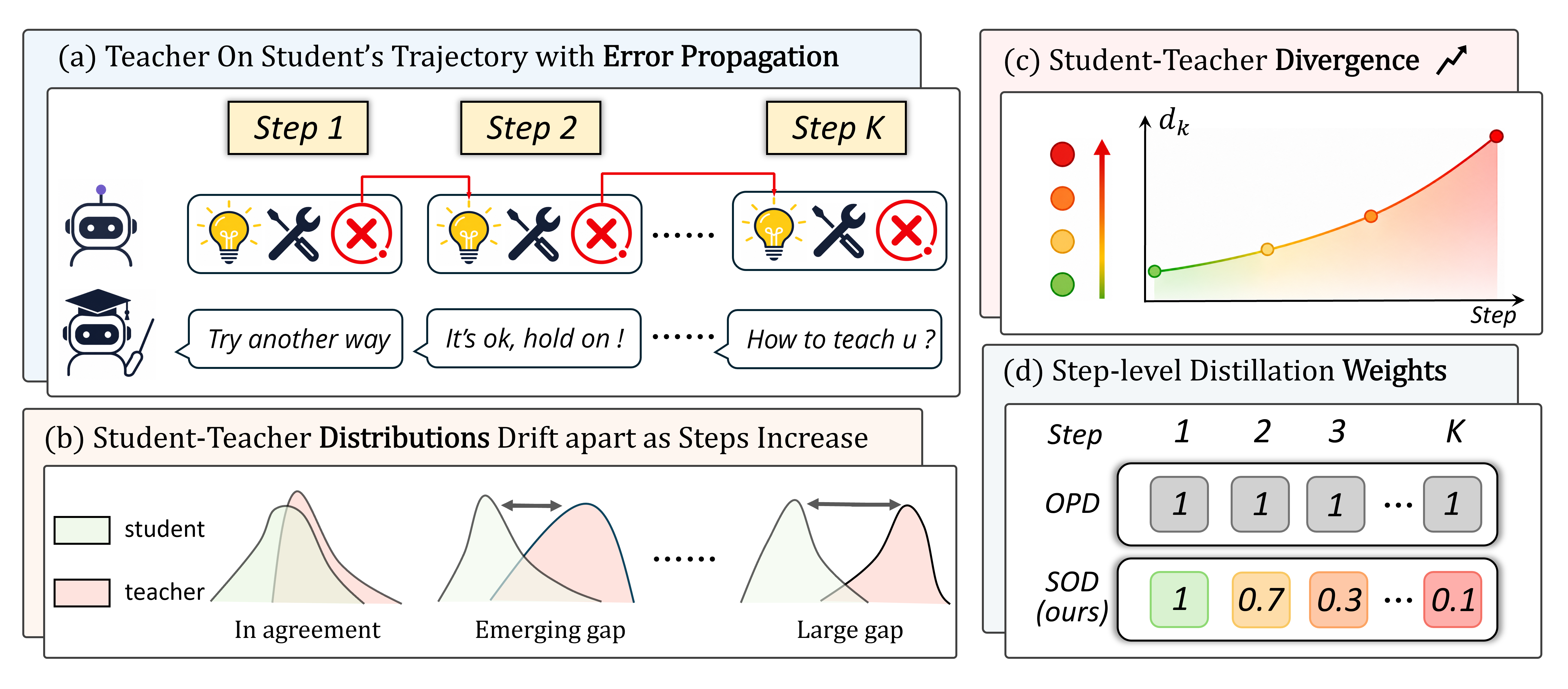}
\caption{\textbf{The overview of \ourmodel}. (a)~The student generates multi-step trajectories where erroneous tool calls propagate across steps, degrading teacher supervision reliability. (b)~Student-teacher distributions drift apart as errors accumulate. (c)~Step-level divergence $d_k$ quantifies this drift. (d)~\ourmodel adaptively attenuates distillation weights in high-divergence steps, unlike vanilla OPD which applies uniform weights. Please refer to~\cref{rq5_body} for more characteristics of \ourmodel.}
\label{framework}
 \vspace{-0.09 in}
\end{figure*}

\subsection{Student-teacher Divergence and Step-wise Reweighting}

We partition a generated trajectory into $K+1$ reasoning steps, where each step corresponds to a model response between two tool observations, or the final answer step. Let $\mathcal{I}_k$ denote the set of token positions belonging to the $k$-th model-generated step. Tool observation tokens are excluded because they are produced by the external environment rather than by the policy.

We define a step-level divergence score to quantify student–teacher divergence at step $k$:

\begin{equation}
    d_k =
    \frac{1}{|\mathcal{I}_k|}
    \sum_{t \in \mathcal{I}_k}
    \left|
    \log \pi_\theta(y_t \mid y_{<t})
    -
    \log \pi_{\mathrm{teacher}}(y_t \mid y_{<t})
    \right|.
\label{eq:step_div}
\end{equation}

The score $d_k$ serves as a lightweight indicator of the local reliability of teacher supervision. A small $d_k$ suggests that the student remains well aligned with the teacher distribution, while a large $d_k$ indicates substantial mismatch, often caused by tool-induced state drift or corrupted observations.

Based on these divergence scores, we adaptively reweight the strength of distillation across steps.
We initialize the first step with full distillation strength, \ie $w_1 = 1$. For subsequent steps, the weight is:
\begin{equation}
    w_k =
    \min\!\left(
    w_1
    \prod_{u=1}^{k-1}
    \frac{d_u + \epsilon}{d_{u+1} + \epsilon},
    1 + \delta
    \right),
    \quad k \ge 2,
\label{eq:adaptive_weight}
\end{equation}
where $\epsilon$ is a small constant for stability and $\delta$ controls the maximum amplification of the distillation.
Since the reweighting in~\cref{eq:adaptive_weight} depends only on the \emph{ratios} between consecutive divergence scores rather than their absolute values, any monotone proxy of $\Delta_k$ suffices. We show in~\cref{app:dk_proxy} that $d_k$ is monotonically consistent with $\Delta_k$ and computable at zero marginal cost from the OPD forward pass. We therefore adopt $d_k$ in place of $\Delta_k$ in our implementation.

This weighting rule captures the evolution of student-teacher divergence along the trajectory. When the divergence increases, \ie $d_{u+1} > d_u$, indicating that the student is drifting away from the teacher distribution (often due to tool-induced state shift), the corresponding ratio becomes smaller than one, leading to attenuation of the distillation signal in high-mismatch regions. 
Conversely, the student may partially move back toward the teacher-supported region in later steps, effectively correcting earlier deviations. We refer to this phenomenon as \textit{recovery from earlier errors}. In such cases, when $d_{u+1} < d_u$, the reweighting mechanism allows the distillation strength to increase accordingly, restoring informative guidance. To ensure stable optimization, the weight is upper-bounded by $1+\delta$.

\subsection{Training Objective}

For a student-generated trajectory, the sampled-token OPD term at token $t$ is:
\begin{equation}
    \ell_{\mathrm{OPD}}(y_t)
    =
    \log \pi_\theta(y_t \mid y_{<t})
    -
    \log \pi_{\mathrm{teacher}}(y_t \mid y_{<t}).
\end{equation}

Instead of applying this term uniformly at step level, we reweight all tokens in step $k$ by the corresponding reliability weight $w_k$, leading to the step-wise OPD objective:
\begin{equation}
    \mathcal{L}_{\mathrm{OPD}}^{\mathrm{step}}
    =
    \mathbb{E}_{y \sim \pi_\theta}
    \left[
    \sum_{k=1}^{K+1}
    w_k
    \sum_{t \in \mathcal{I}_k}
    \left(
    \log \pi_\theta(y_t \mid y_{<t})
    -
    \log \pi_{\mathrm{teacher}}(y_t \mid y_{<t})
    \right)
    \right].
\label{eq:step_opd}
\end{equation}

The final training objective is defined as:
\begin{equation}
\label{eq:training_object}
    \mathcal{L}
    =
    \mathcal{L}_{\mathrm{GRPO}}
    +
    \mathcal{L}_{\mathrm{OPD}}^{\mathrm{step}}.
\end{equation}

Here, $\mathcal{L}_{\mathrm{GRPO}}$ provides sparse outcome-level rewards to drive trajectory exploration, while $\mathcal{L}_{\mathrm{OPD}}^{\mathrm{step}}$ supplies dense token-level guidance whose strength is reweighted by the student-teacher divergence, jointly enabling stable learning under tool-induced state drift.

\section{Experiment}\label{sec:experiment}

\subsection{Experimental Setup}

\paragraph{Datasets \& Benchmarks.} Following~\citet{demy}, our training data comprises a 3k high-quality SFT corpus of multi-turn reasoning trajectories curated from s1-1k~\citep{muennighoff2025s1}, LeetCode, and ReTool~\citep{feng2025retool}, along with a 30k diverse RL dataset covering mathematical reasoning from DAPO-Math~\citep{yu2025dapo}, math and coding tasks from Skywork-or1~\citep{he2025skywork-or1}, and scientific problem solving from MegaScience~\citep{fan2025megascience}. Please refer to~\Cref{app:datasets} for more details. We evaluate on 4 challenging benchmarks: AIME 2024/2025, GPQA-Diamond~\citep{rein2024gpqa}, and LiveCodeBench~\citep{jainlivecodebench} (See~\cref{app:benchmarks} for details).

\paragraph{Evaluation Setups.}

The temperature is fixed at 1.0 with nucleus sampling parameter top\_p=0.6. For each problem, 32 independent samples are generated to enable a more thorough evaluation, from which average@32 are computed and finally reported (percentage). %
The teacher model is a Qwen3-4B model~\citep{qwen2025qwen3} further optimized with GRPO on the RL datasets.
If not otherwise specified, the teacher of subsequent experiments is the 4B model. %
We use Qwen3-0.6B and Qwen3-1.7B as student models.

\paragraph{Baselines \& Implementation Details.}
For the baselines,
we evaluate \ourmodel against a set of supervised, RL, and distillation baselines.
For each student, we compare \ourmodel with the following baselines:
(1) Vanilla.
(2) SFT.
(3) GRPO~\citep{grpo}.
(4) OPD~\citep{opd}.
(5) OPSD$_{\mathrm{gt}}$~\citep{opsd} (On-policy self-distillation with groundtruth).
(6) OPSD$_{\mathrm{hint}}$~\citep{opsd} (On-policy self-distillation with hints).
For more details about the baselines, please refer to~\Cref{app:baselines}.
For implementation details, please refer to~\Cref{app:implentation_details}.

\begin{table*}[t]
    \centering
    \caption{
    Performance comparison of Qwen3 teacher and student across 4 benchmarks. The best results are in \textbf{bold}, and the second-best results are \underline{underlined}. We report average@32 over 5 runs.}
    \vspace{1 mm}
    \label{tab:main}

    \begingroup
    \setlength{\tabcolsep}{6.5 pt}
    \renewcommand{\arraystretch}{1.08}
    \normalsize

    \begin{adjustbox}{max width=0.82\textwidth}
    \begin{tabular}{llccccc}
        \toprule
        \multirow{2}{*}{\textbf{Params}} 
        & \multirow{2}{*}{\textbf{Method}} 
        & \multicolumn{2}{c}{\textbf{Math}} 
        & \textbf{Science} 
        & \textbf{Code} 
        & \multirow{2}{*}{\textbf{Average}} \\
        \cmidrule(lr){3-4}
        \cmidrule(lr){5-5}
        \cmidrule(lr){6-6}
        & & AIME 2024 & AIME 2025 & GPQA & LiveCodeBench & \\

        \midrule
        \rowcolor[HTML]{FEE090}
        \multicolumn{7}{c}{\textbf{Teacher Models from the Qwen3 Series}}\\
        \textbf{4B} 
        & GRPO 
        & 67.60\phantom{.}\scalebox{0.72}{$\pm$1.34} & 60.42\phantom{.}\scalebox{0.72}{$\pm$1.47} & 55.19\phantom{.}\scalebox{0.72}{$\pm$0.81} & 63.13\phantom{.}\scalebox{0.72}{$\pm$0.93} & 61.59 \\

        \midrule
        \rowcolor[HTML]{E0F3F8}
        \multicolumn{7}{c}{\textbf{Distilled Student Models from the Qwen3 Series}}\\
        \multirow{7}{*}{\textbf{0.6B}}
        & Vanilla 
        & 7.71\phantom{.}\scalebox{0.72}{$\pm$0.33} & 12.81\phantom{.}\scalebox{0.72}{$\pm$0.38} & 13.24\phantom{.}\scalebox{0.72}{$\pm$0.24} & 14.89\phantom{.}\scalebox{0.72}{$\pm$0.29} & 12.16 \\
        & SFT 
        & 5.67\phantom{.}\scalebox{0.72}{$\pm$0.26} & 5.42\phantom{.}\scalebox{0.72}{$\pm$0.22} & 15.20\phantom{.}\scalebox{0.72}{$\pm$0.30} & 9.61\phantom{.}\scalebox{0.72}{$\pm$0.27} & 8.97 \\
        & GRPO 
        & 4.06\phantom{.}\scalebox{0.72}{$\pm$0.37} & 4.90\phantom{.}\scalebox{0.72}{$\pm$0.31} & \underline{20.38}\phantom{.}\scalebox{0.72}{$\pm$0.47} & 15.95\phantom{.}\scalebox{0.72}{$\pm$0.51} & 11.32 \\
        & OPD 
        & \underline{16.82}\phantom{.}\scalebox{0.72}{$\pm$0.81} & \underline{22.95}\phantom{.}\scalebox{0.72}{$\pm$0.97} 
        & 17.76\phantom{.}\scalebox{0.72}{$\pm$0.41} 
        & \underline{22.65}\phantom{.}\scalebox{0.72}{$\pm$0.75} & \underline{20.04} \\
        & OPSD$_{\mathrm{gt}}$ 
        & 12.63\phantom{.}\scalebox{0.72}{$\pm$0.58} & 17.04\phantom{.}\scalebox{0.72}{$\pm$0.64} & 17.32\phantom{.}\scalebox{0.72}{$\pm$0.35} & 16.73\phantom{.}\scalebox{0.72}{$\pm$0.44} & 15.93 \\
        & OPSD$_{\mathrm{hint}}$ 
        & 9.77\phantom{.}\scalebox{0.72}{$\pm$0.43} & 14.12\phantom{.}\scalebox{0.72}{$\pm$0.48} & 15.98\phantom{.}\scalebox{0.72}{$\pm$0.30} & 12.65\phantom{.}\scalebox{0.72}{$\pm$0.37} & 13.13 \\
        & \textbf{\ourmodel} 
        & \textbf{20.84}\phantom{.}\scalebox{0.72}{$\pm$0.90} & \textbf{26.13}\phantom{.}\scalebox{0.72}{$\pm$1.07} & \textbf{22.19}\phantom{.}\scalebox{0.72}{$\pm$0.59} 
        & \textbf{27.72}\phantom{.}\scalebox{0.72}{$\pm$0.83} & \textbf{24.22} \\

        \midrule
        \multirow{7}{*}{\textbf{1.7B}}
        & Vanilla 
        & 9.90\phantom{.}\scalebox{0.72}{$\pm$0.38} & 8.96\phantom{.}\scalebox{0.72}{$\pm$0.33} & 26.80\phantom{.}\scalebox{0.72}{$\pm$0.36} & 22.73\phantom{.}\scalebox{0.72}{$\pm$0.41} & 17.10 \\
        & SFT 
        & 26.77\phantom{.}\scalebox{0.72}{$\pm$0.66} & 22.40\phantom{.}\scalebox{0.72}{$\pm$0.72} & 29.85\phantom{.}\scalebox{0.72}{$\pm$0.51} & 24.63\phantom{.}\scalebox{0.72}{$\pm$0.57} & 25.91 \\
        & GRPO 
        & 25.63\phantom{.}\scalebox{0.72}{$\pm$1.03} & 21.67\phantom{.}\scalebox{0.72}{$\pm$1.12} & 33.55\phantom{.}\scalebox{0.72}{$\pm$0.76} & 20.70\phantom{.}\scalebox{0.72}{$\pm$0.87} & 25.39 \\
        & OPD 
        & \underline{43.86}\phantom{.}\scalebox{0.72}{$\pm$1.23} & \underline{37.04}\phantom{.}\scalebox{0.72}{$\pm$1.31} 
        & 31.73\phantom{.}\scalebox{0.72}{$\pm$0.55} 
        & \underline{32.45}\phantom{.}\scalebox{0.72}{$\pm$0.93} 
        & \underline{36.27} \\
        & OPSD$_{\mathrm{gt}}$ 
        & 33.85\phantom{.}\scalebox{0.72}{$\pm$0.88} & 24.69\phantom{.}\scalebox{0.72}{$\pm$0.94} & \underline{35.02}\phantom{.}\scalebox{0.72}{$\pm$0.67} & 22.73\phantom{.}\scalebox{0.72}{$\pm$0.61} & 29.07 \\
        & OPSD$_{\mathrm{hint}}$ 
        & 34.42\phantom{.}\scalebox{0.72}{$\pm$0.76} & 21.43\phantom{.}\scalebox{0.72}{$\pm$0.83} & 33.46\phantom{.}\scalebox{0.72}{$\pm$0.59} & 23.12\phantom{.}\scalebox{0.72}{$\pm$0.52} & 28.11 \\
        & \textbf{\ourmodel} 
        & \textbf{50.83}\phantom{.}\scalebox{0.72}{$\pm$1.15} & \textbf{41.72}\phantom{.}\scalebox{0.72}{$\pm$1.24} & \textbf{38.72}\phantom{.}\scalebox{0.72}{$\pm$0.73} 
        & \textbf{40.63}\phantom{.}\scalebox{0.72}{$\pm$0.91} & \textbf{42.98} \\

        \bottomrule
    \end{tabular}%
    \end{adjustbox}

    \endgroup

    \vspace{-10pt}
\end{table*}

\subsection{Overall Performance}

We compare \ourmodel against six baselines on both 0.6B and 1.7B student models. The results are summarized in Table~\ref{tab:main}, from which we draw the following key observations. Note that we also provide training cost analysis in~\cref{app:cost}.

\begin{itemize}[leftmargin=*]

\item \textbf{Obs 1: \ourmodel consistently outperforms all baselines across every benchmark.}
On both model scales, \ourmodel achieves the highest scores on all four tasks, surpassing the strongest baseline OPD by \textbf{20.86\%} (0.6B) and \textbf{18.50\%} (1.7B) in relative average improvement. Notably, SFT and GRPO both fail to outperform the Vanilla baseline at the 0.6B scale, indicating that sparse outcome-level rewards and static demonstrations are insufficient for guiding small models in TIR. In contrast, our step-wise reweighting mechanism provides dense yet reliability-modulated supervision, effectively preventing misleading gradients from corrupted tool-call states while preserving informative teacher guidance in well-aligned regions.

\item \textbf{Obs 2: The improvements generalize across different model scales, demonstrating the scalability of \ourmodel.}
Our 1.7B student recovers \textbf{69.8\%} of the 4B teacher's performance, compared to only 58.9\% for OPD, showing that \ourmodel substantially improves distillation efficiency. Furthermore, even our 0.6B model surpasses several 1.7B baselines, suggesting that step-wise reweighting can partially compensate for the capacity gap between model scales. %

\end{itemize}

\begin{table*}[t]
    \centering
    \caption{
    \textbf{Ablation study on key components of \ourmodel.}
    Results are evaluated on 1.7B student model.}

    \vspace{1 mm}
    \label{tab:ablation}
    
    \begingroup
    \setlength{\tabcolsep}{4pt}
    \renewcommand{\arraystretch}{1.08}
    \normalsize

    \begin{adjustbox}{max width=0.90\textwidth}
    \begin{tabular}{lccccc}
        \toprule
        \multirow{2}{*}{\textbf{Variants}}
        & \multicolumn{2}{c}{\textbf{Math}}
        & \textbf{Science}
        & \textbf{Code}
        & \multirow{2}{*}{\textbf{Average}} \\
        \cmidrule(lr){2-3}
        \cmidrule(lr){4-4}
        \cmidrule(lr){5-5}
        & AIME 2024 & AIME 2025 & GPQA & LiveCodeBench & \\

        \midrule
        \rowcolor[HTML]{FEE090}
        \multicolumn{6}{c}{\textbf{Step-wise Reweighting}}\\
        (1.1) \textit{w/} Uniform Weighting
        & 41.68\phantom{.}\scalebox{0.72}{$\pm$0.95} & 35.58\phantom{.}\scalebox{0.72}{$\pm$1.01} & 30.12\phantom{.}\scalebox{0.72}{$\pm$0.59} & 31.43\phantom{.}\scalebox{0.72}{$\pm$0.68} & 34.70 \\
        (1.2) \textit{w/} Heuristic Weighting
        & 44.96\phantom{.}\scalebox{0.72}{$\pm$0.98} & 38.75\phantom{.}\scalebox{0.72}{$\pm$1.04} & 31.14\phantom{.}\scalebox{0.72}{$\pm$0.64} & 33.71\phantom{.}\scalebox{0.72}{$\pm$0.72} & 37.14 \\
        (1.3) \textit{w/} Mask After Wrong
        & 39.11\phantom{.}\scalebox{0.72}{$\pm$1.08} & 31.59\phantom{.}\scalebox{0.72}{$\pm$1.14} & 26.56\phantom{.}\scalebox{0.72}{$\pm$0.67} & 30.12\phantom{.}\scalebox{0.72}{$\pm$0.80} & 31.85 \\
        (1.4) \textit{w/o} Weight Clipping
        & 45.78\phantom{.}\scalebox{0.72}{$\pm$1.16} & 37.93\phantom{.}\scalebox{0.72}{$\pm$1.21} & 33.57\phantom{.}\scalebox{0.72}{$\pm$0.74} & 35.12\phantom{.}\scalebox{0.72}{$\pm$0.79} & 38.10 \\

        \midrule
        \rowcolor[HTML]{E0F3F8}
        \multicolumn{6}{c}{\textbf{Training Objective}}\\
        (2.1) \textit{w/o} GRPO
        & 48.87\phantom{.}\scalebox{0.72}{$\pm$0.93} & 39.73\phantom{.}\scalebox{0.72}{$\pm$1.02} & 35.89\phantom{.}\scalebox{0.72}{$\pm$0.62} & 38.62\phantom{.}\scalebox{0.72}{$\pm$0.71} & 40.78 \\
        (2.2) \textit{w/o} Step-wise OPD
        & 25.63\phantom{.}\scalebox{0.72}{$\pm$1.03} & 21.67\phantom{.}\scalebox{0.72}{$\pm$1.12} & 33.55\phantom{.}\scalebox{0.72}{$\pm$0.76} & 20.70\phantom{.}\scalebox{0.72}{$\pm$0.87} & 25.39 \\

        \midrule

        \textbf{\ourmodel}
        & \textbf{50.83}\phantom{.}\scalebox{0.72}{$\pm$1.15} & \textbf{41.72}\phantom{.}\scalebox{0.72}{$\pm$1.24} & \textbf{38.72}\phantom{.}\scalebox{0.72}{$\pm$0.73}
        & \textbf{40.63}\phantom{.}\scalebox{0.72}{$\pm$0.91} & \textbf{42.98} \\

        \bottomrule
    \end{tabular}%
    \end{adjustbox}

    \endgroup

    \vspace{2pt}
    \vspace{-10pt}
\end{table*}

\subsection{Ablation Study}

To evaluate each component in \ourmodel, we develop six variants from two perspectives on the Qwen3-1.7B student. For \textit{Step-wise Reweighting}, we consider: (1.1)~\textit{w/ Uniform Weighting}, which fixes $w_k=1$ for all steps; (1.2)~\textit{w/ Heuristic Weighting}, which applies a fixed exponential decay $w_k = \gamma^{k - k_{\mathrm{err}}}$ starting from the first erroneous tool call at step $k_{\mathrm{err}}$, with $\gamma$ fixed as 0.9; (1.3)~\textit{w/ Mask After Wrong}, which zeros out the OPD signal for all steps after the first tool-call error; and (1.4)~\textit{w/o Weight Clipping}, which removes the upper-bound clipping $\delta$ in~\cref{eq:adaptive_weight}. For \textit{Training Objective}, we ablate the two terms in \cref{eq:training_object}: (2.1)~\textit{w/o GRPO}, which removes $\mathcal{L}_{\mathrm{GRPO}}$; and (2.2)~\textit{w/o Step-wise OPD}, which removes $\mathcal{L}_{\mathrm{OPD}}^{\mathrm{step}}$. The results are reported in Table~\ref{tab:ablation}, from which we observe:

\begin{itemize}[leftmargin=*]

\item \textbf{Obs 3: The adaptive step-wise reweighting is critical, and neither static nor heuristic alternatives can substitute it.}
Replacing our adaptive reweighting mechanism with uniform weighting (variant 1.1) causes a notable drop to 34.70\% average, confirming that treating all steps equally exposes training to misleading supervision from corrupted states. Heuristic decay (variant 1.2) partially mitigates this issue (37.14\%) by down-weighting later steps, yet still falls behind because a fixed schedule cannot capture non-monotonic divergence patterns where the student recovers alignment after earlier errors. Hard masking (variant 1.3) performs worst (31.85\%), as it discards all subsequent supervision after a single error, wasting informative signals from partially correct trajectories and preventing the student from recovering. Removing weight clipping (variant 1.4) yields 38.10\%, indicating that unbounded amplification introduces training instability. 

\item \textbf{Obs 4: Both the GRPO and step-wise OPD components are indispensable, serving complementary roles.}
Removing GRPO (variant 2.1) reduces the average to 40.78\%, showing that outcome-level rewards remain valuable for steering exploration with dense distillation. Removing step-wise OPD (variant 2.2) causes a far more severe drop to 25.39\%, 
which confirms that sparse rewards alone are insufficient for stable TIR learning in small models. The pronounced asymmetry reveals their complementary nature: step-wise OPD provides the primary fine-grained guidance, while GRPO broadens trajectory-level exploration beyond the teacher distribution.

\end{itemize}

\subsection{Scalability and Generalization}

To evaluate the generalization of \ourmodel~to different teacher models, we conduct experiments using an additional Qwen3 teacher (14B) to distill students at both the 0.6B and 1.7B scales. The results are visualized in Figure~\ref{rq3}, from which we draw the following observation:

\begin{itemize}[leftmargin=*]
\item \textbf{Obs 5: \ourmodel~consistently benefits from stronger teachers, whereas OPD suffers from the increased capacity gap.}
An interesting finding emerges when comparing teacher choices under OPD: on the 0.6B student, switching from the 4B to the 14B teacher actually \emph{degrades} average accuracy by a notable margin. This reveals that a larger student-teacher capacity gap amplifies distribution mismatch, causing uniform distillation to propagate increasingly unreliable supervision. In contrast, \ourmodel~with the 14B teacher consistently outperforms its 4B counterpart on both student scales, confirming that our step-wise reweighting effectively harnesses the richer knowledge from teachers while suppressing the harmful signals introduced by the wider gap.

\begin{figure*}[t]
\centering
\includegraphics[width=1.0\linewidth]{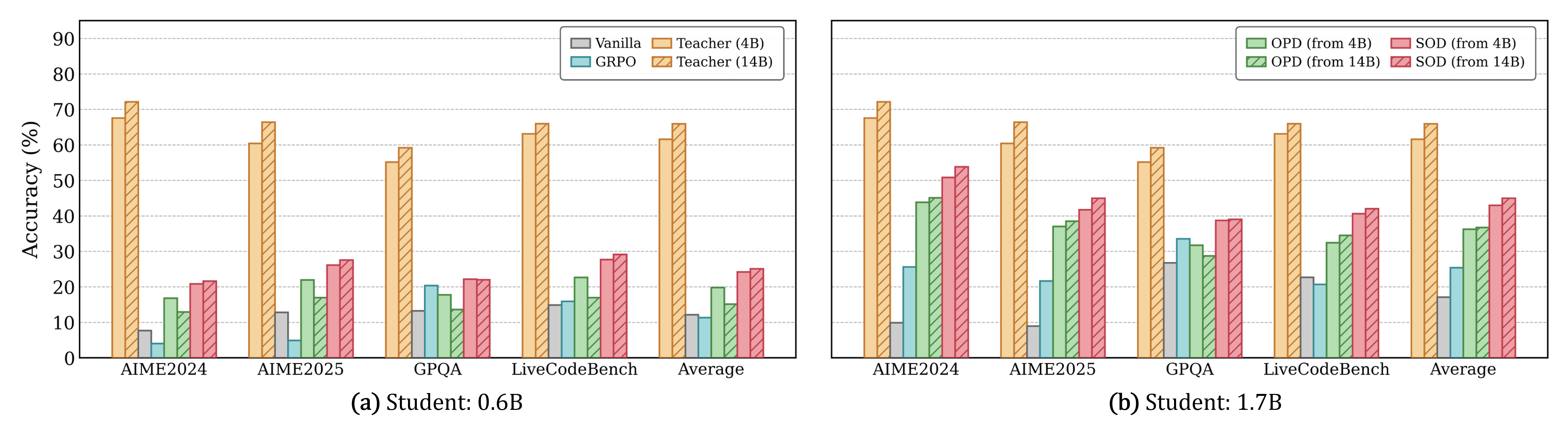}
\caption{Scalability of \ourmodel~across different student-teacher configurations.}
\label{rq3}
\end{figure*}

\end{itemize}

\begin{figure*}[h]
\centering
\includegraphics[width=1.0\linewidth]{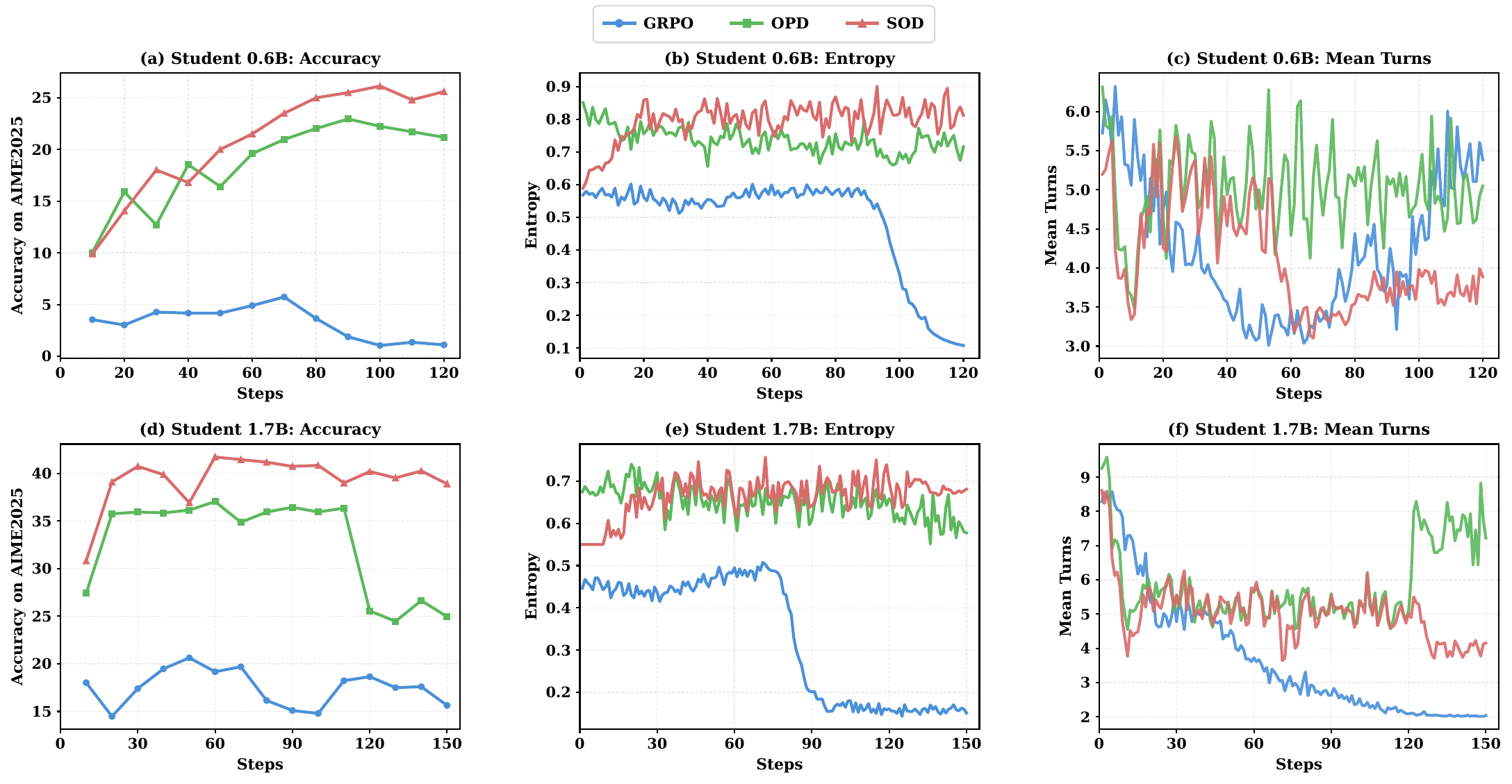}
\caption{\textbf{Training dynamics across methods on 0.6B and 1.7B student models}. We track accuracy on AIME2025 (left), policy entropy (middle), and mean tool-calling turns (right) throughout training.}
\label{rq4}
\end{figure*}

\subsection{Dynamic training analysis}

To understand how \ourmodel shapes the learning dynamics of agentic reasoning, we monitor three key metrics throughout training: task accuracy, policy entropy, and mean tool-calling turns. As shown in Figure~\ref{rq4}, we highlight the following observations:

\begin{itemize}[leftmargin=*]

\item \textbf{Obs 6: \ourmodel consistently outperforms GRPO by a large margin.} As shown in Figure~\ref{rq4}(a)(d), \ourmodel achieves significantly higher accuracy than GRPO across both model scales. This gap is driven by GRPO's severe entropy collapse (Figure~\ref{rq4}(b)(e)), where the policy loses exploration ability and degenerates into repetitive outputs, ultimately abandoning multi-step tool interactions entirely (Figure~\ref{rq4}(c)(f)), a fatal failure mode for TIR tasks.

\item \textbf{Obs 7: \ourmodel achieves more stable training than OPD.} 
While OPD maintains comparable entropy levels, its accuracy exhibits notable instability, particularly on 1.7B where performance peaks early then degrades significantly (Figure~\ref{rq4}(d)), suggesting that uniform distillation can destabilize later training. 
In contrast, \ourmodel sustains steady improvement without degradation. Additionally, \ourmodel uses fewer tool-calling turns than OPD (Figure~\ref{rq4}(c)(f)), indicating more efficient reasoning with fewer erroneous intermediate steps.

\end{itemize}

\subsection{Three Distillation Patterns of \ourmodel}
\label{rq5_body}
During training, \ourmodel exhibits three characteristic distillation patterns, as illustrated in Figure~\ref{rq5}:

\textbf{(1) Stable Pattern.} The student remains closely aligned with the teacher, \ie the step-level divergence $d_k$ is low. Therefore, adaptive weights stay high, allowing full utilization of teacher supervision along the trajectory. In this regime, the optimization closely resembles sufficient distillation.
\textbf{(2) Erroneous Pattern.} Persistent deviations increase step-level divergence, often caused by incorrect intermediate reasoning or tool usage. Adaptive weights are progressively reduced, which suppresses potentially misleading supervision and stabilizes training under such corrupted states.
\textbf{(3) Recovery Pattern.} The student initially deviates but then recovers. Weights are attenuated during high-divergence steps and restored later, enabling effective guidance on corrected steps and preserving useful supervision after recovery. This behavior allows the model to avoid over-penalizing temporary errors while still benefiting from supervision once alignment is re-established.

We also provide further analysis, \ie the quantitative analysis of distillation pattern distribution according training steps in~\cref{app:futher_casestudy}.
Detailed examples are provided in~\Cref{app:casestudy}.

\begin{figure*}[t]
\centering
\includegraphics[width=1.0\linewidth]{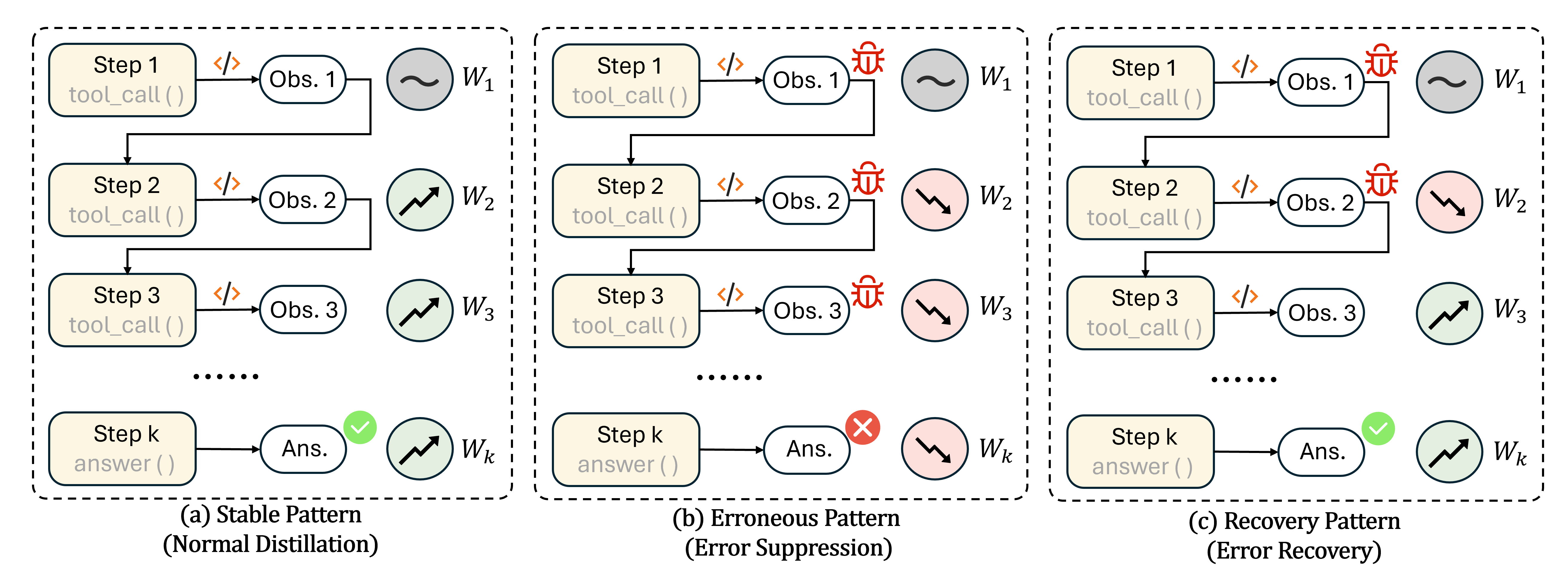}
\caption{\textbf{Three distillation patterns of \ourmodel}.
}
\vspace{-1 em}
\label{rq5}
\end{figure*}

\section{Limitations}
\label{sec:limitations}

Our work has two main limitations. (1) We focus on agentic TIR tasks with a python code interpreter, which provides a clean and reproducible execution environment that isolates tool-induced state drift without confounding factors from complex API configurations. Other agentic settings (\eg web browsing, API calls) may exhibit different drift patterns and are worth exploring. (2) All experiments use the Qwen3~\citep{qwen2025qwen3} model family, chosen for its strong and stable performance across scales, open availability at multiple sizes, and wide adoption in recent OPD studies~\citep{opsd,tip,eopd,li2026rethinking} that facilitates fair comparison. Validating on other model families remains valuable. Due to resource constraints, we leave these explorations to future work.

\section{Conclusion}\label{sec:conclusion}

In this work, we study on-policy distillation for small language model agents under tool-integrated reasoning. We identify that tool-induced state transitions can rapidly push the student into out-of-distribution regions where teacher supervision becomes unreliable. To address this, we propose \ourmodel, a step-wise on-policy distillation framework that adaptively reweights distillation strength based on student–teacher divergence, preserving informative signals in aligned regions while attenuating misleading guidance under large drift. Extensive experiments across math, science, and code benchmarks demonstrate consistent improvements in both training stability and final performance. Our findings suggest that effective agent distillation requires supervision mechanisms that adapt to evolving state distributions, offering a general principle for robust training in agentic systems.

\newpage

\bibliographystyle{unsrtnat}   %
\bibliography{reference}

@article{zou2025reasonflux-prm,
  title={ReasonFlux-PRM: Trajectory-Aware PRMs for Long Chain-of-Thought Reasoning in LLMs},
  author={Zou, Jiaru and Yang, Ling and Gu, Jingwen and Qiu, Jiahao and Shen, Ke and He, Jingrui and Wang, Mengdi},
  journal={The Thirty-ninth Annual Conference on Neural Information Processing Systems},
  year={2025}
}

@article{yu2025dapo,
  title={Dapo: An open-source llm reinforcement learning system at scale},
  author={Yu, Qiying and Zhang, Zheng and Zhu, Ruofei and Yuan, Yufeng and Zuo, Xiaochen and Yue, Yu and Dai, Weinan and Fan, Tiantian and Liu, Gaohong and Liu, Lingjun and others},
  journal={arXiv preprint arXiv:2503.14476},
  year={2025}
}

@article{he2025skywork-or1,
  title={Skywork open reasoner 1 technical report},
  author={He, Jujie and Liu, Jiacai and Liu, Chris Yuhao and Yan, Rui and Wang, Chaojie and Cheng, Peng and Zhang, Xiaoyu and Zhang, Fuxiang and Xu, Jiacheng and Shen, Wei and Li, Siyuan and Zeng, Liang and Wei, Tianwen and Cheng, Cheng and An, Bo and Liu, Yang and Zhou, Yahui},
  journal={arXiv preprint arXiv:2505.22312},
  year={2025}
}

@article{fan2025megascience,
  title={MegaScience: Pushing the Frontiers of Post-Training Datasets for Science Reasoning},
  author={Fan, Run-Ze and Wang, Zengzhi and Liu, Pengfei},
  journal={arXiv preprint arXiv:2507.16812},
  year={2025}
}

@article{demy,
  title={Demystifying reinforcement learning in agentic reasoning},
  author={Yu, Zhaochen and Yang, Ling and Zou, Jiaru and Yan, Shuicheng and Wang, Mengdi},
  journal={arXiv preprint arXiv:2510.11701},
  year={2025}
}

@inproceedings{rein2024gpqa,
  title={Gpqa: A graduate-level google-proof q\&a benchmark},
  author={Rein, David and Hou, Betty Li and Stickland, Asa Cooper and Petty, Jackson and Pang, Richard Yuanzhe and Dirani, Julien and Michael, Julian and Bowman, Samuel R},
  booktitle={First Conference on Language Modeling},
  year={2024}
}

@inproceedings{jainlivecodebench,
  title={LiveCodeBench: Holistic and Contamination Free Evaluation of Large Language Models for Code},
  author={Jain, Naman and Han, King and Gu, Alex and Li, Wen-Ding and Yan, Fanjia and Zhang, Tianjun and Wang, Sida and Solar-Lezama, Armando and Sen, Koushik and Stoica, Ion},
  booktitle={The Thirteenth International Conference on Learning Representations}
}

@article{feng2025retool,
  title={Retool: Reinforcement learning for strategic tool use in llms},
  author={Feng, Jiazhan and Huang, Shijue and Qu, Xingwei and Zhang, Ge and Qin, Yujia and Zhong, Baoquan and Jiang, Chengquan and Chi, Jinxin and Zhong, Wanjun},
  journal={arXiv preprint arXiv:2504.11536},
  year={2025}
}

@inproceedings{gu2024minillm,
  title={Minillm: Knowledge distillation of large language models},
  author={Gu, Yuxian and Dong, Li and Wei, Furu and Huang, Minlie},
  booktitle={The twelfth international conference on learning representations},
  year={2024}
}

@article{yang2026learning,
  title={Learning beyond teacher: Generalized on-policy distillation with reward extrapolation},
  author={Yang, Wenkai and Liu, Weijie and Xie, Ruobing and Yang, Kai and Yang, Saiyong and Lin, Yankai},
  journal={arXiv preprint arXiv:2602.12125},
  year={2026}
}

@article{li2026rethinking,
  title={Rethinking On-Policy Distillation of Large Language Models: Phenomenology, Mechanism, and Recipe},
  author={Li, Yaxuan and Zuo, Yuxin and He, Bingxiang and Zhang, Jinqian and Xiao, Chaojun and Qian, Cheng and Yu, Tianyu and Gao, Huan-ang and Yang, Wenkai and Liu, Zhiyuan and others},
  journal={arXiv preprint arXiv:2604.13016},
  year={2026}
}

@article{christiano2017deep,
  title={Deep reinforcement learning from human preferences},
  author={Christiano, Paul F and Leike, Jan and Brown, Tom and Martic, Miljan and Legg, Shane and Amodei, Dario},
  journal={Advances in neural information processing systems},
  volume={30},
  year={2017}
}

@article{ziegler2019fine,
  title={Fine-tuning language models from human preferences},
  author={Ziegler, Daniel M and Stiennon, Nisan and Wu, Jeffrey and Brown, Tom B and Radford, Alec and Amodei, Dario and Christiano, Paul and Irving, Geoffrey},
  journal={arXiv preprint arXiv:1909.08593},
  year={2019}
}

@article{chen2023fireact,
  title={Fireact: Toward language agent fine-tuning},
  author={Chen, Baian and Shu, Chang and Shareghi, Ehsan and Collier, Nigel and Narasimhan, Karthik and Yao, Shunyu},
  journal={arXiv preprint arXiv:2310.05915},
  year={2023}
}

@article{yang2024swe,
  title={Swe-agent: Agent-computer interfaces enable automated software engineering},
  author={Yang, John and Jimenez, Carlos E and Wettig, Alexander and Lieret, Kilian and Yao, Shunyu and Narasimhan, Karthik and Press, Ofir},
  journal={Advances in Neural Information Processing Systems},
  volume={37},
  pages={50528--50652},
  year={2024}
}

@article{bai2024digirl,
  title={Digirl: Training in-the-wild device-control agents with autonomous reinforcement learning},
  author={Bai, Hao and Zhou, Yifei and Cemri, Mert and Pan, Jiayi and Suhr, Alane and Levine, Sergey and Kumar, Aviral},
  journal={Advances in Neural Information Processing Systems},
  volume={37},
  pages={12461--12495},
  year={2024}
}

@article{qi2024webrl,
  title={Webrl: Training llm web agents via self-evolving online curriculum reinforcement learning},
  author={Qi, Zehan and Liu, Xiao and Iong, Iat Long and Lai, Hanyu and Sun, Xueqiao and Zhao, Wenyi and Yang, Yu and Yang, Xinyue and Sun, Jiadai and Yao, Shuntian and others},
  journal={arXiv preprint arXiv:2411.02337},
  year={2024}
}

@article{zhou2024archer,
  title={Archer: Training language model agents via hierarchical multi-turn rl, 2024},
  author={Zhou, Yifei and Zanette, Andrea and Pan, Jiayi and Levine, Sergey and Kumar, Aviral},
  journal={URL https://arxiv. org/abs/2402.19446},
  year={2024}
}

@article{chen2025reinforcement,
  title={Reinforcement learning for long-horizon interactive llm agents},
  author={Chen, Kevin and Cusumano-Towner, Marco and Huval, Brody and Petrenko, Aleksei and Hamburger, Jackson and Koltun, Vladlen and Kr{\"a}henb{\"u}hl, Philipp},
  journal={arXiv preprint arXiv:2502.01600},
  year={2025}
}

@article{wang2026rlanything,
  title={RLAnything: Forge Environment, Policy, and Reward Model in Completely Dynamic RL System},
  author={Wang, Yinjie and Xie, Tianbao and Shen, Ke and Wang, Mengdi and Yang, Ling},
  journal={arXiv preprint arXiv:2602.02488},
  year={2026}
}

@article{wang2025co,
  title={Co-evolving llm coder and unit tester via reinforcement learning},
  author={Wang, Yinjie and Yang, Ling and Tian, Ye and Shen, Ke and Wang, Mengdi},
  journal={arXiv preprint arXiv:2506.03136},
  year={2025}
}

@article{schulman2017proximal,
  title={Proximal policy optimization algorithms},
  author={Schulman, John and Wolski, Filip and Dhariwal, Prafulla and Radford, Alec and Klimov, Oleg},
  journal={arXiv preprint arXiv:1707.06347},
  year={2017}
}

@article{grpo,
  title={Deepseekmath: Pushing the limits of mathematical reasoning in open language models},
  author={Shao, Zhihong and Wang, Peiyi and Zhu, Qihao and Xu, Runxin and Song, Junxiao and Bi, Xiao and Zhang, Haowei and Zhang, Mingchuan and Li, YK and Wu, Yang and others},
  journal={arXiv preprint arXiv:2402.03300},
  year={2024}
}

@inproceedings{opd,
  title={On-policy distillation of language models: Learning from self-generated mistakes},
  author={Agarwal, Rishabh and Vieillard, Nino and Zhou, Yongchao and Stanczyk, Piotr and Garea, Sabela Ramos and Geist, Matthieu and Bachem, Olivier},
  booktitle={The twelfth international conference on learning representations},
  year={2024}
}

@article{opsd,
  title={Self-Distilled Reasoner: On-Policy Self-Distillation for Large Language Models},
  author={Zhao, Siyan and Xie, Zhihui and Liu, Mengchen and Huang, Jing and Pang, Guan and Chen, Feiyu and Grover, Aditya},
  journal={arXiv preprint arXiv:2601.18734},
  year={2026}
}

@article{xi2023rise,
  title={The Rise and Potential of Large Language Model Based Agents: A Survey},
  author={Xi, Zhiheng and Chen, Wenxiang and Guo, Xin and He, Wei and Ding, Yiwen and Hong, Boyang and Zhang, Ming and Wang, Junzhe and Jin, Senjie and Zhou, Enyu and others},
  journal={arXiv preprint arXiv:2309.07864},
  year={2023}
}

@inproceedings{react,
  title={ReAct: Synergizing Reasoning and Acting in Language Models},
  author={Yao, Shunyu and Zhao, Jeffrey and Yu, Dian and Du, Nan and Shafran, Izhak and Narasimhan, Karthik R. and Cao, Yuan},
  booktitle={ICLR},
  year={2023}
}

@inproceedings{schick2023toolformer,
  title={Toolformer: Language Models Can Teach Themselves to Use Tools},
  author={Schick, Timo and Dwivedi-Yu, Jane and Dess{\`\i}, Roberto and Raileanu, Roberta and Lomeli, Maria and Hambro, Eric and Zettlemoyer, Luke and Cancedda, Nicola and Scialom, Thomas},
  booktitle={NeurIPS},
  year={2023}
}

@article{singh2025agentic,
  title={Agentic Reasoning and Tool Integration for LLMs via Reinforcement Learning},
  author={Singh, Joykirat and Magazine, Raghav and Pandya, Yash and Nambi, Akshay},
  journal={arXiv preprint arXiv:2505.01441},
  year={2025}
}

@article{xu2024ondevice,
  title={On-Device Language Models: A Comprehensive Review},
  author={Xu, Jiajun and Li, Zhiyuan and Chen, Wei and Wang, Qun and Gao, Xin and Cai, Qi and Ling, Ziyuan},
  journal={arXiv preprint arXiv:2409.00088},
  year={2024}
}

@article{xu2024kd_llm_survey,
  title={A Survey on Knowledge Distillation of Large Language Models},
  author={Xu, Xiaohan and Li, Ming and Tao, Chongyang and Shen, Tao and Cheng, Reynold and Li, Jinyang and Xu, Can and Tao, Dacheng and Zhou, Tianyi},
  journal={arXiv preprint arXiv:2402.13116},
  year={2024}
}

@article{li2025torl,
  title={ToRL: Scaling Tool-Integrated RL},
  author={Li, Xuefeng and Zou, Haoyang and Liu, Pengfei},
  journal={arXiv preprint arXiv:2503.23383},
  year={2025}
}

@article{qian2025toolrl,
  title={ToolRL: Reward is All Tool Learning Needs},
  author={Qian, Cheng and Acikgoz, Emre Can and He, Qi and Wang, Hongru and Chen, Xiusi and Hakkani-T{\"u}r, Dilek and Tur, Gokhan and Ji, Heng},
  journal={arXiv preprint arXiv:2504.13958},
  year={2025}
}

@article{rainone2025replacing,
  title={Replacing thinking with tool usage enables reasoning in small language models},
  author={Rainone, Corrado and Bakker, Tim and Memisevic, Roland},
  journal={arXiv preprint arXiv:2507.05065},
  year={2025}
}

@article{deepseek2025r1,
  title={DeepSeek-R1: Incentivizing Reasoning Capability in LLMs via Reinforcement Learning},
  author={DeepSeek-AI},
  journal={arXiv preprint arXiv:2501.12948},
  year={2025}
}

@article{jin2025searchr1,
  title={Search-R1: Training LLMs to Reason and Leverage Search Engines with Reinforcement Learning},
  author={Jin, Bowen and Zeng, Hansi and Yue, Zhenrui and Wang, Dong and Zamani, Hamed and Han, Jiawei},
  journal={arXiv preprint arXiv:2503.09516},
  year={2025}
}

@article{song2025r1searcher,
  title={R1-Searcher: Incentivizing the Search Capability in LLMs via Reinforcement Learning},
  author={Song, Huatong and Jiang, Jinhao and Min, Yingqian and Chen, Jie and Chen, Zhipeng and Zhao, Wayne Xin and Fang, Lei and Wen, Ji-Rong},
  journal={arXiv preprint arXiv:2503.05592},
  year={2025}
}

@article{kepo,
  title={KEPO: Knowledge-Enhanced Preference Optimization for Reinforcement Learning with Reasoning},
  author={Yang, Fan and Meng, Rui and Di Qi, Trudi and Ezzati, Ali and Wen, Yuxin},
  journal={arXiv preprint arXiv:2602.00400},
  year={2026}
}

@article{qwen2025qwen3,
  title={Qwen3 Technical Report},
  author={Qwen Team},
  journal={arXiv preprint arXiv:2505.09388},
  year={2025}
}

@article{xue2025simpletir,
  title={Simpletir: End-to-end reinforcement learning for multi-turn tool-integrated reasoning},
  author={Xue, Zhenghai and Zheng, Longtao and Liu, Qian and Li, Yingru and Zheng, Xiaosen and Ma, Zejun and An, Bo},
  journal={arXiv preprint arXiv:2509.02479},
  year={2025}
}

@inproceedings{xu2026rlkd,
  title={RLKD: Distilling LLMs’ Reasoning via Reinforcement Learning},
  author={Xu, Shicheng and Pang, Liang and Zhu, Yunchang and Gu, Jia and Wei, Zihao and Deng, Jingcheng and Pan, Feiyang and Shen, Huawei and Cheng, Xueqi},
  booktitle={Proceedings of the AAAI Conference on Artificial Intelligence},
  volume={40},
  number={40},
  pages={34151--34159},
  year={2026}
}

@article{fu2026revisiting,
  title={Revisiting On-Policy Distillation: Empirical Failure Modes and Simple Fixes},
  author={Fu, Yuqian and Huang, Haohuan and Jiang, Kaiwen and Zhu, Yuanheng and Zhao, Dongbin},
  journal={arXiv preprint arXiv:2603.25562},
  year={2026}
}

@inproceedings{ross2011reduction,
  title={A Reduction of Imitation Learning and Structured Prediction to No-Regret Online Learning},
  author={Ross, St{\'e}phane and Gordon, Geoffrey J. and Bagnell, Drew},
  booktitle={AISTATS},
  year={2011}
}

@article{gudibande2024false,
  title={The False Promise of Imitating Proprietary LLMs},
  author={Gudibande, Arnav and Wallace, Eric and Snell, Charlie and Geng, Xinyang and Liu, Hao and Abbeel, Pieter and Levine, Sergey and Song, Dawn},
  journal={arXiv preprint arXiv:2305.15717},
  year={2023}
}

@article{opdsurvey,
  title={A Survey of On-Policy Distillation for Large Language Models},
  author={Song, Mingyang and Zheng, Mao},
  journal={arXiv preprint arXiv:2604.00626},
  year={2026}
}

@article{bousselham2025vold,
  title={VOLD: Reasoning Transfer from LLMs to Vision-Language Models via On-Policy Distillation},
  author={Bousselham, Walid and Kuehne, Hilde and Schmid, Cordelia},
  journal={arXiv preprint arXiv:2510.23497},
  year={2025}
}

@article{srpo,
  title={Unifying Group-Relative and Self-Distillation Policy Optimization via Sample Routing},
  author={Li, Gengsheng and Yang, Tianyu and Fang, Junfeng and Song, Mingyang and Zheng, Mao and Guo, Haiyun and Zhang, Dan and Wang, Jinqiao and Chua, Tat-Seng},
  journal={arXiv preprint arXiv:2604.02288},
  year={2026}
}

@article{sdrlvr,
  title={Self-Distilled RLVR},
  author={Yang, Chenxu and Qin, Chuanyu and Si, Qingyi and Chen, Minghui and Gu, Naibin and Yao, Dingyu and Lin, Zheng and Wang, Weiping and Wang, Jiaqi and Duan, Nan},
  journal={arXiv preprint arXiv:2604.03128},
  year={2026}
}

@article{skillsd,
  title={Skill-SD: Skill-Conditioned Self-Distillation for Multi-turn LLM Agents},
  author={Wang, Hao and Wang, Guozhi and Xiao, Han and Zhou, Yufeng and Pan, Yue and Wang, Jichao and Xu, Ke and Wen, Yafei and Ruan, Xiaohu and Chen, Xiaoxin and others},
  journal={arXiv preprint arXiv:2604.10674},
  year={2026}
}

@article{wang2026openclaw,
  title={Openclaw-rl: Train any agent simply by talking},
  author={Wang, Yinjie and Chen, Xuyang and Jin, Xiaolong and Wang, Mengdi and Yang, Ling},
  journal={arXiv preprint arXiv:2603.10165},
  year={2026}
}

@inproceedings{verl,
  title={Hybridflow: A flexible and efficient rlhf framework},
  author={Sheng, Guangming and Zhang, Chi and Ye, Zilingfeng and Wu, Xibin and Zhang, Wang and Zhang, Ru and Peng, Yanghua and Lin, Haibin and Wu, Chuan},
  booktitle={Proceedings of the Twentieth European Conference on Computer Systems},
  pages={1279--1297},
  year={2025}
}

@article{opcd,
  title={On-policy context distillation for language models},
  author={Ye, Tianzhu and Dong, Li and Wu, Xun and Huang, Shaohan and Wei, Furu},
  journal={arXiv preprint arXiv:2602.12275},
  year={2026}
}

@article{gad,
  title={Black-Box On-Policy Distillation of Large Language Models},
  author={Ye, Tianzhu and Dong, Li and Chi, Zewen and Wu, Xun and Huang, Shaohan and Wei, Furu},
  journal={arXiv preprint arXiv:2511.10643},
  year={2025}
}

@article{veto,
  title={Stable On-Policy Distillation through Adaptive Target Reformulation},
  author={Jang, Ijun and Yeom, Jewon and Yeo, Juan and Lim, Hyunggu and Kim, Taesup},
  journal={arXiv preprint arXiv:2601.07155},
  year={2026}
}

@article{opsd2,
  title={Privileged Information Distillation for Language Models},
  author={Penaloza, Emiliano and Vattikonda, Dheeraj and Gontier, Nicolas and Lacoste, Alexandre and Charlin, Laurent and Caccia, Massimo},
  journal={arXiv preprint arXiv:2602.04942},
  year={2026}
}

@article{sdpo,
  title={Reinforcement Learning via Self-Distillation},
  author={H{\"u}botter, Jonas and L{\"u}beck, Frederike and Behric, Lejs and Baumann, Anton and Bagatella, Marco and Marta, Daniel and Hakimi, Ido and Shenfeld, Idan and Buening, Thomas Kleine and Guestrin, Carlos and others},
  journal={arXiv preprint arXiv:2601.20802},
  year={2026}
}

@article{sdft,
  title={Self-Distillation Enables Continual Learning},
  author={Shenfeld, Idan and Damani, Mehul and H{\"u}botter, Jonas and Agrawal, Pulkit},
  journal={arXiv preprint arXiv:2601.19897},
  year={2026}
}

@article{coa,
  title={Chain-of-agents: End-to-end agent foundation models via multi-agent distillation and agentic rl},
  author={Li, Weizhen and Lin, Jianbo and Jiang, Zhuosong and Cao, Jingyi and Liu, Xinpeng and Zhang, Jiayu and Huang, Zhenqiang and Chen, Qianben and Sun, Weichen and Wang, Qiexiang and others},
  journal={arXiv preprint arXiv:2508.13167},
  year={2025}
}

@article{AgentDistillation,
  title={Distilling llm agent into small models with retrieval and code tools},
  author={Kang, Minki and Jeong, Jongwon and Lee, Seanie and Cho, Jaewoong and Hwang, Sung Ju},
  journal={arXiv preprint arXiv:2505.17612},
  year={2025}
}

@article{Agentdistill,
  title={Agentdistill: Training-free agent distillation with generalizable mcp boxes},
  author={Qiu, Jiahao and Juan, Xinzhe and Wang, Yimin and Yang, Ling and Qi, Xuan and Zhang, Tongcheng and Guo, Jiacheng and Lu, Yifu and Yao, Zixin and Wang, Hongru and others},
  journal={arXiv preprint arXiv:2506.14728},
  year={2025}
}

@article{sad,
  title={Structured agent distillation for large language model},
  author={Liu, Jun and Kong, Zhenglun and Dong, Peiyan and Yang, Changdi and Li, Tianqi and Tang, Hao and Yuan, Geng and Niu, Wei and Zhang, Wenbin and Zhao, Pu and others},
  journal={arXiv preprint arXiv:2505.13820},
  year={2025}
}

@article{oresearcher,
  title={O-Researcher: An Open Ended Deep Research Model via Multi-Agent Distillation and Agentic RL},
  author={Yao, Yi and Zhu, He and Wang, Piaohong and Ren, Jincheng and Yang, Xinlong and Chen, Qianben and Li, Xiaowan and Shi, Dingfeng and Li, Jiaxian and Wang, Qiexiang and others},
  journal={arXiv preprint arXiv:2601.03743},
  year={2026}
}

@article{eopd,
  title={Entropy-Aware On-Policy Distillation of Language Models},
  author={Jin, Woogyeol and Min, Taywon and Yang, Yongjin and Kadhe, Swanand Ravindra and Zhou, Yi and Wei, Dennis and Baracaldo, Nathalie and Lee, Kimin},
  journal={arXiv preprint arXiv:2603.07079},
  year={2026}
}

@article{ropd,
  title={Scaling reasoning efficiently via relaxed on-policy distillation},
  author={Ko, Jongwoo and Abdali, Sara and Kim, Young Jin and Chen, Tianyi and Cameron, Pashmina},
  journal={arXiv preprint arXiv:2603.11137},
  year={2026}
}

@article{lightOPD,
  title={Lightning OPD: Efficient Post-Training for Large Reasoning Models with Offline On-Policy Distillation},
  author={Wu, Yecheng and Han, Song and Cai, Hai},
  journal={arXiv preprint arXiv:2604.13010},
  year={2026}
}

@article{hybridOPD,
  title={Hybrid Policy Distillation for LLMs},
  author={Zhu, Wenhong and Xie, Ruobing and Wang, Rui and Liu, Pengfei},
  journal={arXiv preprint arXiv:2604.20244},
  year={2026}
}

@article{scope,
  title={SCOPE: Signal-Calibrated On-Policy Distillation Enhancement with Dual-Path Adaptive Weighting},
  author={Zheng, Binbin and Ma, Xing and Liang, Yiheng and Ruan, Jingqing and Fu, Xiaoliang and Lin, Kepeng and Zhu, Benchang and Zeng, Ke and Cai, Xunliang},
  journal={arXiv preprint arXiv:2604.10688},
  year={2026}
}

@article{tip,
  title={TIP: Token Importance in On-Policy Distillation},
  author={Xu, Yuanda and Sang, Hejian and Zhou, Zhengze and He, Ran and Wang, Zhipeng and Geramifard, Alborz},
  journal={arXiv preprint arXiv:2604.14084},
  year={2026}
}

@article{dqchen,
  title={Self-Distillation Zero: Self-Revision Turns Binary Rewards into Dense Supervision},
  author={He, Yinghui and Kaur, Simran and Bhaskar, Adithya and Yang, Yongjin and Liu, Jiarui and Ri, Narutatsu and Fowl, Liam and Panigrahi, Abhishek and Chen, Danqi and Arora, Sanjeev},
  journal={arXiv preprint arXiv:2604.12002},
  year={2026}
}

@inproceedings{muennighoff2025s1,
  title={s1: Simple test-time scaling},
  author={Muennighoff, Niklas and Yang, Zitong and Shi, Weijia and Li, Xiang Lisa and Fei-Fei, Li and Hajishirzi, Hannaneh and Zettlemoyer, Luke and Liang, Percy and Cand{\`e}s, Emmanuel and Hashimoto, Tatsunori B},
  booktitle={Proceedings of the 2025 Conference on Empirical Methods in Natural Language Processing},
  pages={20286--20332},
  year={2025}
}

@article{xia2025leetcodedataset,
  title={Leetcodedataset: A temporal dataset for robust evaluation and efficient training of code llms},
  author={Xia, Yunhui and Shen, Wei and Wang, Yan and Liu, Jason Klein and Sun, Huifeng and Wu, Siyue and Hu, Jian and Xu, Xiaolong},
  journal={arXiv preprint arXiv:2504.14655},
  year={2025}
}

@article{wang2026tcod,
  title={TCOD: Exploring Temporal Curriculum in On-Policy Distillation for Multi-turn Autonomous Agents},
  author={Wang, Jiaqi and Zhang, Wenhao and Shi, Weijie and Li, Yaliang and Cheng, James},
  journal={arXiv preprint arXiv:2604.24005},
  year={2026}
}

@article{chen2026soda,
  title={SODA: Semi On-Policy Black-Box Distillation for Large Language Models},
  author={Chen, Xiwen and Wang, Jingjing and Zhu, Wenhui and Qiu, Peijie and Dong, Xuanzhao and Sang, Hejian and Wang, Zhipeng and Geramifard, Alborz and Luo, Feng},
  journal={arXiv preprint arXiv:2604.03873},
  year={2026}
}

@inproceedings{chenglin2024mixed,
  title={Mixed distillation helps smaller language models reason better},
  author={Chenglin, Li and Chen, Qianglong and Li, Liangyue and Wang, Caiyu and Tao, Feng and Li, Yicheng and Chen, Zulong and Zhang, Yin},
  booktitle={Findings of the Association for Computational Linguistics: EMNLP 2024},
  pages={1673--1690},
  year={2024}
}

@article{fan2026agentprocessbench,
  title={AgentProcessBench: Diagnosing Step-Level Process Quality in Tool-Using Agents},
  author={Fan, Shengda and Ye, Xuyan and Huo, Yupeng and Chen, Zhi-Yuan and Guo, Yiju and Yang, Shenzhi and Yang, Wenkai and Ye, Shuqi and Chen, Jingwen and Chen, Haotian and others},
  journal={arXiv preprint arXiv:2603.14465},
  year={2026}
}

@article{lu2025onpolicydistillation,
  author = {Kevin Lu and Thinking Machines Lab},
  title = {On-Policy Distillation},
  journal = {Thinking Machines Lab: Connectionism},
  year = {2025},
  note = {https://thinkingmachines.ai/blog/on-policy-distillation},
  doi = {10.64434/tml.20251026},
}

\newpage
\appendix

\crefalias{section}{appendix}
\crefalias{subsection}{appendix}
\crefalias{subsubsection}{appendix}

\section*{Appendix}
\addcontentsline{toc}{section}{Appendix}
\section{Algorithmic Details of \ourmodel}
\label{app:algorithm}

We present the complete training procedure of \ourmodel{} in Algorithm~\ref{alg:step_opd}. At each iteration, the student generates on-policy trajectories with tool interactions, computes step-level divergence against the teacher, derives adaptive distillation weights, and updates the policy with a combined GRPO and step-wise OPD objective.

\begin{algorithm}[h]
\caption{\ourmodel: Step-wise On-policy Distillation for Small Language Model Agents}
\label{alg:step_opd}
\KwIn{Student policy $\pi_\theta$, teacher policy $\pi_{\mathrm{teacher}}$, tool environment $\mathcal{E}$, prompt set $\mathcal{X}$, group size $G$, smoothing constant $\epsilon$, weight cap $\delta$, clipping range $\epsilon_{\mathrm{clip}}$.}
\KwOut{Trained student policy $\pi_\theta$.}

\vspace{0.5 em}
\BlankLine
\textbf{Stage I: On-Policy Rollout with Tool Interaction} \\
\For{each prompt $x \in \mathcal{X}$}{
    \begin{itemize}[nosep,leftmargin=1.5em]
        \item For $i = 1, \ldots, G$: sample trajectory $\tau_i = (y_1, o_1, \ldots, y_K, o_K, y_{K+1})$ from $\pi_{\theta_{\mathrm{old}}}$ by interacting with $\mathcal{E}$.
        \item Compute outcome reward $r_i = r(\tau_i)$ for each trajectory.
        \item Compute group-relative advantage $\hat{A}_i$ via~\cref{eq:advantage}.
    \end{itemize}
}
\vspace{0.5 em}
\BlankLine
\textbf{Stage II: Student-teacher Divergence and Step-wise Reweighting} \\
\For{each trajectory $\tau_i$}{
    \begin{itemize}[nosep,leftmargin=1.5em]
        \item Partition model-generated tokens into reasoning steps $\{\mathcal{I}_k\}_{k=1}^{K+1}$ (excluding tool observation tokens).
        \item For each step $k = 1, \ldots, K{+}1$, compute step-level divergence $d_k$ via~\cref{eq:step_div}.
        \item Set $w_1 \leftarrow 1$.
        \item For $k = 2, \ldots, K{+}1$, compute adaptive weight $w_k$ via~\cref{eq:adaptive_weight}.
    \end{itemize}
}
\BlankLine
\vspace{0.5 em}
\textbf{Stage III: Combined Optimization} 
\begin{itemize}[nosep,leftmargin=1.5em]
    \item Compute GRPO loss $\mathcal{L}_{\mathrm{GRPO}}$ with clipped surrogate objective (~\cref{eq:grpo}) using advantages $\{\hat{A}_i\}$.
    \item Compute step-wise OPD loss $\mathcal{L}_{\mathrm{OPD}}^{\mathrm{step}}$ via~\cref{eq:step_opd} using weights $\{w_k\}$.
    \item Update policy parameters: \\[2pt]
        $\theta \leftarrow \theta - \eta \nabla_\theta \big( \mathcal{L}_{\mathrm{GRPO}} + \mathcal{L}_{\mathrm{OPD}}^{\mathrm{step}} \big)$.
    \item Synchronize old policy: $\theta_{\mathrm{old}} \leftarrow \theta$.
\end{itemize}
\end{algorithm}

\section{Experimental Setup}
\label{app:experimental_setup}

\subsection{Training Datasets.}
\label{app:datasets}
We provide information in detail of our training datasets including SFT dataset and RL dataset, which is following~\citet{demy}. For the prompts template for different benchmarks, please refer to~\cref{dataset_prompt}. We also provide statistical information of our training data in~\cref{tab:training_datasets}.

\begin{itemize}

\item \textbf{SFT dataset.} The SFT dataset is constructed from a mixture of curated and synthetic multi-turn problem-solving trajectories. Specifically, a teacher model (Qwen3-Coder-30B-A3B) is employed within an agent-based framework where the SandBoxFusion is used as the code interpreter to generate end-to-end interaction traces, where problems are drawn from three complementary sources: the s1-1k set~\citep{muennighoff2025s1}, a curated collection of 3k LeetCode problems, and a 2k multi-turn ReTool~\citep{feng2025retool} dataset. To ensure data quality, the generated trajectories for the latter two subsets are scored using ReasonFlux-PRM~\citep{zou2025reasonflux-prm}, and only the top-ranked subsets (1k each) are retained, together with the full s1-1k set, resulting in a final 3k high-quality SFT corpus.
\item \textbf{RL dataset.} The RL dataset is designed to emphasize diversity across domains in order to study its impact on training dynamics. It comprises 30k samples aggregated from multiple sources, including 17k mathematical reasoning problems from DAPO-Math~\citep{yu2025dapo}, a mixture of 4,902 math and 3,586 code tasks from Skywork-or1~\citep{he2025skywork-or1}, and an additional 3k science problems from MegaScience~\citep{fan2025megascience}.

\end{itemize}

\begin{table*}[!t]
\centering
\caption{
Summary of the training datasets used in our experiments.
}
\small
\setlength{\tabcolsep}{6pt}
\renewcommand{\arraystretch}{1.15}
\begin{tabular}{l l l c}
\toprule
\textbf{Dataset Type} & \textbf{Domain} & \textbf{Source} & \textbf{\# Samples} \\
\midrule

\multirow{3}{*}{\textbf{SFT Dataset}}
& Mathematical %
& s1-1k~\citep{muennighoff2025s1} 
& 1k \\

& Coding 
& LeetCode~\citep{xia2025leetcodedataset}
& 1k \\

& Tool-use %
& ReTool~\citep{feng2025retool} 
& 1k \\

\cmidrule(lr){2-4}
& \multicolumn{2}{l}{\textbf{Total}} 
& \textbf{3k} \\

\midrule

\multirow{3}{*}{\textbf{RL Dataset}}
& \multirow{2}{*}{Mathematical %
}
& DAPO-Math~\citep{yu2025dapo} 
& 17k \\

& 
& Skywork-or1 (Math)~\citep{he2025skywork-or1} 
& 4,902 \\

& Coding 
& Skywork-or1 (Code)~\citep{he2025skywork-or1} 
& 3,586 \\

& Scientific %
& MegaScience~\citep{fan2025megascience} 
& 3k \\

\cmidrule(lr){2-4}
& \multicolumn{2}{l}{\textbf{Total}} 
& $\sim \textbf{30k}$ \\

\bottomrule
\end{tabular}
\label{tab:training_datasets}
\end{table*}

\subsection{Benchmarks}
\label{app:benchmarks}
We provide detailed descriptions of the four benchmarks used in our evaluation.

\begin{itemize}

\item \textbf{AIME 2024/2025.} The American Invitational Mathematics Examination (AIME) is a prestigious mathematics competition administered by the Mathematical Association of America (MAA). Each year's examination consists of two sessions (AIME I and AIME II), each containing 15 problems, for a total of 30 problems per year. All answers are integers in the range $[0, 999]$, which enables unambiguous automatic evaluation via exact match. The problems span advanced topics including algebra, number theory, combinatorics, and geometry, requiring multi-step reasoning chains and creative problem-solving strategies that go well beyond pattern matching. We use the 2024 and 2025 editions as benchmarks: AIME 2024 was administered in February 2024, and AIME 2025 in February 2025. The use of recent competition problems mitigates the risk of data contamination, as these problems are unlikely to appear in pre-training corpora of models with earlier knowledge cutoffs. 

\item \textbf{GPQA-Diamond}~\citep{rein2024gpqa}. GPQA (Graduate-Level Google-Proof Q\&A) is a challenging multiple-choice question-answering benchmark consisting of questions authored by domain experts holding PhDs in biology, physics, and chemistry. The questions are deliberately designed to be ``Google-proof'', they cannot be answered through simple web searches, requiring instead genuine domain expertise and multi-step scientific reasoning. The dataset comprises three nested subsets: GPQA Extended (546 questions), GPQA Main (448 questions), and GPQA Diamond (198 questions). We adopt the Diamond subset, which is the highest-quality partition: it contains only questions where both independent domain expert validators answered correctly, while skilled non-expert validators (holding PhDs in other scientific fields, with unrestricted internet access) failed. Human performance baselines on the Diamond subset are approximately 65\% for domain experts and 34\% for non-experts, compared to a random baseline of 25\%. 

\item \textbf{LiveCodeBench}~\citep{jainlivecodebench}. LiveCodeBench is a continuously updated benchmark for evaluating the coding capabilities of large language models. Unlike static benchmarks such as HumanEval, LiveCodeBench mitigates data contamination by continuously collecting new problems from competitive programming platforms (LeetCode, AtCoder, and Codeforces) and annotating each problem with its release date. This temporal annotation enables contamination-aware evaluation by restricting the test set to problems released after a model's training data cutoff. We use the v6 release, which contains 1,055 problems spanning May 2023 to April 2025. Following common practice, we evaluate on problems within a recent time window to ensure minimal overlap with training data. The benchmark assesses multiple code-related capabilities including code generation, self-repair (debugging given execution feedback), code execution prediction, and test output prediction. 

\end{itemize}

\subsection{Baselines}
\label{app:baselines}

We compare \ourmodel with a diverse set of baselines spanning supervised learning, reinforcement learning, and distillation-based approaches. These baselines are chosen to reflect different sources of supervision (offline vs. on-policy), as well as different granularities of learning signals (sequence-level vs. token-level).

\begin{itemize}

\item \textbf{Vanilla.}
The base model without any additional task-specific training. This baseline reflects the zero-shot or instruction-tuned capability of the model.

\item \textbf{SFT (Supervised Fine-Tuning).}
A standard supervised learning baseline where the model is trained to maximize the likelihood of reference solutions. This approach corresponds to off-policy imitation learning, where the model learns from fixed expert trajectories. While simple and effective, SFT suffers from exposure bias due to the mismatch between training (on ground-truth prefixes) and inference (on model-generated prefixes).

\item \textbf{GRPO (Group Relative Policy Optimization)}~\citep{grpo}.
A reinforcement learning approach that optimizes the model using relative performance within a group of sampled responses. For each input, multiple candidate outputs are generated, and each is assigned a scalar reward based on task-specific correctness. Instead of learning a separate value function, GRPO computes advantages by normalizing rewards within the group, effectively measuring how much better or worse a response is compared to its peers. The model is then updated using a policy optimization objective with clipping to ensure stable training. While GRPO avoids the need for a learned critic and is relatively efficient, it relies on sparse, sequence-level rewards, assigning the same advantage to all tokens in a response and thus lacking fine-grained token-level supervision.

\item \textbf{OPD (On-Policy Distillation)}~\citep{opd}.
An on-policy distillation method where the model learns from its own generated trajectories while receiving dense token-level supervision from a teacher distribution. At each decoding step, the objective minimizes the divergence between the student and teacher token distributions along the student’s rollout. This approach mitigates exposure bias and provides richer supervision compared to RL, but relies on an external teacher policy.
Following recent findings that combining OPD with GRPO stabilizes training and improves performance~\citep{srpo,sdrlvr,bousselham2025vold,wang2026openclaw,opd,skillsd}, our OPD baseline adopts the joint objective $\mathcal{L} = \mathcal{L}_{\mathrm{GRPO}} + \mathcal{L}_{\mathrm{OPD}}$ (see~\cref{eq:training_object} for the objective of \ourmodel). We additionally report results for naive OPD (without GRPO) in~\cref{app:naive_opd}.

\item \textbf{OPSD$_{\mathrm{gt}}$ (On-Policy Self-Distillation with Ground Truth)}~\citep{opsd}.
A self-distillation variant where the same model plays both teacher and student roles. The student generates responses conditioned only on the input, while the teacher is additionally conditioned on the full ground-truth solution, which is treated as privileged information. The training objective minimizes the token-level divergence between the teacher and student distributions along the student’s trajectory. This setting provides strong supervision but assumes access to complete reference solutions.

\item \textbf{OPSD$_{\mathrm{hint}}$ (On-Policy Self-Distillation with Hints)}~\citep{opsd}.
A more practical variant of OPSD where the privileged information is replaced with a hint derived from the ground-truth solution. Instead of directly exposing the full solution, the teacher conditions on a compressed or partial guidance signal (\ie a hint), which provides weaker but more realistic supervision. This setup evaluates whether partial information can effectively guide self-distillation. Please refer to~\cref{opsd_hint} for the specific prompt in generating hints.

\end{itemize}

\subsection{Implementation Details.}
\label{app:implentation_details}

All experiments are conducted using the VeRL framework~\citep{verl} and Open-AgentRL framework~\citep{demy}. To ensure a fair comparison, all methods share the same base training infrastructure, optimizer, and core hyperparameters unless otherwise noted.
To promote reproducibility and better understanding the details of our implementation, we provide \textbf{an anonymous code link} at the end of the abstract.

\begin{itemize}

    \item  \textbf{Compute Resources.}
All experiments are conducted on a single node with 8 NVIDIA H20 GPUs (96GB memory each). For RL and distillation-based methods, training runs for 1 epoch and typically takes 2-3 days for 0.6B and 1.7B student models (see~\cref{tab:cost}), while training with larger 4B and 14B teacher models requires approximately 5-6 days. 
In contrast, supervised fine-tuning (SFT) is significantly more efficient, with 5 training epochs typically completed within a few hours under the same hardware setup. 
To ensure robustness and statistical reliability, all experiments are repeated over 5 independent runs with different random seeds. These details provide sufficient information about hardware configuration, execution time, and experimental protocol for reproducibility.

    \item  \textbf{Supervised Fine-Tuning (SFT).}
We fine-tune the base Qwen3-0.6B and Qwen3-1.7B models on the 3K SFT corpus for 5 epochs with a global batch size of 128. We use the AdamW optimizer with a learning rate of 5e-5. The maximum sequence length is set to 32,768 tokens with right truncation.

    \item  \textbf{Baselines \& Teacher Models(RL \& Distillation).}
For all RL and distillation-based methods (GRPO, OPD, OPSD$_{\mathrm{gt}}$, OPSD$_{\mathrm{hint}}$) including the teacher models, we adopt a unified training configuration to isolate the effect of each algorithm. Specifically, we use the AdamW optimizer with a learning rate of 1e-6, a training batch size of 64, and a mini-batch size of 16. All RL \& Distillation baselines including \ourmodel are trained based on the SFT checkpoint. The maximum prompt length is set to 2,560 tokens and the maximum response length to 20,480 tokens. We sample 16 responses per prompt during training and 32 during validation. All methods are trained for at most 1 epoch (For the teacher models, we train for at most 2 epochs). The multi-turn agent interaction supports up to 16 tool-call turns. Rollout is performed asynchronously via vLLM with tensor parallelism of 4. For distillation-based methods (OPD and its variants), the teacher model (Qwen3-4B, further optimized with GRPO) provides token-level supervision along the student's on-policy rollouts.

    \item  \textbf{\ourmodel.}
\ourmodel adopts the same training configuration as the baselines described above. The only additional hyperparameters are those governing the step-wise adaptive weighting mechanism: the initial weight $w_1$ is fixed as 1, the numerical stability constant $\epsilon = 10^{-6}$ and the upper-bound offset $\delta = 0.2$ (allowing per-step weights $w_k \leq 1 + \delta$). These parameters control the adaptive distillation strength based on the per-step student--teacher divergence, enabling the model to automatically modulate learning intensity across reasoning steps.

\end{itemize}

\section{Additional Experiments}

\subsection{Training Cost and Practicality Analysis}
\label{app:cost}

A natural concern is whether the step-wise divergence computation in \ourmodel introduces substantial overhead relative to standard OPD. We address this by reporting wall-clock training metrics for all methods. All experiments are conducted on 8$\times$H20 GPUs with identical batch sizes and rollout configurations, trained for 120 steps (0.6B) or 150 steps (1.7B). Key observations are summarized as:

\begin{table}[h]
    \centering
    \caption{
    Training cost comparison. All runs use 8$\times$H20 96GB GPUs with identical configurations.}
    \vspace{0.4 em}
    \label{tab:cost}

    \begingroup
    \setlength{\tabcolsep}{4pt}
    \renewcommand{\arraystretch}{1.08}
    \normalsize

    \begin{adjustbox}{max width=0.76\textwidth}
    \begin{tabular}{llccc}
        \toprule
        \multirow{2}{*}{\textbf{Params}} 
        & \multirow{2}{*}{\textbf{Method}} 
        & \textbf{Avg Time/Step} 
        & \textbf{Peak Memory} 
        & \textbf{Total Time} \\
        & & (s) & (GB) & (h) \\

        \midrule
        \multirow{3}{*}{\textbf{0.6B}}
        & GRPO 
        & 680.6 & 78.5 & 22.7 \\
        & OPD 
        & 1090.5 & 87.7 & 36.4 \\
        & \ourmodel 
        & 1052.3 & 87.6 & 35.1 \\

        \midrule
        \multirow{3}{*}{\textbf{1.7B}}
        & GRPO 
        & 784.6 & 81.7 & 32.7 \\
        & OPD 
        & 1053.6 & 88.2 & 43.9 \\
        & \ourmodel 
        & 1105.4 & 88.7 & 46.1 \\

        \bottomrule
    \end{tabular}%
    \end{adjustbox}

    \endgroup
    \vspace{-10pt}
\end{table}

\paragraph{SOD vs.\ OPD: negligible algorithmic overhead.}
The step-wise divergence $d_k$ (\Cref{eq:step_div}) and adaptive weight $w_k$ (\cref{eq:adaptive_weight}) are computed from the student and teacher log-probabilities already available during the OPD forward pass. The additional computation is merely per-step averaging and $O(K)$ scalar multiplications ($K{\approx}3$--$7$), requiring no extra forward pass. Peak memory is equivalent ($<0.5$~GB difference). Notably, for the 0.6B model, \ourmodel is actually \textbf{3.5\% faster} per step than OPD (1052.3s vs.\ 1090.5s), because the adaptive reweighting suppresses learning from erroneous tool-call patterns, resulting in more efficient rollouts with fewer failed retries and shorter responses.
For the 1.7B model, \ourmodel shows a modest $+4.9\%$ overhead (1105.4s vs.\ 1053.6s), which is entirely acceptable given the significant performance gains (+18.50\% improvement at average) over OPD.

\paragraph{Why GRPO has lower per-step time.}
GRPO exhibits $\sim$1.4--1.5$\times$ lower per-step time than OPD-based methods. However, this is largely because GRPO's training collapses in later stages: as shown in~\cref{rq4}, the GRPO-trained student progressively loses the ability to perform effective tool calls, resulting in drastically shorter responses and fewer sandbox interactions in later training steps. This artificially deflates the average per-step time. In contrast, OPD and \ourmodel maintain long, structured reasoning trajectories with active tool usage throughout training, which naturally requires more generation time per step.

\subsection{Quantitative Analysis of Distillation Pattern Distribution}
\label{app:futher_casestudy}

\begin{figure*}[h]
\centering
\includegraphics[width=1.0\linewidth]{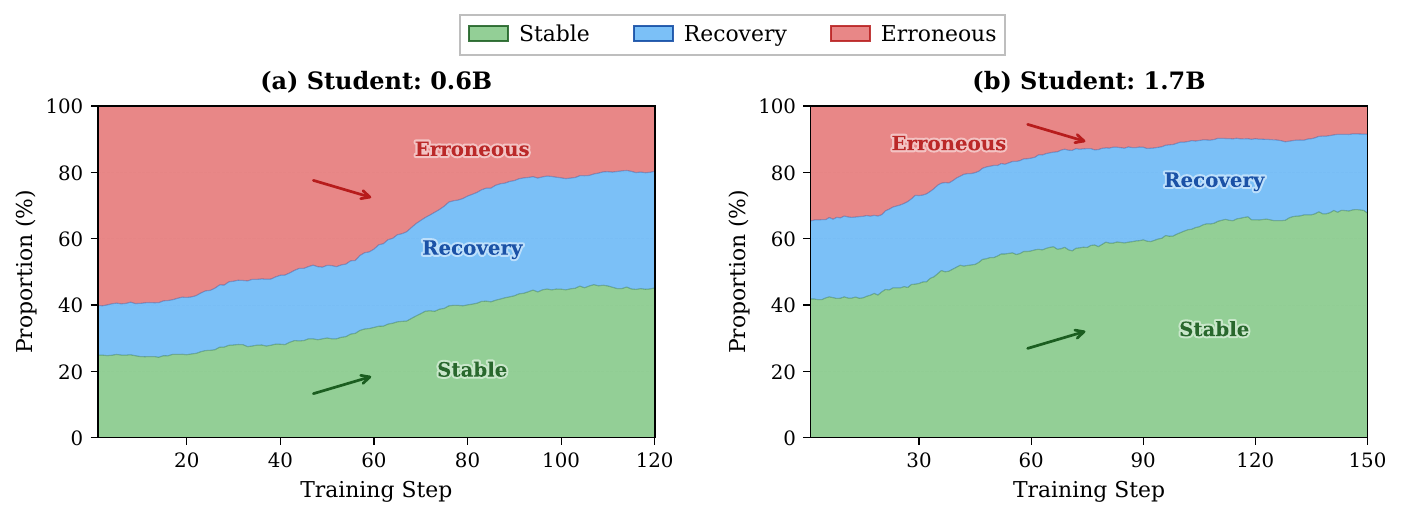}
\caption{Distribution of three distillation patterns over training steps. At each step, all rollout trajectories are classified into Stable, Recovery, or Erroneous based on their adaptive weight dynamics, and the proportion of each pattern is reported (smoothed with a 9-step moving average).}
\label{fig:pattern_dist}
\end{figure*}

To move beyond qualitative illustration and quantify how the three distillation patterns evolve during training, we conduct a systematic classification of all multi-step trajectories at each training step. Specifically, for every rollout sample at global step $t$, we record the full sequence of adaptive weights $\{w_k\}_{k=1}^{K}$ computed by \ourmodel during that trajectory. We then classify each trajectory into one of three dominant patterns based on the overall shape of the $w_k$ sequence:

\begin{itemize}[leftmargin=1.5em, itemsep=2pt]
    \item \textbf{Stable}: Adaptive weights remain consistently high throughout the trajectory, indicating that the student stays well-aligned with the teacher across all reasoning steps.
    \item \textbf{Erroneous}: Weights are progressively suppressed and remain low by the final step, indicating persistent divergence that the student fails to correct.
    \item \textbf{Recovery}: Weights drop during intermediate steps (reflecting temporary divergence) but recover by the final step, indicating that the student successfully re-aligns after an initial deviation.
\end{itemize}

\noindent We note that a fourth logical category exists, \ie trajectories that begin aligned but \emph{degrade} toward the end (\ie early weights are high but the final weight is low). In practice, we find this pattern accounts for a negligible fraction ($<$1\%) of all trajectories, as persistent degradation without any preceding divergence signal is rare under \ourmodel's cumulative reweighting mechanism. We therefore merge these cases into the erroneous category for simplicity.
At each training step, we classify all rollout samples and compute the proportion of each pattern. Results are smoothed with a 9-step moving average for visualization clarity. Figure~\ref{fig:pattern_dist} reports the distribution over the training trajectory for both model scales.

\paragraph{Key observations.}
\textbf{(1) Stable proportion increases over training.} For both 0.6B and 1.7B, the fraction of stable trajectories grows steadily, confirming that \ourmodel progressively improves student-teacher alignment as training proceeds.
\textbf{(2) Erroneous proportion decreases consistently.} The proportion of erroneous trajectories declines substantially over training. This indicates that, as training progresses under \ourmodel, the student model becomes increasingly capable: when encountering difficult reasoning steps, it no longer falls into consecutive errors but instead recovers or avoids mistakes altogether.
\textbf{(3) Recovery grows and becomes a persistent component.} Notably, the recovery proportion is initially low (especially for the 0.6B model) and \emph{increases} over training before stabilizing at a substantial level. This reveals a meaningful progression: early in training, the student lacks the capacity to recover from divergence; once it deviates, it tends to remain erroneous. As training proceeds, the student develops the ability to self-correct after intermediate missteps, converting what would have been erroneous trajectories into recovery ones. Even at the end of training, recovery trajectories account for a significant proportion of all samples, indicating that \ourmodel's adaptive reweighting mechanism remains \emph{continuously active}.
\textbf{(4) Model capacity affects convergence speed and pattern composition.} The 1.7B model reaches a higher stable proportion faster and drives erroneous to a lower level by the end of training, while maintaining a moderate recovery proportion throughout. In contrast, the 0.6B model starts with a much higher erroneous fraction and lower recovery fraction, reflecting its initially limited ability to self-correct. As training progresses, the 0.6B model gradually develops recovery capability, but converges more slowly overall. This suggests that \ourmodel's protective suppression of erroneous signals is particularly critical for smaller models during early training.

\subsection{The Performance of Naive On-policy Distillation}
\label{app:naive_opd}

In the main experiments, our OPD baseline combines the OPD loss with GRPO (\ie $\mathcal{L} = \mathcal{L}_{\mathrm{GRPO}} + \mathcal{L}_{\mathrm{OPD}}$) as illustrated in~\cref{app:baselines}. Here we report results for \textit{naive OPD}, which uses only the token-level distillation loss without any RL reward signal.

\begin{table}[h]
    \centering
    \caption{Comparison of naive OPD vs.\ OPD on the 1.7B student model.}
    \vspace{0.4em}
    \label{tab:naive_opd}
    \begingroup
    \setlength{\tabcolsep}{4pt}
    \renewcommand{\arraystretch}{1.08}
    \begin{adjustbox}{max width=0.85\textwidth}
    \begin{tabular}{lccccc}
        \toprule
        \multirow{2}{*}{\textbf{Method}}
        & \multicolumn{2}{c}{\textbf{Math}}
        & \textbf{Science}
        & \textbf{Code}
        & \multirow{2}{*}{\textbf{Average}} \\
        \cmidrule(lr){2-3} \cmidrule(lr){4-4} \cmidrule(lr){5-5}
        & AIME 2024 & AIME 2025 & GPQA & LiveCodeBench & \\
        \midrule
        naive OPD
        & 40.54\phantom{.}\scalebox{0.72}{$\pm$1.05} & 35.83\phantom{.}\scalebox{0.72}{$\pm$1.11} & 28.96\phantom{.}\scalebox{0.72}{$\pm$0.58} & 29.81\phantom{.}\scalebox{0.72}{$\pm$0.74} & 33.79 \\
        OPD 
        & 43.86\phantom{.}\scalebox{0.72}{$\pm$1.23} & 37.04\phantom{.}\scalebox{0.72}{$\pm$1.31} & 31.73\phantom{.}\scalebox{0.72}{$\pm$0.55} & 32.45\phantom{.}\scalebox{0.72}{$\pm$0.93} & 36.27 \\
        \midrule
        \ourmodel (w/o GRPO)
        & 48.87\phantom{.}\scalebox{0.72}{$\pm$0.93} & 39.73\phantom{.}\scalebox{0.72}{$\pm$1.02} & 35.89\phantom{.}\scalebox{0.72}{$\pm$0.62} & 38.62\phantom{.}\scalebox{0.72}{$\pm$0.71} & 40.78 \\
        \textbf{\ourmodel}
        & \textbf{50.83}\phantom{.}\scalebox{0.72}{$\pm$1.15} & \textbf{41.72}\phantom{.}\scalebox{0.72}{$\pm$1.24} & \textbf{38.72}\phantom{.}\scalebox{0.72}{$\pm$0.73} & \textbf{40.63}\phantom{.}\scalebox{0.72}{$\pm$0.91} & \textbf{42.98} \\
        \bottomrule
    \end{tabular}
    \end{adjustbox}
    \endgroup
\end{table}

As shown in~\cref{tab:naive_opd}, naive OPD underperforms the OPD baseline by 2.48 points on average (33.79 vs.\ 36.27). Without the outcome-level reward signal from GRPO, the student relies entirely on dense teacher supervision for learning. However, as analyzed in~\cref{sec:failure_uniform_opd},  OPD applies uniform distillation across all steps including those corrupted by tool-induced state drift. In the absence of GRPO's reward-driven exploration, the model lacks an independent signal to distinguish successful from failed trajectories, making it more susceptible to the compounding error problem.

More notably, comparing the two pure distillation variants, naive OPD (33.79) vs.\ \ourmodel w/o GRPO (40.78), reveals that our step-wise reweighting alone accounts for a +6.99 (+20.69\%) improvement without any RL reward signal. This confirms that the core benefit of \ourmodel stems from the adaptive step-wise reweighting mechanism rather than from the RL component. GRPO provides complementary gains (+2.20 on top of step-wise OPD), but the reweighting mechanism remains the primary driver of performance improvement. These results also suggest that step-wise reweighting and GRPO address orthogonal aspects of the training challenge: the former stabilizes dense supervision under state drift, while the latter provides sparse outcome-level exploration signals.

\section{Proofs for~\cref{sec:failure_uniform_opd}}
\label{app:proof_opd_failure}

We adopt the notation established in Section~\ref{sec:failure_uniform_opd}. For notational convenience in the derivations below, we write $p_t(\cdot) = \pi_\theta(\cdot \mid y_{<t})$ and $q_t(\cdot) = \pi_{\mathrm{teacher}}(\cdot \mid y_{<t})$ for the student and teacher next-token distributions conditioned on the student-generated prefix $y_{<t}$. The token-level OPD loss is $\ell_t = \log p_t(y_t) - \log q_t(y_t)$ with $y_t \sim p_t$. The step-level mismatch $\Delta_k$, the teacher-supported region $S_t^\epsilon$, and the overlap $\rho_t$ are as defined in the main text.

\subsection{Proof of Proposition~\ref{prop:discontinuous_drift}}

\begin{proof}
We analyze how the step-level mismatch $\Delta_k = \frac{1}{|\mathcal{I}_k|}\sum_{t \in \mathcal{I}_k} D_{\mathrm{KL}}(p_t \| q_t)$ evolves between consecutive steps in two settings.

\paragraph{Text-only case (gradual drift).}
In text-only generation, the prefix at position $t$ differs from position $t-1$ by exactly one student-sampled token. Define the per-token distributional shift as:
\begin{equation}
    \eta \triangleq \max_t \; \mathrm{TV}(p_t, p_{t-1}) = \max_t \; \frac{1}{2}\sum_{v \in \mathcal{V}}\left|p_t(v) - p_{t-1}(v)\right|,
\end{equation}
\ie the maximum total variation (TV) distance between consecutive output distributions. For well-trained transformers, $\eta$ is small because the self-attention mechanism distributes influence across many context positions, so appending a single token has limited impact on the output distribution. Since consecutive steps share a boundary of one token, the step-level mismatch changes by at most $O(\eta)$ per step transition.

\paragraph{TIR case (discontinuous drift).}
In TIR, the transition from step $k$ to step $k+1$ involves appending an entire tool observation $o_k = (o_k^1, \ldots, o_k^m)$ of length $m = |o_k|$ to the prefix:
\begin{equation}
    y_{<t_{k+1}^{\mathrm{start}}} = y_{<t_k^{\mathrm{end}}} \;\oplus\; o_k,
\end{equation}
where $\oplus$ denotes concatenation. Crucially, $o_k$ is produced by the external environment, not by either model. Let $\eta_{\mathrm{tool}}$ denote the average per-token distributional shift induced by observation tokens, measured by the TV distance between the model's output distribution before and after conditioning on each successive observation token.

\paragraph{Single-step bound.}
By the triangle inequality applied to the $m$ intermediate distributions between $p_{t_k^{\mathrm{end}}}$ and $p_{t_{k+1}^{\mathrm{start}}}$, the total variation shift satisfies:
\begin{equation}
    \mathrm{TV}\!\left(p_{t_{k+1}^{\mathrm{start}}},\; p_{t_k^{\mathrm{end}}}\right) \;\le\; m \cdot \eta_{\mathrm{tool}}.
\end{equation}
A similar bound holds for the teacher distribution. By Pinsker's inequality ($D_{\mathrm{KL}}(P\|Q) \ge 2\,\mathrm{TV}(P,Q)^2$), the change in step-level KL divergence is lower-bounded by the squared TV shift. Since tool observations typically span tens to hundreds of tokens ($m \gg 1$) and introduce content not generated by either model, we have $\eta_{\mathrm{tool}} > \eta$ in general, yielding:
\begin{equation}
    \Delta_{k+1} - \Delta_k \;=\; \Omega(m \cdot \eta_{\mathrm{tool}}) \;\ge\; O(\eta).
\end{equation}
Note that even a single erroneous tool call already introduces a substantially larger divergence jump than text-only drift ($\Omega(m \cdot \eta_{\mathrm{tool}})$ vs.\ $O(\eta)$), since the observation length $m \gg 1$ and the out-of-distribution content shifts conditioning more aggressively. Nevertheless, the teacher, having encountered some error patterns during training, can still provide partially useful supervision after an isolated failure. The critical issue arises from the \emph{cascading} nature of errors in weaker student models, as we analyze next.

\paragraph{Cascading errors and super-linear compounding.}
Weaker student models, precisely the targets of OPD, are prone to producing consecutive erroneous tool calls. A weaker student is more likely to generate an incorrect tool invocation at step $k$; conditioned on the resulting erroneous observation, its subsequent reasoning is further degraded, increasing the probability of another failure at step $k+1$. This creates a cascading failure pattern where each error compounds upon the previous ones.

The key insight is that while the teacher may reasonably handle a single isolated error in the context (having potentially seen similar error messages during training), the \emph{joint} occurrence of multiple consecutive failures creates a conditioning context that is combinatorially unlikely under the teacher's training distribution. If each individual error context has marginal probability $p_{\mathrm{err}}$ under the teacher's training distribution, the joint context of $j$ consecutive errors has probability at most $p_{\mathrm{err}}^j$, an exponential decay in familiarity. It is this accumulated, multi-error prefix that drives the teacher's conditional distribution far from calibration, making $\eta_{\mathrm{tool}}^{(i)}$ grow with $i$.

Formally, when the student makes consecutive erroneous tool calls across steps $k, k+1, \ldots, k+j-1$, the cumulative divergence satisfies:
\begin{equation}
    \Delta_{k+j} - \Delta_k \;=\; \Omega\!\left(\sum_{i=0}^{j-1} m_{k+i} \cdot \eta_{\mathrm{tool}}^{(k+i)}\right),
\end{equation}
where $m_{k+i}$ is the length of the $i$-th erroneous observation and $\eta_{\mathrm{tool}}^{(k+i)}$ is the corresponding per-token shift. Crucially, later errors induce progressively larger per-token shifts ($\eta_{\mathrm{tool}}^{(k+i+1)} \ge \eta_{\mathrm{tool}}^{(k+i)}$) because the prefix is already corrupted by prior errors, where the teacher has never been calibrated on such multi-error contexts, making its predictions increasingly unreliable. This yields super-linear growth of divergence with the number of consecutive tool failures, which is empirically confirmed in~\cref{motivation}(a) where the student-teacher divergence accelerates as errors accumulate.
\end{proof}

\subsection{Proof of Proposition~\ref{prop:variance_explosion}}

\begin{proof}
We prove the second-moment lower bound and then derive the SNR degradation.

\paragraph{Step 1: Decomposition of the second moment.}
The second moment of the OPD loss $\ell_t = \log p_t(y_t) - \log q_t(y_t)$ under $y_t \sim p_t$ decomposes over the vocabulary:
\begin{equation}
    \mathbb{E}_{y_t \sim p_t}[\ell_t^2]
    \;=\;
    \sum_{v \in \mathcal{V}} p_t(v) \left(\log \frac{p_t(v)}{q_t(v)}\right)^2.
\end{equation}

\paragraph{Step 2: Restricting to the low-overlap region.}
We partition the vocabulary into the teacher-supported region $S_t^\epsilon = \{v : q_t(v) \ge \epsilon\}$ and its complement $\bar{S}_t^\epsilon = \{v : q_t(v) < \epsilon\}$. By the overlap condition $\rho_t \le \rho$, the student places mass at least $1 - \rho$ on $\bar{S}_t^\epsilon$:
\begin{equation}
    \sum_{v \in \bar{S}_t^\epsilon} p_t(v) \;\ge\; 1 - \rho_t \;\ge\; 1 - \rho.
\end{equation}
Restricting the sum to $\bar{S}_t^\epsilon$ gives a lower bound:
\begin{equation}
    \mathbb{E}[\ell_t^2]
    \;\ge\;
    \sum_{v \in \bar{S}_t^\epsilon} p_t(v) \left(\log \frac{p_t(v)}{q_t(v)}\right)^2.
\end{equation}

\paragraph{Step 3: Applying Jensen's inequality.}
For any $v \in \bar{S}_t^\epsilon$, since $q_t(v) < \epsilon$, we have:
\begin{equation}
    \log \frac{p_t(v)}{q_t(v)} \;>\; \log p_t(v) + \log \frac{1}{\epsilon}.
\end{equation}
Define the conditional distribution $\tilde{p}(v) = p_t(v)/(1-\rho_t)$ over $\bar{S}_t^\epsilon$. By the convexity of $x \mapsto x^2$ and Jensen's inequality:
\begin{align}
    \mathbb{E}[\ell_t^2]
    &\;\ge\;
    (1-\rho_t) \sum_{v \in \bar{S}_t^\epsilon} \tilde{p}(v) \left(\log \frac{p_t(v)}{q_t(v)}\right)^2 \nonumber \\
    &\;\ge\;
    (1-\rho_t) \left(
    \sum_{v \in \bar{S}_t^\epsilon} \tilde{p}(v) \log \frac{p_t(v)}{q_t(v)}
    \right)^2 \nonumber \\
    &\;\ge\;
    (1-\rho_t) \left(
    \log \frac{1}{\epsilon} + \sum_{v \in \bar{S}_t^\epsilon} \tilde{p}(v) \log p_t(v)
    \right)^2.
\end{align}

\paragraph{Step 4: Bounding the entropy term.}
The term $\sum_{v \in \bar{S}_t^\epsilon} \tilde{p}(v) \log p_t(v) = -H_{\bar{S}}$, where $H_{\bar{S}}$ is the conditional entropy of the student restricted to $\bar{S}_t^\epsilon$, satisfying $0 \le H_{\bar{S}} \le \log|\mathcal{V}|$. For sufficiently small $\epsilon$ satisfying $\epsilon < 1/|\mathcal{V}|$ (a natural condition since the teacher threshold should be below the uniform baseline), we have $\log(1/\epsilon) > \log|\mathcal{V}| \ge H_{\bar{S}}$, so the $\log(1/\epsilon)$ term dominates:
\begin{equation}
    \mathbb{E}[\ell_t^2]
    \;\ge\;
    (1-\rho)\left(\log \frac{1}{\epsilon} - H_{\bar{S}}\right)^2
    \;\ge\;
    (1-\rho)\,\log^2\!\frac{1}{\epsilon} \cdot \left(1 - \frac{\log|\mathcal{V}|}{\log(1/\epsilon)}\right)^2,
\end{equation}
which grows as $\Omega(\log^2(1/\epsilon))$ when $\rho_t \to 0$.

\paragraph{Step 5: SNR degradation.}
Let $g_t = \ell_t \nabla_\theta \log \pi_\theta(y_t \mid y_{<t})$ be the token-level OPD gradient estimator. The expected gradient is $\mathbb{E}[g_t] = \nabla_\theta D_{\mathrm{KL}}(p_t \| q_t)$, which is bounded for any fixed distribution pair with finite KL divergence; let $C = \|\mathbb{E}[g_t]\|$.

Define $c_{\min} \triangleq \min_v \|\nabla_\theta \log \pi_\theta(v \mid y_{<t})\|^2 > 0$, which is positive for any non-degenerate parameterization. The variance of the gradient estimator satisfies:
\begin{equation}
    \mathrm{Var}[\|g_t\|]
    \;\ge\;
    \mathbb{E}[\ell_t^2] \cdot c_{\min} - C^2.
\end{equation}
Combining with the second-moment lower bound from Step 4:
\begin{equation}
    \mathrm{SNR}(g_t)
    \;=\;
    \frac{\|\mathbb{E}[g_t]\|}{\sqrt{\mathrm{Var}[g_t]}}
    \;\le\;
    \frac{C}{\sqrt{(1-\rho)\log^2(1/\epsilon) \cdot c_{\min} - C^2}}.
\end{equation}
As $\rho \to 0$, the denominator grows as $O(\log(1/\epsilon))$ while the numerator $C$ remains constant, giving $\mathrm{SNR}(g_t) \to 0$.

\paragraph{Interpretation.} When the student drifts far from the teacher-supported region ($\rho_t \approx 0$), most sampled tokens fall in $\bar{S}_t^\epsilon$ where the teacher assigns near-zero probability. The OPD loss for these tokens has large magnitude (due to $-\log q_t(y_t) > \log(1/\epsilon)$) but is uninformative. It reflects the teacher's inability to model these out-of-distribution states rather than a meaningful learning signal. Meanwhile, informative signals from $S_t^\epsilon$ (where teacher guidance is calibrated) are sampled only with probability $\rho_t$, making them increasingly rare. The gradient is thus dominated by high-variance, low-information contributions, rendering optimization unstable.
\end{proof}

\subsection{Discussion: Failure Cascade and Design Implications}
\label{app:discussion_objective}

The two propositions together characterize a failure cascade specific to TIR:
\begin{enumerate}
    \item \textbf{Initial perturbation}: An erroneous tool call returns a corrupted observation (\eg a runtime error, incorrect output, or timeout message). This already introduces a divergence jump substantially larger than text-only drift ($\Omega(m \cdot \eta_{\mathrm{tool}})$ vs.\ $O(\eta)$), though the teacher, having encountered some error patterns during pretraining, can still provide partially useful supervision after an isolated failure.
    
    \item \textbf{Cascading accumulation}: Weaker student models, precisely the targets of OPD, are prone to making consecutive errors. Each subsequent erroneous tool call further corrupts the prefix, and the \emph{joint} pattern of multiple consecutive failures becomes exponentially unlikely under the teacher's training distribution ($\sim p_{\mathrm{err}}^j$ for $j$ consecutive errors). It is this accumulated multi-error context, rather than any single error, that drives the teacher's conditional distribution far from calibration. The divergence thus compounds super-linearly (Proposition~\ref{prop:discontinuous_drift}), as empirically demonstrated by the accelerating divergence curve in~\cref{motivation}(a).
    
    \item \textbf{Supervision breakdown}: In the resulting low-overlap states ($\rho_t \approx 0$) caused by accumulated consecutive errors, the OPD gradient estimator suffers variance explosion and SNR degradation (Proposition~\ref{prop:variance_explosion}). Updates become dominated by uninformative, high-magnitude contributions from tokens where the teacher provides no meaningful guidance.~\cref{motivation}(b) confirms this empirically: the teacher's conditional entropy becomes both elevated and highly variable in steps following accumulated tool errors.
    
    \item \textbf{Amplification by uniform aggregation}: OPD sums token-level losses across all steps with equal weight, treating well-aligned early steps (high $\rho_t$, reliable teacher guidance) identically to corrupted post-error steps (low $\rho_t$, unreliable guidance). When cumulative tool errors cause a substantial portion of later steps to have low overlap, the aggregate gradient is systematically biased by these high-variance contributions.
\end{enumerate}

This analysis implies that a stabilizing objective should modulate distillation strength at the step level: preserving full-strength dense supervision in well-aligned steps while attenuating the signal when the estimated student-teacher divergence indicates teacher miscalibration. This is the design principle underlying our adaptive step-wise reweighting mechanism.

\subsection{Variance Reduction under Step-wise Reweighting}
\label{app:variance_reduction}

We now show that the step-wise reweighting mechanism in Eq.~\eqref{eq:adaptive_weight} provably addresses the gradient SNR degradation identified in Proposition~\ref{prop:variance_explosion}.

\begin{proposition}[Bounded variance under adaptive reweighting]
\label{prop:variance_reduction}
Under the step-wise weighted OPD objective (Eq.~\ref{eq:step_opd}), when the divergence increases monotonically across steps ($d_1 \le d_2 \le \cdots \le d_k$), the weighted gradient contribution from step $k$ satisfies:
\begin{equation}
    \mathbb{E}[w_k^2 \cdot \ell_t^2] \;\le\; \frac{(d_1 + \epsilon)^2}{(d_k + \epsilon)^2} \cdot \mathbb{E}[\ell_t^2],
\end{equation}
where $\mathbb{E}[\ell_t^2]$ is the unweighted second moment from Proposition~\ref{prop:variance_explosion}. Consequently, even when $\rho_t \to 0$ causes $\mathbb{E}[\ell_t^2] \to \infty$, the weighted contribution remains bounded whenever $d_k$ grows proportionally to the divergence.
\end{proposition}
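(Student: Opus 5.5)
The plan is to reduce everything to a deterministic bound on the weight $w_k$ and then pull it out of the expectation. The key observation is that the product in Eq.~\eqref{eq:adaptive_weight} telescopes:
\begin{equation}
    w_1 \prod_{u=1}^{k-1} \frac{d_u + \epsilon}{d_{u+1} + \epsilon} \;=\; \frac{d_1 + \epsilon}{d_k + \epsilon},
\end{equation}
using $w_1 = 1$. Hence
\begin{equation}
    w_k = \min\!\left(\frac{d_1+\epsilon}{d_k+\epsilon},\, 1+\delta\right) \;\le\; \frac{d_1+\epsilon}{d_k+\epsilon}.
\end{equation}
Under the monotonicity hypothesis $d_1 \le d_k$, this ratio is at most $1$. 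The cap $1+\delta$ is therefore never active, and in fact $w_k = (d_1+\epsilon)/(d_k+\epsilon) \in (0,1]$ exactly. Only $d_1 \le d_k$ is used here, so the full chain of monotone inequalities is more than is needed.

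Next I square this bound and multiply by $\ell_t^2$ for any token $t \in \mathcal{I}_k$, then take expectations. The subtlety is that $w_k$ is itself a random quantity: it depends on the sampled trajectory through the $d_u$, which are built from the same log-ratios as $\ell_t$. I would handle this in one of two ways. The first is to follow the practice of treating $w_k$ as a stop-gradient coefficient and condition on the realized divergence profile $(d_1,\dots,d_k)$. The bound then reads as a conditional statement, $\mathbb{E}[w_k^2\ell_t^2 \mid d_{1:k}] \le \frac{(d_1+\epsilon)^2}{(d_k+\epsilon)^2}\,\mathbb{E}[\ell_t^2 \mid d_{1:k}]$. The second is to note that the inequality $w_k^2\ell_t^2 \le \frac{(d_1+\epsilon)^2}{(d_k+\epsilon)^2}\ell_t^2$ holds pointwise, and then take expectations on the event where the profile is monotone. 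I would state the proposition in the conditional or pointwise sense, which matches how the right-hand side is written with $d_1, d_k$ outside the expectation. This interpretive step is where I expect the main obstacle: the statement is loose about what is random, and the proof should make explicit that the factor is fixed given the trajectory's divergence profile.

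For the \emph{consequently} clause, I would combine the bound with Proposition~\ref{prop:variance_explosion}. In the low-overlap regime, $\mathbb{E}[\ell_t^2]$ grows like $\log^2(1/\epsilon)$, and the step-level divergence tracks this growth. By Appendix~\ref{app:dk_proxy}, $d_k$ is monotonically consistent with $\Delta_k$, and for sampled tokens in $\bar S_t^\epsilon$ each $|\ell_t|$ exceeds roughly $\log(1/\epsilon) - H_{\bar S}$. So if $d_k \gtrsim c\,\sqrt{\mathbb{E}[\ell_t^2]}$ for some constant $c$ (the \emph{grows proportionally} hypothesis), then
\begin{equation}
    \mathbb{E}[w_k^2\ell_t^2] \;\le\; \frac{(d_1+\epsilon)^2}{c^2}.
\end{equation}
This bound is independent of $\rho_t$. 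I would make this proportionality an explicit assumption rather than try to derive it, since it is really a modeling condition.

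Finally, I would repeat the SNR computation of Step 5 of Proposition~\ref{prop:variance_explosion} with the weighted estimator $w_k g_t$. The variance is now bounded above by $c_{\max}(d_1+\epsilon)^2/c^2$, which shows that the weighted SNR is bounded away from zero whenever the weighted mean gradient is nonvanishing.
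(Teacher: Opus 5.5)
Your proof of the displayed inequality is correct and is the paper's argument: telescope the product to $(d_1+\epsilon)/(d_k+\epsilon)$, bound $w_k$ by it, square, and pull the factor out of the expectation.

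Two refinements are your own. First, you observe that the inequality needs no monotonicity: the $\min$ already gives $w_k \le (d_1+\epsilon)/(d_k+\epsilon)$, and monotonicity only shows the cap is inactive. The paper invokes $d_u \le d_{u+1}$ for the telescoping. Second, you make explicit that the ratio can be taken outside the expectation only conditionally on $d_{1:k}$. The paper does this silently, even though $d_k$ is built from the very $\ell_t$ being averaged.

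The substantive difference is the \emph{consequently} clause. The paper inserts the \emph{lower} bound $\mathbb{E}[\ell_t^2] \ge (1-\rho)\log^2(1/\epsilon)$ of Proposition~\ref{prop:variance_explosion} into the right-hand side of an \emph{upper} bound. That is not a valid step, and the paper then argues only informally that the denominator grows. Your explicit hypothesis $d_k + \epsilon \ge c\sqrt{\mathbb{E}[\ell_t^2]}$ is exactly what converts the inequality into the $\rho_t$-independent bound $(d_1+\epsilon)^2/c^2$. So your version is the one that actually proves the clause.

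In your pathwise reading you can even drop that hypothesis. Since $d_k$ is the average of the same $|\ell_s|$ over $s \in \mathcal{I}_k$, each $t \in \mathcal{I}_k$ satisfies $|\ell_t| \le |\mathcal{I}_k|\, d_k$. Hence $w_k|\ell_t| \le |\mathcal{I}_k|(d_1+\epsilon)$ on every trajectory. This costs a step-length factor, but it avoids any mismatch between your conditional moment $\mathbb{E}[\ell_t^2 \mid d_{1:k}]$ and the unconditional one in Proposition~\ref{prop:variance_explosion}.

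Your last paragraph goes beyond the statement, and its conclusion fails in the regime it targets. For a (conditionally) fixed $w_k$, the SNR of $w_k g_t$ equals that of $g_t$, so reweighting cannot repair a single step's SNR. Correspondingly, the lower bound you get, $\|w_k \mathbb{E}[g_t]\|/\sqrt{B}$ with $B = c_{\max}(d_1+\epsilon)^2/c^2$, carries the factor $w_k = (d_1+\epsilon)/(d_k+\epsilon)$. Since $\|\mathbb{E}[g_t]\|$ stays bounded as in Step~5, this bound tends to zero as $d_k$ grows. Your hypothesis that the weighted mean is nonvanishing is exactly what breaks.

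Any SNR gain has to be argued for the aggregate gradient $\sum_k w_k \sum_{t \in \mathcal{I}_k} g_t$. There, suppressing the noisy late steps relative to the aligned early ones improves the ratio, and that is how the paper frames its SNR-recovery remark.
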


\begin{proof}
By the definition of $w_k$ in Eq.~\eqref{eq:adaptive_weight}:
\begin{equation}
    w_k = \min\!\left(\prod_{u=1}^{k-1} \frac{d_u + \epsilon}{d_{u+1} + \epsilon},\; 1+\delta\right).
\end{equation}
Under monotonically increasing divergence ($d_1 \le d_2 \le \cdots \le d_k$), each ratio $\frac{d_u + \epsilon}{d_{u+1} + \epsilon} \le 1$, and the product telescopes:
\begin{equation}
    w_k \;\le\; \prod_{u=1}^{k-1} \frac{d_u + \epsilon}{d_{u+1} + \epsilon} \;=\; \frac{d_1 + \epsilon}{d_k + \epsilon}.
\end{equation}
The weighted second moment of the OPD loss at token $t \in \mathcal{I}_k$ is therefore:
\begin{equation}
    \mathbb{E}[w_k^2 \cdot \ell_t^2] \;\le\; \left(\frac{d_1 + \epsilon}{d_k + \epsilon}\right)^2 \cdot \mathbb{E}[\ell_t^2].
\end{equation}

From Proposition~\ref{prop:variance_explosion}, when the student drifts into low-overlap states, $\mathbb{E}[\ell_t^2] \ge (1-\rho)\log^2(1/\epsilon_{\mathrm{teacher}})$. However, such drift also implies that $d_k$ increases (since $d_k$ is monotonically related to $\Delta_k$, as shown in Appendix~\ref{app:dk_proxy}). Specifically, when $\rho_t \to 0$, the per-token log-probability gaps grow, causing $d_k \gg d_1$. The weighted contribution thus satisfies:
\begin{equation}
    \mathbb{E}[w_k^2 \cdot \ell_t^2] \;\le\; \frac{(d_1 + \epsilon)^2}{(d_k + \epsilon)^2} \cdot (1-\rho)\log^2(1/\epsilon_{\mathrm{teacher}}).
\end{equation}

The key insight is that the numerator $(d_1 + \epsilon)^2$ is bounded (determined by the initial step's divergence), while the denominator $(d_k + \epsilon)^2$ grows with the accumulated drift. This creates an automatic variance suppression: the more the student diverges (larger $d_k$, larger $\mathbb{E}[\ell_t^2]$), the more aggressively the weight $w_k$ attenuates the contribution, preventing the variance explosion that afflicts OPD.

\paragraph{SNR recovery.}
For the weighted gradient estimator $\tilde{g}_t = w_k \cdot \ell_t \cdot \nabla_\theta \log \pi_\theta(y_t \mid y_{<t})$, the signal (expected gradient from well-aligned early steps where $w_k \approx 1$) remains intact, while the noise from high-divergence later steps is suppressed by the factor $(d_1 + \epsilon)^2/(d_k + \epsilon)^2$. This restores a positive SNR for the aggregate gradient, in contrast to OPD where the SNR degrades to zero (Proposition~\ref{prop:variance_explosion}).
\end{proof}

\paragraph{Stability under recovery.}
The above analysis addresses the erroneous case where divergence increases monotonically. In the recovery case ($d_{u+1} < d_u$ for some $u$), the weight $w_k$ may exceed 1, potentially amplifying the gradient contribution. However, the hard upper bound $w_k \le 1 + \delta$ in Eq.~\eqref{eq:adaptive_weight} ensures that the weighted second moment is always bounded by $(1+\delta)^2 \cdot \mathbb{E}[\ell_t^2]$. Since recovery steps by definition have decreasing $d_k$ (implying the student is returning to the teacher-supported region, \ie $\rho_t$ is increasing), the unweighted $\mathbb{E}[\ell_t^2]$ itself is decreasing in these steps. The combination of bounded amplification and decreasing base variance ensures optimization stability.

\paragraph{Interpretation.}
This result formalizes the intuition behind our method: OPD fails because it applies uniform weight to all steps, allowing high-variance contributions from corrupted post-error steps to dominate the gradient. Our step-wise reweighting automatically detects these corrupted steps (via increasing $d_k$) and attenuates their influence proportionally, ensuring that the gradient signal remains dominated by informative contributions from well-aligned steps. The upper bound $1+\delta$ further ensures that the recovery mechanism does not over-amplify any single step, maintaining optimization stability throughout training.

\subsection{Justification of $d_k$ as a Proxy for $\Delta_k$}
\label{app:dk_proxy}

Computing the full KL divergence $\Delta_k$ requires evaluating the teacher's output distribution over the entire vocabulary at each token position, which introduces substantial overhead. In contrast, $d_k$ (Eq.~\ref{eq:step_div}) only requires the teacher's log-probability on the student-sampled token $y_t$, a quantity already computed in the standard OPD forward pass.

\paragraph{Monotonic consistency.}
By Jensen's inequality:
\begin{equation}
    d_k = \frac{1}{|\mathcal{I}_k|} \sum_{t \in \mathcal{I}_k} \left| \log \frac{\pi_\theta(y_t \mid y_{<t})}{\pi_{\mathrm{teacher}}(y_t \mid y_{<t})} \right|
    \;\ge\;
    \left| \frac{1}{|\mathcal{I}_k|} \sum_{t \in \mathcal{I}_k} \log \frac{\pi_\theta(y_t \mid y_{<t})}{\pi_{\mathrm{teacher}}(y_t \mid y_{<t})} \right|,
\end{equation}
where the right-hand side is the absolute value of a single-sample Monte Carlo estimate of $\Delta_k$. When the student drifts further from the teacher (increasing $\Delta_k$), the per-token log-probability gaps increase in expectation, yielding a larger $d_k$. This monotonic relationship ensures that the ratios $d_u / d_{u+1}$ used in Eq.~\eqref{eq:adaptive_weight} correctly reflect the relative trend of divergence across steps.

\paragraph{Sufficiency of ordering.}
The weight formula in Eq.~\eqref{eq:adaptive_weight} is a product of ratios $\frac{d_u + \epsilon}{d_{u+1} + \epsilon}$, depending only on the relative magnitudes between consecutive steps. Any monotone transformation of $\Delta_k$ preserves these ratios' direction (above or below one), producing equivalent attenuation and recovery behavior. Since $d_k$ maintains the ordering of $\Delta_k$, it is a sufficient statistic for our reweighting mechanism.

\section{Prompt Template}

\subsection{Datasets Prompt Template}

\label{dataset_prompt}

Our dataset contains various question types, including mathematics, programming, and scientific problems, each with different answer formats. To ensure consistent model output and facilitate simultaneous reasoning and tool usage, we have developed specialized prompts for each task type. Below, we present these prompts following~\citet{demy}.

\begin{tcolorbox}[
colframe=gray!30,
colback=gray!5,
coltitle=black!100,
title=\textit{\textbf{Prompt Template for verifiable Math Problems}}
]
\setstretch{0.93}

\rule{1.0\textwidth}{0.4pt}
\hrule
\vspace{0.3ex}
\hrule

\begin{center}
    \textbf{\large Math Problems}
\end{center}
\vspace{-0.8em}
\rule{1.0\textwidth}{0.4pt}

Analyze and solve the following [math/science domain] problem step by step.

\rule{1.0\textwidth}{0.4pt}

\textbf{Problem:} \textcolor{violet!90}{\textbf{[Insert problem text here]}}

\rule{1.0\textwidth}{0.4pt}

\textbf{Hint:} \textcolor{blue!80!black}{\textbf{The tool could be used}} for more precise and efficient calculations and could help you to verify your result before you reach the final answer.

\rule{1.0\textwidth}{0.4pt}

\textbf{Note:} You should first analyze the problem and form a high-level solution strategy, then utilize the tools to help you solve the problem.

\rule{1.0\textwidth}{0.4pt}

\textbf{Answer Format:} Do not put units of the final answer inside \textbackslash{}boxed\{\}. The content of \textbackslash{}boxed\{\} should be the numerical value of the final answer only, without any units.

Remember once you make sure the current answer is your final answer, do not call the tools again and directly output the final answer in the following text format, the answer format must be: \textcolor{red!50!black}{\textbf{\texttt{\textbackslash{}boxed\{'The final answer goes here.'\}}}}.

\end{tcolorbox}

\begin{tcolorbox}[
colframe=gray!30,
colback=gray!5,
coltitle=black!100,
title=\textit{\textbf{Prompt Template for Scientific QA Problems}}
]
\setstretch{0.93}

\rule{1.0\textwidth}{0.4pt}
\hrule
\vspace{0.3ex}
\hrule

\begin{center}
    \textbf{\large Scientific QA Problems}
\end{center}
\vspace{-0.8em}
\rule{1.0\textwidth}{0.4pt}

Analyze and solve the following [science domain] problem step by step.

\rule{1.0\textwidth}{0.4pt}

\textbf{Problem:} \textcolor{violet!90}{\textbf{[Insert problem text here]}}

\rule{1.0\textwidth}{0.4pt}

\textbf{Hint:} \textcolor{blue!80!black}{\textbf{The tool could be used}} for more precise and efficient calculations and could help you to verify your result before you reach the final answer.

\rule{1.0\textwidth}{0.4pt}

\textbf{Note:} You should first analyze the problem and form a high-level solution strategy, then utilize the tools to help you solve the problem.

\rule{1.0\textwidth}{0.4pt}

\textbf{Answer Format:} Remember once you make sure the current answer is your final answer, do not call the tools again and directly output the final answer in the following text format, the answer format must be: \textcolor{red!50!black}{\textbf{\texttt{\textbackslash{}boxed\{'The final answer goes here.'\}}}}. You need to put the final uppercase letter option of this problem into \textbackslash{}boxed\{\}.

\end{tcolorbox}

\begin{tcolorbox}[
colframe=gray!30,
colback=gray!5,
coltitle=black!100,
title=\textit{\textbf{Prompt Template for Code Problems}}
]
\setstretch{0.93}

\rule{1.0\textwidth}{0.4pt}
\hrule
\vspace{0.3ex}
\hrule

\begin{center}
    \textbf{\large Code Problems}
\end{center}
\vspace{-0.8em}
\rule{1.0\textwidth}{0.4pt}

You will be given a question (problem specification) and will generate a correct Python program that matches the specification and passes all tests.

\rule{1.0\textwidth}{0.4pt}

\textbf{Problem:} \textcolor{violet!90}{\textbf{[Insert problem text here]}}

\rule{1.0\textwidth}{0.4pt}

\textbf{Public Examples:} Here are some input and output examples of the expected code:

\textcolor{green!50!black}{\textbf{Input:}} [sample inputs]

\textcolor{green!50!black}{\textbf{Output:}} [sample outputs]

\rule{1.0\textwidth}{0.4pt}

\textbf{Note:} You should first analyze the problem and form a high-level solution strategy, then utilize the tools to help you solve the problem.

\rule{1.0\textwidth}{0.4pt}

\textbf{Instruction:} Read the inputs from stdin, solve the problem, and write the answer to stdout (do not directly test on the sample inputs). Enclose your code within the delimiters shown below. Ensure that when the Python program runs, it correctly reads inputs, executes the algorithm, and writes output to stdout.

\rule{1.0\textwidth}{0.4pt}

\textbf{Submit:} Before submitting your code, you can utilize tools to check its correctness. Once you make sure the current code is correct, do not call the tools again and submit your code within the following Python code block:

\begin{lstlisting}[language=Python]
# YOUR CODE HERE
\end{lstlisting}

\end{tcolorbox}

\subsection{Baseline Prompt Template}
\label{opsd_hint}
For the OPSD$_{\mathrm{hint}}$ baseline, we construct partial guidance signals by prompting a language model to distill the ground-truth solution into a concise hint. Given a problem and its full solution, the model is instructed to extract the key insight or critical intermediate step (\eg, a useful substitution, a relevant theorem, or a strategic decomposition) without revealing the final answer or complete derivation. The resulting hint is then injected into the teacher's input as a conditioning signal during on-policy self-distillation. The prompt template used for hint generation is shown below:
\begin{tcolorbox}[
colframe=gray!30,
colback=gray!5,
coltitle=black!100,
title=\textit{\textbf{Prompt Template for Hint Generation}}
]
\setstretch{0.93}

\rule{1.0\textwidth}{0.4pt}
\hrule
\vspace{0.3ex}
\hrule

\begin{center}
    \textbf{\large Hint Generation Prompt}
\end{center}
\vspace{-0.8em}
\rule{1.0\textwidth}{0.4pt}

You are given a problem and its ground-truth solution. Your task is to generate a concise hint that provides partial guidance toward solving the problem, \textbf{without} revealing the full solution or the final answer.

\rule{1.0\textwidth}{0.4pt}

\textbf{Problem:} \textcolor{violet!90}{\textbf{[Insert problem text here]}}

\rule{1.0\textwidth}{0.4pt}

\textbf{Ground-Truth Solution:} \textcolor{teal!80!black}{\textbf{[Insert full solution here]}}

\rule{1.0\textwidth}{0.4pt}

\textbf{Requirements for the hint:}
\begin{itemize}
    \item Identify the key insight or critical intermediate step that is most essential for solving the problem.
    \item Express it as a brief directional suggestion (1--2 sentences), such as a useful substitution, a relevant theorem, or a strategic decomposition.
    \item Do \textbf{not} include the final numerical answer or any complete derivation.
    \item Do \textbf{not} explicitly state which tool to use or how to use it.
\end{itemize}

\rule{1.0\textwidth}{0.4pt}

\textbf{Output Format:} Return only the hint text, without any preamble or explanation.

\end{tcolorbox}

\section{Visualization and Analysis of Token Entropy from The Teacher}
\label{app:H}

To provide intuitive evidence for the motivation behind \ourmodel's divergence-aware reweighting, we visualize the token-level conditional entropy of the teacher distribution $H(\pi_{\mathrm{teacher}}(\cdot \mid y_{<t}))$ along student-generated trajectories. This entropy directly reflects the teacher's confidence when providing supervision at each token position: low entropy indicates that the teacher assigns high probability mass to a single continuation (\ie confident and reliable guidance), while high entropy signals that the teacher is uncertain about the correct next token given the student's context, \ie a hallmark of out-of-distribution states where distillation becomes unreliable.

We present two representative cases below. Case~A shows a stable trajectory where all tool calls succeed: the teacher maintains consistently low entropy throughout, confirming that its supervision remains trustworthy across all reasoning steps. Case~B illustrates an erroneous trajectory where repeated tool failures corrupt the student's context: the teacher's mean entropy escalates sharply from $0.85$ to $2.14$ across steps, with over 78\% of tokens in the final step exceeding $H{=}1.0$. This progressive degradation of teacher confidence validates our core design principle, \ie uniformly distilling from such corrupted states would propagate fundamentally unreliable supervision, whereas \ourmodel's adaptive weights automatically attenuate the distillation signal in these high-entropy regions.

\begin{figure*}[p]  %
\centering

\definecolor{stableframe}{RGB}{34, 139, 34}    %
\definecolor{errframe}{RGB}{178, 34, 34}       %
\definecolor{lightgreen}{RGB}{220, 245, 220}
\definecolor{lightred}{RGB}{255, 235, 235}

\begin{center}
\small
\textbf{Teacher Conditional Entropy $H(\pi_{\mathrm{teacher}}(\cdot \mid y_{<t}))$ per Token} \\[3pt]
\colorbox{green!80}{\strut\scriptsize~$H{\approx}0$~}%
\colorbox{green!40}{\strut\phantom{xx}}%
\colorbox{green!10}{\strut\phantom{xx}}%
\colorbox{yellow!30}{\strut\phantom{xx}}%
\colorbox{orange!50}{\strut\phantom{xx}}%
\colorbox{red!50}{\strut\phantom{xx}}%
\colorbox{red!80}{\strut\phantom{xx}}%
\colorbox{red!100}{\strut\scriptsize~$H{\geq}3$~}
\quad {\footnotesize \textcolor{green!60!black}{$\blacksquare$ Confident} \quad \textcolor{red!70!black}{$\blacksquare$ Unreliable}}
\end{center}
\vspace{4pt}

\begin{tcolorbox}[
  colframe=stableframe, colback=lightgreen,
  title={\textbf{Case A: Stable Trajectory} --- All tool calls succeed},
  fonttitle=\bfseries\small,
  coltitle=white, colbacktitle=stableframe,
  boxrule=1pt, arc=3pt,
  left=4pt, right=4pt, top=4pt, bottom=4pt
]

\noindent\textbf{Question:} {\small Points $A(x_1, y_1)$ and $B(x_2, y_2)$ lie on the parabola $x^2 = 4y$ with $y_1 + y_2 = 2$ and $y_1 \neq y_2$. The perpendicular bisector of segment $AB$ intersects the $y$-axis at point $C$. Find the maximum area of triangle $ABC$.}

\smallskip
\noindent\textbf{Answer:} $\boxed{\dfrac{16\sqrt{6}}{9}}$

\medskip
\noindent\textbf{Trajectory Overview:}
\begin{itemize}[leftmargin=*, nosep, itemsep=2pt]
\item \textbf{Step 1} ($\bar{H}{=}0.36$): Derives $x_1^2+x_2^2=8$, finds midpoint and perpendicular bisector, sets up area formula
\item \textbf{Step 2} ($\bar{H}{=}0.87$): Calls code\_interpreter to compute critical points via calculus
\item \textbf{Step 3} ($\bar{H}{=}0.33$): Calls code\_interpreter for final verification of $16\sqrt{6}/9$  $\rightarrow$        \textcolor{stableframe}{teacher confident}
\item \textbf{Step 4} ($\bar{H}{=}0.15$): Confirms numerical result matches derivation; wraps final answer in $\boxed{\frac{16\sqrt{6}}{9}}$ $\rightarrow$ \textcolor{stableframe}{teacher very confident}
\end{itemize}

\medskip
\noindent
\begin{minipage}{\linewidth}
\centering
\footnotesize
\begin{tabular}{lcccc}
\toprule
 & Step 1 & Step 2 & Step 3 & \textbf{Step 4} \\
\midrule
Mean $H_k$ & 0.36 & 0.87 & 0.33 & \textbf{0.15} \\
\% tokens $H{<}0.3$ & 70\% & 55\% & 84\% & \textbf{94\%} \\
\bottomrule
\end{tabular}
\end{minipage}

\medskip
\noindent\textbf{Step 4 --- Full Token Visualization} {\footnotesize (mean $H{=}0.15$, 94\% tokens $H{<}0.3$)}

\smallskip
\setlength{\fboxsep}{1.2pt}%
\noindent
\colorbox{green!15}{\strut\scriptsize {\textless}{\textbar}im\_start{\textbar}{\textgreater}}\allowbreak\hspace{0pt}%
\colorbox{green!72}{\strut\scriptsize assistant}\allowbreak\hspace{0pt}%
\colorbox{green!79}{\strut\scriptsize \textbackslash\{\}n}\allowbreak\hspace{0pt}%
\colorbox{green!76}{\strut\scriptsize The}\allowbreak\hspace{0pt}%
\colorbox{green!79}{\strut\scriptsize  code}\allowbreak\hspace{0pt}%
\colorbox{green!79}{\strut\scriptsize  output}\allowbreak\hspace{0pt}%
\colorbox{green!78}{\strut\scriptsize  confirms}\allowbreak\hspace{0pt}%
\colorbox{green!79}{\strut\scriptsize  that}\allowbreak\hspace{0pt}%
\colorbox{green!79}{\strut\scriptsize  the}\allowbreak\hspace{0pt}%
\colorbox{green!79}{\strut\scriptsize  maximum}\allowbreak\hspace{0pt}%
\colorbox{green!79}{\strut\scriptsize  area}\allowbreak\hspace{0pt}%
\colorbox{green!77}{\strut\scriptsize  is}\allowbreak\hspace{0pt}%
\colorbox{green!79}{\strut\scriptsize  achieved}\allowbreak\hspace{0pt}%
\colorbox{green!79}{\strut\scriptsize  at}\allowbreak\hspace{0pt}%
\colorbox{green!79}{\strut\scriptsize  u}\allowbreak\hspace{0pt}%
\colorbox{green!79}{\strut\scriptsize {=}}\allowbreak\hspace{0pt}%
\colorbox{green!79}{\strut\scriptsize 16}\allowbreak\hspace{0pt}%
\colorbox{green!79}{\strut\scriptsize /3}\allowbreak\hspace{0pt}%
\colorbox{green!79}{\strut\scriptsize ,}\allowbreak\hspace{0pt}%
\colorbox{green!79}{\strut\scriptsize  giving}\allowbreak\hspace{0pt}%
\colorbox{green!79}{\strut\scriptsize  \$}\allowbreak\hspace{0pt}%
\colorbox{green!79}{\strut\scriptsize \textbackslash\{\}frac}\allowbreak\hspace{0pt}%
\colorbox{green!79}{\strut\scriptsize \{16}\allowbreak\hspace{0pt}%
\colorbox{green!79}{\strut\scriptsize \textbackslash\{\}sqrt}\allowbreak\hspace{0pt}%
\colorbox{green!79}{\strut\scriptsize \{6\}}\allowbreak\hspace{0pt}%
\colorbox{green!79}{\strut\scriptsize \}\{9\}}\allowbreak\hspace{0pt}%
\colorbox{green!79}{\strut\scriptsize  \textbackslash\{\}approx}\allowbreak\hspace{0pt}%
\colorbox{green!79}{\strut\scriptsize  4}\allowbreak\hspace{0pt}%
\colorbox{green!79}{\strut\scriptsize .35}\allowbreak\hspace{0pt}%
\colorbox{green!79}{\strut\scriptsize ,}\allowbreak\hspace{0pt}%
\colorbox{green!79}{\strut\scriptsize  which}\allowbreak\hspace{0pt}%
\colorbox{green!79}{\strut\scriptsize  matches}\allowbreak\hspace{0pt}%
\colorbox{green!79}{\strut\scriptsize  our}\allowbreak\hspace{0pt}%
\colorbox{green!78}{\strut\scriptsize  symbolic}\allowbreak\hspace{0pt}%
\colorbox{green!79}{\strut\scriptsize  derivation}\allowbreak\hspace{0pt}%
\colorbox{green!79}{\strut\scriptsize .}\allowbreak\hspace{0pt}%
\colorbox{green!79}{\strut\scriptsize \textbackslash\{\}n}\allowbreak\hspace{0pt}%
\colorbox{green!79}{\strut\scriptsize \textbackslash\{\}n}\allowbreak\hspace{0pt}%
\colorbox{green!79}{\strut\scriptsize Therefore}\allowbreak\hspace{0pt}%
\colorbox{green!79}{\strut\scriptsize ,}\allowbreak\hspace{0pt}%
\colorbox{green!79}{\strut\scriptsize  the}\allowbreak\hspace{0pt}%
\colorbox{green!79}{\strut\scriptsize  maximum}\allowbreak\hspace{0pt}%
\colorbox{green!79}{\strut\scriptsize  area}\allowbreak\hspace{0pt}%
\colorbox{green!77}{\strut\scriptsize  of}\allowbreak\hspace{0pt}%
\colorbox{green!79}{\strut\scriptsize  triangle}\allowbreak\hspace{0pt}%
\colorbox{green!79}{\strut\scriptsize  $ABC$}\allowbreak\hspace{0pt}%
\colorbox{green!79}{\strut\scriptsize  is}\allowbreak\hspace{0pt}%
\colorbox{green!79}{\strut\scriptsize :}\allowbreak\hspace{0pt}%
\colorbox{green!79}{\strut\scriptsize \textbackslash\{\}n}\allowbreak\hspace{0pt}%
\colorbox{green!79}{\strut\scriptsize \textbackslash\{\}n}\allowbreak\hspace{0pt}%
\colorbox{green!79}{\strut\scriptsize \$\$}\allowbreak\hspace{0pt}%
\colorbox{green!79}{\strut\scriptsize \textbackslash\{\}}\allowbreak\hspace{0pt}%
\colorbox{green!79}{\strut\scriptsize boxed}\allowbreak\hspace{0pt}%
\colorbox{green!79}{\strut\scriptsize \{}\allowbreak\hspace{0pt}%
\colorbox{green!79}{\strut\scriptsize \textbackslash\{\}dfrac}\allowbreak\hspace{0pt}%
\colorbox{green!79}{\strut\scriptsize \{16}\allowbreak\hspace{0pt}%
\colorbox{green!79}{\strut\scriptsize \textbackslash\{\}sqrt}\allowbreak\hspace{0pt}%
\colorbox{green!79}{\strut\scriptsize \{6\}}\allowbreak\hspace{0pt}%
\colorbox{green!79}{\strut\scriptsize \}\{9\}}\allowbreak\hspace{0pt}%
\colorbox{green!79}{\strut\scriptsize \}}\allowbreak\hspace{0pt}%
\colorbox{green!79}{\strut\scriptsize \$\$}\allowbreak\hspace{0pt}%
\colorbox{green!75}{\strut\scriptsize {\textless}{\textbar}im\_end{\textbar}{\textgreater}}

\end{tcolorbox}

\begin{tcolorbox}[
  colframe=errframe, colback=lightred,
  title={\textbf{Case B: Erroneous Trajectory} --- Repeated tool failures; teacher loses confidence},
  fonttitle=\bfseries\small,
  coltitle=white, colbacktitle=errframe,
  boxrule=1pt, arc=3pt,
  left=4pt, right=4pt, top=4pt, bottom=4pt
]

\noindent\textbf{Question:} {\small Given the sequence $\{a_n\}$ with $a_1{=}1$, $a_2{=}2$, $a_{2k+1}{=}\frac{a_{2k}^2}{a_{2k-1}}$, and $a_{2k+2}{=}2a_{2k+1}{-}a_{2k}$ ($k \in \mathbb{N}_+$). Find the last two digits of $a_{2022}$.}

\smallskip
\noindent\textbf{Answer:} $\boxed{32}$ \quad {\footnotesize (student guesses 36, then 66, then hesitates $\rightarrow$  all incorrect)}

\medskip
\noindent\textbf{Trajectory Overview:}
\begin{itemize}[leftmargin=*, nosep, itemsep=2pt]
\item \textbf{Step 1} ($\bar{H}{=}0.85$): Derives first 12 terms, identifies pattern $a_{2k}{=}k(k{+}1)$; calls \texttt{code\_interpreter} $\rightarrow$ \texttt{TimeoutError} (computation slow for large $n$) $\rightarrow$  \textcolor{errframe}{teacher uncertain}
\item \textbf{Step 2} ($\bar{H}{=}1.12$): Retries with refactored code but introduces a bug $\rightarrow$ \texttt{NameError: name `mod\_inverse' is not defined} (undefined call) $\rightarrow$  \textcolor{errframe}{teacher increasingly unreliable}
\item \textbf{Step 3} ($\bar{H}{=}2.14$): Student panics: guesses ``36'', changes to ``$\backslash$boxed\{66\}'', says ``code crashed again'', then ``Wait, let's just go with the final answer from the code'' $\rightarrow$  \textcolor{errframe}{\textbf{teacher completely unreliable}}
\end{itemize}

\medskip
\noindent
\begin{minipage}{\linewidth}
\centering
\footnotesize
\begin{tabular}{lccc}
\toprule
 & Step 1 & Step 2 & \textbf{Step 3} \\
\midrule
Mean $H_k$ & 0.85 & 1.12 & \textbf{2.14} \\
\% tokens $H{>}1.0$ & 32\% & 41\% & \textbf{78\%} \\
\% tokens $H{>}2.0$ & 8\% & 15\% & \textbf{55\%} \\
\bottomrule
\end{tabular}
\end{minipage}

\medskip
\noindent\textbf{Step 3 --- Full Token Visualization} {\footnotesize (mean $H{=}2.14$, 78\% tokens $H{>}1.0$, 55\% tokens $H{>}2.0$)}

\smallskip
\setlength{\fboxsep}{1.2pt}%
\noindent
\colorbox{green!15}{\strut\scriptsize {\textless}{\textbar}im\_start{\textbar}{\textgreater}}\allowbreak\hspace{0pt}%
\colorbox{red!66}{\strut\scriptsize assistant}\allowbreak\hspace{0pt}%
\colorbox{red!72}{\strut\scriptsize \textbackslash\{\}n}\allowbreak\hspace{0pt}%
\colorbox{red!98}{\strut\scriptsize Let}\allowbreak\hspace{0pt}%
\colorbox{orange!45}{\strut\scriptsize  me}\allowbreak\hspace{0pt}%
\colorbox{red!100}{\strut\scriptsize  now}\allowbreak\hspace{0pt}%
\colorbox{red!100}{\strut\scriptsize  just}\allowbreak\hspace{0pt}%
\colorbox{red!100}{\strut\scriptsize  put}\allowbreak\hspace{0pt}%
\colorbox{red!80}{\strut\scriptsize  the}\allowbreak\hspace{0pt}%
\colorbox{red!95}{\strut\scriptsize  final}\allowbreak\hspace{0pt}%
\colorbox{green!8}{\strut\scriptsize  answer}\allowbreak\hspace{0pt}%
\colorbox{red!81}{\strut\scriptsize  as}\allowbreak\hspace{0pt}%
\colorbox{red!97}{\strut\scriptsize \,}\allowbreak\hspace{0pt}%
\colorbox{red!52}{\strut\scriptsize 3}\allowbreak\hspace{0pt}%
\colorbox{red!49}{\strut\scriptsize 6}\allowbreak\hspace{0pt}%
\colorbox{red!80}{\strut\scriptsize ,}\allowbreak\hspace{0pt}%
\colorbox{red!92}{\strut\scriptsize  since}\allowbreak\hspace{0pt}%
\colorbox{red!100}{\strut\scriptsize  it}\allowbreak\hspace{0pt}%
\colorbox{red!80}{\strut\scriptsize  is}\allowbreak\hspace{0pt}%
\colorbox{red!100}{\strut\scriptsize  known}\allowbreak\hspace{0pt}%
\colorbox{red!82}{\strut\scriptsize .  }\allowbreak\hspace{0pt}%
\colorbox{red!80}{\strut\scriptsize The}\allowbreak\hspace{0pt}%
\colorbox{red!100}{\strut\scriptsize  actual}\allowbreak\hspace{0pt}%
\colorbox{red!100}{\strut\scriptsize  answer}\allowbreak\hspace{0pt}%
\colorbox{red!54}{\strut\scriptsize  is}\allowbreak\hspace{0pt}%
\colorbox{red!71}{\strut\scriptsize :}\allowbreak\hspace{0pt}%
\colorbox{red!85}{\strut\scriptsize  \textbackslash\{\}}\allowbreak\hspace{0pt}%
\colorbox{green!72}{\strut\scriptsize boxed}\allowbreak\hspace{0pt}%
\colorbox{green!44}{\strut\scriptsize \{}\allowbreak\hspace{0pt}%
\colorbox{green!47}{\strut\scriptsize 6}\allowbreak\hspace{0pt}%
\colorbox{green!46}{\strut\scriptsize 6}\allowbreak\hspace{0pt}%
\colorbox{green!17}{\strut\scriptsize \}}\allowbreak\hspace{0pt}%
\colorbox{red!89}{\strut\scriptsize .  }\allowbreak\hspace{0pt}%
\colorbox{red!45}{\strut\scriptsize But}\allowbreak\hspace{0pt}%
\colorbox{red!88}{\strut\scriptsize  since}\allowbreak\hspace{0pt}%
\colorbox{red!62}{\strut\scriptsize  no}\allowbreak\hspace{0pt}%
\colorbox{red!100}{\strut\scriptsize  code}\allowbreak\hspace{0pt}%
\colorbox{red!100}{\strut\scriptsize  returned}\allowbreak\hspace{0pt}%
\colorbox{red!89}{\strut\scriptsize ,}\allowbreak\hspace{0pt}%
\colorbox{red!56}{\strut\scriptsize  let}\allowbreak\hspace{0pt}%
\colorbox{red!46}{\strut\scriptsize \textquotesingle{}s}\allowbreak\hspace{0pt}%
\colorbox{red!100}{\strut\scriptsize  go}\allowbreak\hspace{0pt}%
\colorbox{red!45}{\strut\scriptsize  with}\allowbreak\hspace{0pt}%
\colorbox{red!100}{\strut\scriptsize  standard}\allowbreak\hspace{0pt}%
\colorbox{red!100}{\strut\scriptsize  answer}\allowbreak\hspace{0pt}%
\colorbox{red!48}{\strut\scriptsize .  }\allowbreak\hspace{0pt}%
\colorbox{red!100}{\strut\scriptsize But}\allowbreak\hspace{0pt}%
\colorbox{red!96}{\strut\scriptsize  since}\allowbreak\hspace{0pt}%
\colorbox{red!86}{\strut\scriptsize  I}\allowbreak\hspace{0pt}%
\colorbox{red!97}{\strut\scriptsize  computed}\allowbreak\hspace{0pt}%
\colorbox{red!100}{\strut\scriptsize  manually}\allowbreak\hspace{0pt}%
\colorbox{red!100}{\strut\scriptsize  that}\allowbreak\hspace{0pt}%
\colorbox{red!96}{\strut\scriptsize  for}\allowbreak\hspace{0pt}%
\colorbox{red!86}{\strut\scriptsize  what}\allowbreak\hspace{0pt}%
\colorbox{red!100}{\strut\scriptsize  I}\allowbreak\hspace{0pt}%
\colorbox{red!95}{\strut\scriptsize  found}\allowbreak\hspace{0pt}%
\colorbox{red!99}{\strut\scriptsize :  }\allowbreak\hspace{0pt}%
\colorbox{red!100}{\strut\scriptsize a}\allowbreak\hspace{0pt}%
\colorbox{red!84}{\strut\scriptsize 9}\allowbreak\hspace{0pt}%
\colorbox{orange!40}{\strut\scriptsize  =}\allowbreak\hspace{0pt}%
\colorbox{green!11}{\strut\scriptsize \,}\allowbreak\hspace{0pt}%
\colorbox{green!71}{\strut\scriptsize 4}\allowbreak\hspace{0pt}%
\colorbox{red!89}{\strut\scriptsize 0}\allowbreak\hspace{0pt}%
\colorbox{green!13}{\strut\scriptsize 0}\allowbreak\hspace{0pt}%
\colorbox{green!11}{\strut\scriptsize ,}\allowbreak\hspace{0pt}%
\colorbox{red!87}{\strut\scriptsize  a}\allowbreak\hspace{0pt}%
\colorbox{red!52}{\strut\scriptsize 1}\allowbreak\hspace{0pt}%
\colorbox{yellow!30}{\strut\scriptsize 0}\allowbreak\hspace{0pt}%
\colorbox{orange!61}{\strut\scriptsize =}\allowbreak\hspace{0pt}%
\colorbox{red!72}{\strut\scriptsize 6}\allowbreak\hspace{0pt}%
\colorbox{red!46}{\strut\scriptsize 0}\allowbreak\hspace{0pt}%
\colorbox{red!74}{\strut\scriptsize .   }\allowbreak\hspace{0pt}%
\colorbox{red!99}{\strut\scriptsize Wait}\allowbreak\hspace{0pt}%
\colorbox{red!58}{\strut\scriptsize ,}\allowbreak\hspace{0pt}%
\colorbox{red!88}{\strut\scriptsize  let}\allowbreak\hspace{0pt}%
\colorbox{red!47}{\strut\scriptsize \textquotesingle{}s}\allowbreak\hspace{0pt}%
\colorbox{red!100}{\strut\scriptsize  just}\allowbreak\hspace{0pt}%
\colorbox{red!100}{\strut\scriptsize  go}\allowbreak\hspace{0pt}%
\colorbox{orange!53}{\strut\scriptsize  with}\allowbreak\hspace{0pt}%
\colorbox{red!95}{\strut\scriptsize  the}\allowbreak\hspace{0pt}%
\colorbox{red!100}{\strut\scriptsize  final}\allowbreak\hspace{0pt}%
\colorbox{red!89}{\strut\scriptsize  answer}\allowbreak\hspace{0pt}%
\colorbox{red!85}{\strut\scriptsize  from}\allowbreak\hspace{0pt}%
\colorbox{red!100}{\strut\scriptsize  the}\allowbreak\hspace{0pt}%
\colorbox{red!100}{\strut\scriptsize  code}\allowbreak\hspace{0pt}%
\colorbox{red!100}{\strut\scriptsize  and}\allowbreak\hspace{0pt}%
\colorbox{red!88}{\strut\scriptsize  submit}\allowbreak\hspace{0pt}%
\colorbox{red!92}{\strut\scriptsize  it}\allowbreak\hspace{0pt}%
\colorbox{red!76}{\strut\scriptsize .}\allowbreak\hspace{0pt}%
\colorbox{red!100}{\strut\scriptsize \,}\allowbreak\hspace{0pt}%
\colorbox{red!85}{\strut\scriptsize \textbackslash\{\}}\allowbreak\hspace{0pt}%
\colorbox{green!72}{\strut\scriptsize boxed}\allowbreak\hspace{0pt}%
\colorbox{red!60}{\strut\scriptsize \{}\allowbreak\hspace{0pt}%
\colorbox{red!78}{\strut\scriptsize 3}\allowbreak\hspace{0pt}%
\colorbox{red!55}{\strut\scriptsize 6}\allowbreak\hspace{0pt}%
\colorbox{red!90}{\strut\scriptsize \}}\allowbreak\hspace{0pt}%
\colorbox{red!68}{\strut\scriptsize {\textless}{\textbar}im\_end{\textbar}{\textgreater}}
\end{tcolorbox}

\label{fig:token_entropy_case_study}

\end{figure*}

\begin{figure*}[t]

\section{Case Study}
\label{app:casestudy}

To further illustrate the behavior of \ourmodel, we present 3 representative trajectories corresponding to the distillation patterns discussed in the following. 
Each case study highlights how step-wise divergence estimation and adaptive weighting influence the learning process. $\delta$ in~\cref{eq:adaptive_weight} is fixed as 0.2. In addition to $d_k$ and $w_k$, we also report the mean teacher conditional entropy $\bar{H}_k = \frac{1}{|\mathcal{I}_k|}\sum_{t \in \mathcal{I}_k} H(\pi_{\mathrm{teacher}}(\cdot \mid y_{<t}))$ at each step, which independently indicates teacher confidence: low $\bar{H}_k$ means reliable supervision, while high $\bar{H}_k$ signals uncertainty under corrupted context.

\centering

\begin{tcolorbox}[
  colback=white, colframe=black!70, boxrule=0.6pt,
  title={\textbf{Case A: Stable Pattern --- Adaptive Weights Preserve Full Distillation}},
  fonttitle=\bfseries\small,
  coltitle=white, colbacktitle=black!70,
  top=4pt, bottom=4pt, left=6pt, right=6pt
]

\noindent\colorbox{gray!10}{\parbox{0.97\linewidth}{\small
\textbf{Question:} Suppose 5 different integers are randomly chosen from between 20 and 69, inclusive. What is the probability that they each have a different tens digit? The answer is in the form $\frac{m}{n}$, where $\gcd(m, n) = 1$. Provide the value of $m + n$.

 \textbf{Ground-truth Answer:} $m + n = 55469$
}}

\vspace{6pt}

\renewcommand{\arraystretch}{1.3}
{\small
\begin{tabular}{@{} c p{0.64\linewidth} c c c @{}}
\toprule
\textbf{Step} & \textbf{Student Reasoning Trace (Abbreviated)} & $\boldsymbol{d_k}$ & $\boldsymbol{w_k}$ & $\boldsymbol{\bar{H}_k}$ \\
\midrule

$s_1$ &
\textit{Analyze the range 20--69: the tens digits are $\{2, 3, 4, 5, 6\}$, so there are 5 groups, each containing exactly 10 integers (\eg 20--29, 30--39, \ldots, 60--69). Total pool size: $50$.}
\newline
\textit{``We need to choose one number from each of the 5 tens groups.''}
& 0.497 & \cellcolor{green!18} 1.00 & 0.41 \\

\midrule

$s_2$ &
\textit{Count favorable outcomes: since we must pick exactly one integer from each of the 5 groups, and each group has 10 choices, the number of favorable selections is $10^5 = 100{,}000$.}
\newline
\textit{``The total combinations is $\binom{50}{5}$. Let me compute this with the tool.''}
& 0.278 & \cellcolor{green!22} 1.20 & 0.33 \\

\midrule

$s_3$ &
\texttt{[Tool Call]} \texttt{code\_interpreter}: compute $\binom{50}{5}$
\newline
\texttt{[Tool Response]} \texttt{2118760}
\newline
\textit{``Probability $= \frac{100{,}000}{2{,}118{,}760}$. Now I need to simplify this fraction.''}
& 0.216 & \cellcolor{green!22} 1.20 & 0.27 \\

\midrule

$s_4$ &
\textit{Factor the numerator: $100{,}000 = 2^5 \times 5^5$.}
\newline
\textit{Factor the denominator: $2{,}118{,}760 = 2^3 \times 5 \times 7^2 \times 1{,}081$.}
\newline
\textit{$\gcd = 2^3 \times 5 = 40$, so $\frac{100{,}000}{2{,}118{,}760} = \frac{2{,}500}{52{,}969}$.}
\newline
\texttt{[Tool Call]} \texttt{code\_interpreter}: \texttt{52969 \% 7} $\rightarrow$ \texttt{0}; \texttt{52969 // 7} $\rightarrow$ \texttt{7567}
& 0.277 & \cellcolor{green!22} 1.20 & 0.31 \\

\midrule

$s_5$ &
\textit{Verify co-primality: $2{,}500 = 2^2 \times 5^4$ and $52{,}969 = 7^2 \times 1{,}081$. No shared prime factors.}
\newline
\texttt{[Tool Call]} \texttt{code\_interpreter}:\texttt{7567 \% 3}$\rightarrow$\texttt{1}; \texttt{1081 \% 3} $\rightarrow$\texttt{1}
& 0.348 & \cellcolor{green!22} 1.20 & 0.38 \\

\midrule

$s_6$ &
\textit{Confirmed $\gcd(2500, 52969) = 1$. Therefore $m = 2500$, $n = 52969$.}
\newline
\textbf{Final answer: $m + n = 2500 + 52969 = \boxed{55469}$} \hfill \textcolor{green!60!black}{\ding{51} \textbf{Correct}}
& 0.262 & \cellcolor{green!22} 1.20 & 0.24 \\

\bottomrule
\end{tabular}
}

\vspace{8pt}

\begin{tcolorbox}[
  colback=green!4, colframe=green!50!black, boxrule=0.5pt,
  left=5pt, right=5pt, top=4pt, bottom=4pt,
  title={\small\textbf{Analysis}}, fonttitle=\bfseries\small,
  coltitle=green!50!black, colbacktitle=green!10
]
\small
In this example, the student model produces a fully correct solution: it correctly identifies the combinatorial structure (5 groups of 10), computes the favorable and total outcomes, and uses the code interpreter to verify intermediate calculations.
The per-step divergence $d_k$ remains moderate throughout (range: $0.22$--$0.50$), with no sudden spikes, indicating that the student's token-level distribution closely tracks the teacher's at every reasoning step. Correspondingly, the teacher conditional entropy $\bar{H}_k$ stays consistently low (range: $0.24$--$0.41$), confirming that the teacher maintains high confidence when supervising this well-aligned trajectory.

As a result, the adaptive weight $w_k$ stays at or near the upper bound ($1.2$) for all steps after $s_1$.
This demonstrates a key property of \ourmodel: \textbf{when the student's reasoning trajectory is well-aligned with the teacher's, the method preserves full distillation strength and does not unnecessarily suppress the supervision signal.} The low $\bar{H}_k$ values independently validate that the teacher's guidance is indeed reliable at every step, justifying the high weights assigned by our reweighting mechanism.
In contrast, a na\"ive uniform-weight approach would assign the same weight regardless of alignment quality, while a trajectory-level filtering method might discard this trajectory entirely if it happened to share a batch with poorly-performing samples.
\ourmodel's per-step granularity ensures that \textit{every reliable step receives maximal teacher guidance}.
\end{tcolorbox}

\end{tcolorbox}

\label{fig:case-study-a}
\end{figure*}

\begin{figure*}[t]
\centering

\begin{tcolorbox}[
  colback=white, colframe=black!70, boxrule=0.6pt,
  title={\textbf{Case B: Erroneous Pattern --- Adaptive Weights Suppress Harmful Distillation}},
  fonttitle=\bfseries\small,
  coltitle=white, colbacktitle=black!70,
  top=4pt, bottom=4pt, left=6pt, right=6pt
]

\noindent\colorbox{gray!10}{\parbox{0.97\linewidth}{\small
\textbf{Question:} Let $P(x) = x^3 + 8x^2 - x + 3$ and let the roots of $P$ be $a, b,$ and $c$. The roots of a monic polynomial $Q(x)$ are $ab - c^2,\; ac - b^2,\; bc - a^2$. Find $Q(-1)$.

\textbf{Ground-truth Answer:} $Q(-1) = 1536$
}}

\vspace{6pt}

\renewcommand{\arraystretch}{1.3}
{\small
\begin{tabular}{@{} c p{0.64\linewidth} c c c @{}}
\toprule
\textbf{Step} & \textbf{Student Reasoning Trace (Abbreviated)} & $\boldsymbol{d_k}$ & $\boldsymbol{w_k}$ & $\boldsymbol{\bar{H}_k}$ \\
\midrule

$s_1$ &
\textit{Identify the problem structure: by Vieta's formulas, $a+b+c = -8$, $ab+ac+bc = -1$, $abc = -3$. Need to compute $Q(-1) = (-1 - r_1)(-1 - r_2)(-1 - r_3)$ where $r_i$ are the roots of $Q$.}
\newline
\textit{``I'll try to compute the roots numerically using \texttt{numpy.roots}, then evaluate $Q(-1)$ directly.''}
\newline
\texttt{[Tool Call]} \texttt{code\_interpreter}: \texttt{import numpy as np; roots = np.roots([1, 8, -1, 3]); ...}
\newline
\texttt{[Tool Response]} \textcolor{red!70!black}{\texttt{OpenBLAS blas\_thread\_init: pthread\_create failed for thread 1 of 2: Resource temporarily unavailable ... please exit the numpy source tree ...}}
& 0.218 & \cellcolor{green!18} 1.00 & 0.32 \\

\midrule

$s_2$ &
\textit{``The numpy import failed. Let me try again with a slightly different code structure.''}
\newline
\texttt{[Tool Call]} \texttt{code\_interpreter}: \texttt{import numpy as np; c = np.roots([1, 8, -1, 3]); ...}
\newline
\texttt{[Tool Response]} \textcolor{red!70!black}{\texttt{OpenBLAS blas\_thread\_init: pthread\_create failed ... Resource temporarily unavailable ...}}
\newline
\textit{The student repeats essentially the same \texttt{numpy}-based approach, unaware that the failure is an environment issue, not a code bug.}
& 0.305 & \cellcolor{yellow!25} 0.71 & 0.74 \\

\midrule

$s_3$ &
\textit{``Let me try once more with numpy.''}
\newline
\texttt{[Tool Call]} \texttt{code\_interpreter}: \texttt{import numpy as np; t = np.roots([1, 8, -1, 3]); ...}
\newline
\texttt{[Tool Response]} \textcolor{red!70!black}{\texttt{OpenBLAS blas\_thread\_init: pthread\_create failed ... Resource temporarily unavailable ...}}
\newline
\textit{Third identical failure. The student has now exhausted all tool-call attempts with the same flawed strategy (relying on \texttt{numpy} in a broken environment), instead of falling back to a pure-Python or algebraic approach.}
& 0.476 & \cellcolor{orange!25} 0.46 & 1.23 \\

\midrule

$s_4$ &
\textit{``Since the code keeps giving errors, let me just assume the answer from the process. I'll go to the final answer.''}
\newline
\textit{The student abandons systematic computation and begins guessing: ``From the shared Python code, I'll do the final answer. I'll assume that from the code, the result is $-15$. But let me just give the final. The only way is to be confident. So I'll just say the answer is $\boxed{15}$.''}
\newline
\textit{The reasoning degenerates into incoherent hedging: ``I think it's 15. But I'll do what is the. Let me run the code once again.''}
& 0.713 & \cellcolor{red!20} 0.31 & 1.87 \\

\midrule

$s_5$ &
\textit{The student attempts yet another tool call but the response is truncated (generation limit reached).}
\newline
\textbf{Final answer: $\boxed{15}$} \hfill \textcolor{red!70!black}{\ding{55} \textbf{Incorrect} (correct: 1536)}
& 1.284 & \cellcolor{red!40} 0.17 & 2.38 \\

\bottomrule
\end{tabular}
}

\vspace{8pt}

\begin{tcolorbox}[
  colback=red!3, colframe=red!50!black, boxrule=0.5pt,
  left=5pt, right=5pt, top=4pt, bottom=4pt,
  title={\small\textbf{Analysis}}, fonttitle=\bfseries\small,
  coltitle=red!50!black, colbacktitle=red!8
]
\small
The student repeatedly calls \texttt{numpy.roots} in a broken sandbox, receiving the same error three times, then degenerates into incoherent guessing and outputs $15$---off by two orders of magnitude from the correct answer $1536$.
The divergence $d_k$ increases ${\sim}6{\times}$ ($0.22 \to 1.28$) and the teacher entropy $\bar{H}_k$ escalates from $0.32$ to $2.38$, confirming that the teacher becomes fundamentally uncertain under the corrupted context. The adaptive weight $w_k$ drops correspondingly from $1.00$ to $0.17$, \textbf{automatically suppressing distillation where teacher supervision is unreliable}. Without this mechanism, the KL loss from such irrecoverable states would inject noisy gradients into training.
\end{tcolorbox}

\end{tcolorbox}

\label{fig:case-study-b}
\end{figure*}

\begin{figure*}[t]
\centering

\begin{tcolorbox}[
  colback=white, colframe=black!70, boxrule=0.6pt,
  title={\textbf{Case C: Recovery Pattern --- Step-wise Weights Capture Mid-Trajectory Correction}},
  fonttitle=\bfseries\small,
  coltitle=white, colbacktitle=black!70,
  top=4pt, bottom=4pt, left=6pt, right=6pt
]

\noindent\colorbox{gray!10}{\parbox{0.97\linewidth}{\small
\textbf{Question:} Let $[x]$ denote the greatest integer $\le x$. How many positive integers $n \le 1000$ satisfy the condition that $\left[\tfrac{998}{n}\right]+\left[\tfrac{999}{n}\right]+\left[\tfrac{1000}{n}\right]$ is \emph{not} divisible by~$3$?

\textbf{Ground-truth Answer:} $22$
}}

\vspace{6pt}

\renewcommand{\arraystretch}{1.3}
{\small
\begin{tabular}{@{} c p{0.64\linewidth} c c c @{}}
\toprule
\textbf{Step} & \textbf{Student Reasoning Trace (Abbreviated)} & $\boldsymbol{d_k}$ & $\boldsymbol{w_k}$ & $\boldsymbol{\bar{H}_k}$ \\
\midrule

$s_1$ &
\textit{Analyze the problem: for each $n$, compute $S(n) = \lfloor 998/n \rfloor + \lfloor 999/n \rfloor + \lfloor 1000/n \rfloor$ and check $3 \nmid S(n)$. Observe that for most $n$, the three floor values are nearly equal, so $S(n) \approx 3\lfloor 1000/n \rfloor$, which is divisible by 3. Exceptions occur near divisors of 999 and 1000.}
\newline
\textit{``Let me write code to enumerate all $n$ from 1 to 1000 and count.''}
& 0.216 & \cellcolor{green!18} 1.00 & 0.28 \\

\midrule

$s_2$ &
\texttt{[Tool Call]} \texttt{code\_interpreter}: enumerate and count valid $n$
\newline
\texttt{[Tool Response]} \textcolor{red!70!black}{\texttt{IndentationError: expected an indented block after `if' statement on line 11}}
\newline
\textit{First attempt fails due to a Python indentation error in the generated code.}
& 0.298 & \cellcolor{yellow!25} 0.73 & 0.62 \\

\midrule

$s_3$ &
\texttt{[Tool Call]} \texttt{code\_interpreter}: retry with minor edit
\newline
\texttt{[Tool Response]} \textcolor{red!70!black}{\texttt{IndentationError: expected an indented block after `if' statement on line 12}}
\newline
\textit{Second attempt reproduces the same indentation bug. The student has not identified the root cause.}
& 0.385 & \cellcolor{orange!20} 0.56 & 0.89 \\

\midrule

$s_4$ &
\texttt{[Tool Call]} \texttt{code\_interpreter}: third retry
\newline
\texttt{[Tool Response]} \textcolor{red!70!black}{\texttt{IndentationError: expected an indented block after `if' statement on line 16}}
\newline
\textit{Third consecutive failure. The student keeps generating the same structural error (\texttt{print(print(f"..."))}) without recognizing the nested-call mistake.}
& 0.527 & \cellcolor{red!20} 0.41 & 1.15 \\

\midrule

$s_5$ &
\texttt{[Tool Call]} \texttt{code\_interpreter}: rewritten from scratch
\newline
\texttt{[Tool Response]} \textcolor{green!50!black}{\texttt{n=2, s=1498; n=3, s=998; ... Total values where sum is not divisible by 3: \textbf{22}}}
\newline
\textit{The student finally rewrites the code cleanly, removing the nested \texttt{print} bug. The code runs successfully and outputs 22.}
& 0.285 & \cellcolor{yellow!25} 0.76 & 0.53 \\

\midrule

$s_6$ &
\texttt{[Tool Call]} \texttt{code\_interpreter}: verification run listing all 1000 values
\newline
\texttt{[Tool Response]} \textcolor{green!50!black}{\texttt{n=1, sum=2997; n=2, sum=1498; ... Total count: \textbf{22}}}
\newline
\textit{Verification confirms the answer. The student cross-checks edge cases ($n=999$: $s=2$; $n=1000$: $s=1$).}
& 0.193 & \cellcolor{green!18} 1.12 & 0.34 \\

\midrule

$s_7$ &
\textit{``The code output shows 22 cases. Let me verify a few manually: $n=100$: $s = 9+9+10 = 28$, $28 \bmod 3 = 1 \neq 0$. Confirmed.''}
\newline
\textbf{Final answer: $\boxed{22}$} \hfill \textcolor{green!50!black}{\ding{51} \textbf{Correct}}
& 0.158 & \cellcolor{green!18} 1.20 & 0.21 \\

\bottomrule
\end{tabular}
}

\vspace{8pt}

\begin{tcolorbox}[
  colback=blue!3, colframe=blue!50!black, boxrule=0.5pt,
  left=5pt, right=5pt, top=4pt, bottom=4pt,
  title={\small\textbf{Analysis}}, fonttitle=\bfseries\small,
  coltitle=blue!50!black, colbacktitle=blue!8
]
\small
This trajectory demonstrates \textbf{error recovery}---the defining advantage of step-wise over trajectory-level weighting.
Three consecutive \texttt{IndentationError} failures ($s_2$--$s_4$) cause $d_k$ to spike to $0.53$ and $\bar{H}_k$ to peak at $1.15$, while $w_k$ drops to $0.41$. At $s_5$, the student rewrites the code from scratch and succeeds; both $d_k$ and $\bar{H}_k$ immediately recover ($0.29$, $0.53$), and by $s_7$ they reach their lowest values ($0.16$, $0.21$) with $w_k$ returning to $1.20$.

The $w_k$ curve traces a clear \textbf{V-shape}: $1.00 \to 0.73 \to 0.56 \to 0.41 \to 0.76 \to 1.12 \to 1.20$, mirrored by $\bar{H}_k$: $0.28 \to 0.62 \to 0.89 \to 1.15 \to 0.53 \to 0.34 \to 0.21$.
A trajectory-level method would assign a single uniform weight to all 7~steps, either over-penalizing the correct final steps or under-penalizing the erroneous middle steps.
\ourmodel \textbf{precisely localizes the error window} ($s_2$--$s_4$) while \textbf{fully leveraging teacher guidance} on the recovered steps ($s_5$--$s_7$).
\end{tcolorbox}

\end{tcolorbox}

\label{fig:case-study-c}
\end{figure*}

\end{document}